\documentclass[11pt,letterpaper]{article} 
\usepackage{microtype}
\usepackage{graphicx}
\usepackage{subfigure}
\usepackage{paralist}
\usepackage{booktabs} 
\usepackage{natbib}
\usepackage{algorithmic}

\usepackage{amsmath,amsfonts,fullpage,url}

\usepackage{hyperref}



\usepackage{url}  

\usepackage[ruled,vlined]{algorithm2e}


\newtheorem{theorem}{Theorem}[section]
\newtheorem{corollary}[theorem]{Corollary}
\newtheorem{lemma}[theorem]{Lemma}

\newtheorem{definition}[theorem]{Definition}

\newcommand{\qed}{\rule{7pt}{7pt}}

\newenvironment{proof}{\noindent{\bf Proof}\hspace*{1em}}{\qed\medskip}
\newenvironment{proof-sketch}{\noindent{\bf Sketch of Proof}\hspace*{1em}}{\qed\medskip}
\newenvironment{proof-idea}{\noindent{\bf Proof Idea}\hspace*{1em}}{\qed\medskip}
\newenvironment{proof-of-lemma}[1]{\noindent{\bf Proof of Lemma #1}\hspace*{1em}}{\qed\medskip}
\newenvironment{proof-of-corollary}[1]{\noindent{\bf Proof of Corollary #1}\hspace*{1em}}{\qed\medskip}
\newenvironment{proof-of-corollary-noqed}[1]{\noindent{\bf Proof of Corollary #1}\hspace*{1em}}{}
\newenvironment{proof-of-claim}[1]{\noindent{\bf Proof of Claim #1}\hspace*{1em}}{\qed\medskip}
\newenvironment{proof-of-claim-noqed}[1]{\noindent{\bf Proof of Claim #1}\hspace*{1em}}{}
\newenvironment{proof-attempt}{\noindent{\bf Proof Attempt}\hspace*{1em}}{\qed\medskip}
\newenvironment{proofof}[1]{\noindent{\bf Proof}
of #1:\hspace*{1em}}{\qed\medskip}



\makeatletter
\def\fnum@figure{{\bf Figure \thefigure}}
\def\fnum@table{{\bf Table \thetable}}
\long\def\@mycaption#1[#2]#3{\addcontentsline{\csname
  ext@#1\endcsname}{#1}{\protect\numberline{\csname 
  the#1\endcsname}{\ignorespaces #2}}\par
  \begingroup
    \@parboxrestore
    \small
    \@makecaption{\csname fnum@#1\endcsname}{\ignorespaces #3}\par
  \endgroup}
\def\mycaption{\refstepcounter\@captype \@dblarg{\@mycaption\@captype}}
\makeatother

\newcommand{\mathify}[1]{\ifmmode{#1}\else\mbox{$#1$}\fi}
\newcommand{\bigO}O


\newcommand{\R}{{\mathbb R}}

\renewcommand{\vec}[1]{{\mathbf #1}}

\providecommand{\norm}[1]{\lVert #1 \rVert}

\newcommand{\Ex}{\mathbb{E}}
\newcommand{\Real}{\mathbb{R}}
\newcommand{\Ss}{S}

\newcommand{\tr}{\mathrm{tr}}
\newcommand{\Ww}{\mathcal{W}}
\newcommand{\Oo}{\mathcal{O}}
\newcommand{\Pp}{\mathcal{P}}
\newcommand{\Hh}{\mathcal{H}}
\newcommand{\Igood}{I_{good}}
\newcommand{\Igoodt}{I^{(\ell)}_{good}}
\newcommand{\rt}{r^{(\ell)}}
\newcommand{\Sep}{S_{\varepsilon}}
\newcommand{\Set}{S_{\varepsilon}^{(\ell)}}
\newcommand{\Sigf}{\sigma_{\nabla f}}

\newcommand{\XI}{\vec{x}^{(i)}}
\newcommand{\YI}{\vec{y}^{(i)}}
\newcommand{\YIT}{\vec{y}^{(i)\top}}
\newcommand{\ZI}{z^{(i)}}
\newcommand{\FI}{f^{(i)}}
\newcommand{\Wst}{\vec{w}^{*\top}}
\newcommand{\Wt}{\vec{w}^{\top}}
\newcommand{\Wstar}{\vec{w}^*}
\newcommand{\Wtrue}{\vec{w}^*_{true}}
\newcommand{\Wh}{\hat{\vec{w}}}
\newcommand{\Wavg}{\hat{\vec{w}}_{avg}}
\newcommand{\Mgood}{t }
\newcommand{\Ngood}{N_{good} }

\title{Conditional Linear Regression}
\date{}

\begin{document}

\author{Diego Calderon\thanks{Part of this work was performed during an REU at Washington University in St.~Louis, supported by WUSEF.}\\ University of Arkansas \\ dacalder@uark.edu 
\and 
Brendan Juba\thanks{Supported by an AFOSR Young Investigator Award and NSF award CCF-718380.} \quad Sirui Li \quad Zongyi Li \\ Washington University in St. Louis \\ \{bjuba, sirui.li, zli\}@wustl.edu 
\and 
Lisa Ruan~\thanks{Part of this work was performed during an REU at Washington University in St.~Louis, supported by the NSF Big Data Analytics REU Site, award IIS-1560191.} \\ M.I.T. \\llruan@mit.edu}

\maketitle

\begin{abstract}
Work in machine learning and statistics commonly focuses on building models that capture the vast majority of data, possibly ignoring a segment of the population as outliers. However, there does not often exist a good model on the whole dataset, so we seek to find a small subset where there exists a useful model. We are interested in finding a linear rule capable of achieving more accurate predictions for just a segment of the population. We give an efficient algorithm with theoretical analysis for the conditional linear regression task, which is the joint task of identifying a significant segment of the population, described by a $k$-DNF, along with its linear regression fit.
\end{abstract}

\section{Introduction}
Linear regression is the task of modeling the relationship between a result variable and some explanatory variables by a linear rule. Linear regression is a standard tool of statistical analysis, with widespread applications spanning essentially all of the sciences.
While the standard linear regression task seeks to model the majority of the data, we consider problems where a regression fit could exist for some subset or portion of the data, that does not necessarily model the majority of the data. We will consider cases in which the subset with a linear model is described by some simple condition; in other words, we desire to perform linear regression on this conditional distribution. Note that neither the condition nor the model is known in advance. We are seeking an algorithm for this task that scales reasonably with the number of predictors used in the model, the dimension of the data, and so on.

To illustrate our problem, consider a set of patient data from a hospital that includes multiple continuous factors, such as rate of smoking,  radiation exposure,  physical activity. Assume we want to predict risk of developing lung cancer. There may be no linear rules that model, e.g., the risk of developing lung cancer for the majority of the data.  However, there may be some linear model that fits a specific subset of the data well, such as adult city-dwellers. If such a model exists, we aim to find it together with the description of the corresponding subset. Our focus is on identifying the portions of the population for which such simply structured models succeed at making accurate predictions, even when such models do not exist for most of the population.

\subsection{Our contributions}

This problem was introduced by \citet{juba17}, who gave an algorithm for conditional linear regression under the $\ell_\infty$-loss where the predictor factors are sparse (i.e., its time and data requirements are exponential in the number of regression factors), and an algorithm for the general case that only identifies a condition describing a small fraction of the optimal condition. The former, sparse algorithm was extended to general $\ell_p$ losses by \citet{hjlw19}.  In this work, we give an algorithm that, under some mild niceness conditions,
\begin{compactitem}
\item {\em only uses polynomial time and data in the dimension and number of factors,}
\item  {\em recovers a condition that covers as much of the distribution as the optimal condition,}  and 
\item {\em approximates an optimal $\Mgood$-term $k$-DNF with $\tilde{\mathcal{O}}(\Mgood \log\log n)$ blow-up of the error. }
\end{compactitem}
We also present some synthetic data experiments illustrating the capabilities of our new algorithm.

\subsection{Related Work}
Our algorithm builds on the {\em list-learning algorithms} due to \citet{lud}. That work aimed to learn about arbitrary small subsets of the data by producing a list of parameter values containing good estimates of good parameters for any small subset. Charikar et al.~could only address tasks such as mean estimation, and as they discuss, could not obtain sufficient accuracy to use their framework for linear regression. 
Our algorithm, like theirs, iteratively computes local estimates for the regression parameters with consideration of their neighbors, and then (re)clusters the terms using their corresponding parameters. In each iteration there is a risk to lose a small fraction of good points. Our primary innovation lies in using a fixed family of subsets (``terms'' of our conditions) as the basic units of data as opposed to individual points. Since these terms are all large enough and we can obtain higher accuracy on these terms given enough data, we can show that the algorithm will not lose any terms with high probability, and it can obtain adequate estimates of the regression parameters. 
Specially, our improvements upon the work of \cite{lud} are:
\begin{compactitem}
\item {We modify Charikar et al's algorithm and analysis to operate on sets instead of points, by introducing weights and modifying the definition of neighbors. }
\item {In our improved algorithm, we strengthen the analysis to show that it does not lose good sets in any iteration, in contrast to Charikar et al's algorithm which indeed may lose a small fraction of good points. Consequently, whereas the original algorithm may need to terminate with a relatively inaccurate estimate of the parameters, we can potentially reach any accuracy we want.}
\item  {We incorporate pre- and post-processing to convert points to our atomic sets, and to use a covering algorithm in the end to extract a $k$-DNF condition on the original data space.}
\end{compactitem}
We stress that Charikar et al.\ (and subsequent works such as \citet{dks18}) cannot obtain a linear predictor with loss that scales with the loss of the best linear predictor on the data (i.e., that goes to zero with the ``noise'' rate) on account of a difference in the formulation: Charikar et al.\ and Diakonikolas et al.\ consider arbitrary subsets of the data, whereas we only consider subsets described by $k$-DNFs. Diakonikolas et al.\ show that even for the simpler problem of mean estimation, one can only guarantee loss that scales polylogarithmically with the density of the set for which we estimate the mean.

Our problem is similar in spirit to work in {\em robust statistics}~\citep{huber81,rl87}, with the key distinction that robust statistics assumes that the outliers comprise a minority of the data. By contrast here, the vast majority of the data may be ``outliers.'' Another setting in this vein is {\em learning with rejection} in which we think of a predictor as having the option to ``abstain'' from making a prediction. In most cases, the strategy for deciding when to abstain is based on some measure of ``confidence'' of the prediction---for example, this is how El-Yaniv and Weiner conceived of such a linear regression task~\citep{eyw12}. The difference is that these methods do not generally produce a ``nice'' description of the region on which they will make a prediction. On the other hand, \citet{cdsm16} considered a version of the task in which the prediction region is constrained to come from a fixed family of nice rules, like our version. The difference is that Cortes et al.~do not seek to achieve given rates of coverage or error, but rather posit that abstaining from prediction has a known, fixed cost relative to the cost of an error, and seek to minimize this overall cost. In our work, we obtain a description of the regions where we will predictor or abstain as a Boolean formula like Cortes et al.~(but not previous works in this area), and also give simultaneous guarantees on the overall coverage and loss unlike Cortes et al., who can only bound a weighted sum of the two. Finally, algorithms such as RANSAC~\citep{fb81} similarly find a dense linear relation among a subset of the points when one exists, but these algorithms scale exponentially with the dimension, and like learning with rejection, do not obtain a rule characterizing which points will satisfy the linear relationship.

There are many other works that fit the data using multiple linear rules, such as linear mixed models, segmented regression, or piecewise linear regression. These methods are similar in that is similar in that the portion of the data fit by an individual linear rule may be small. The distinction is generally that they seek to model the entire data distribution with linear rules, i.e., they cluster the data and minimize the total regression loss over all clusters. By contrast, we only seek a small fraction (say, 10\%) of the data where a good linear regression fit exists. Specifically,
\begin{itemize}
\item {\em Linear mixed models}~\citep{mccs01,jiang07} simply view the data set as a mixture of data fitting linear rules. Such work usually assumes that all or most of the data belongs to one of the mixture components. Moreover, in such models (as with learning with a reject option) the components do not come attached with a rule describing their domain. To use such models for prediction, one has to try the various linear rules to see which gives a small residual, which may not be so well behaved.

\item {\em Generalized cluster-wise linear regression}~\citep{park16} describes data in terms of ``entities,'' where the goal is to partition these entities into clusters so that the overall total regression loss is minimized. Thus, unlike our work, these methods seek to fit the entire data set with a linear rule in some cluster. It is similar in that our terms, defined by Boolean attributes, can be viewed as entities in their setting. But, since each combination of Boolean attributes defines a term, the total number of terms could be very large (say, a million), and all these terms' data points are defined in the same space, so they are also quite different in practice.

\item In {\em Gaussian process classification using random decision forests}~\citep{frohlich12}, Fr{\H{o}}hlich et al.~propose to learn a random decision forest whose leaf nodes use a Gaussian process classifier. This algorithm also views the data as having been sampled from a linear mixture model, where again we only seek to find a small cluster. Again, this method does not produce a nice rule describing which rule to use for prediction.

\item Work on {\em regression trees} such as by \citet{quinlan92} may also be seen as finding a family of nice rules (i.e., the branches of the tree) such that on the partitions described by these rules, the data is fit by linear predictors. Again, we differ in not seeking to find linear predictors for the entire data distribution. Furthermore, while conjunctive splits are surely nice, it seems to be intractable to identify good conjunctions, even if we are only seeking one; see \citet{juba17} for details. In any case, no guarantees are known for such methods, and it is natural to conjecture that they cannot be guaranteed to work.
\end{itemize}



\section{Preliminaries and Definitions}


We suppose we have a data set consisting of $N$ examples, where each example has three kinds of attributes: a vector of $n$ Boolean attributes $\vec{x}$, a vector of $d$ real-valued attributes $\vec{y}$, and a real-valued target attribute $z$ that we wish to predict. For example, in our cancer prediction setting, we have an example $(\vec{x}, \vec{y}, z)^{(i)}$ for each $i$th patient in which:

\begin{compactenum}
    \item $\vec{x}^{(i)}$ is a vector of Boolean demographic properties that describe patient $i$ (e.g., adult, city dweller)
    \item $\vec{y}^{(i)}$ is vector of continuous risk factors for patient $i$, (e.g., rate of smoking, radiation exposure, physical activity)
    \item $z^{(i)}$ represents the variable of interest such as probability of developing a certain kind of cancer in the next ten years.
\end{compactenum}
When there is no confusion, we will use $\XI$ to denote the whole point $(\vec{x}, \vec{y}, z)^{(i)}$.

For example, $(y_1,y_2,y_3,z)^{(2)}=(60,100,18,.1)$ could represent patient 2 with height 60 inches, weight 100 lbs, age 18, and a 10\% probability of developing cancer in ten years.

Note that we can obtain Boolean attributes $\vec{x}$ from continuous attributes $\vec{y}$ by using binning and splits, similar to decision trees. For example, we can define a new Boolean attribute such as $x= $ ``$y \leq a$?'' for some quantile of the data $a$. Conversely, Boolean variables can also be viewed as regression factors.

We wish to find a linear rule $\langle \vec{w},\vec{y}\rangle$ to predict $z$. We would typically achieve this by minimizing the loss $\| \langle \vec{w}, \vec{y} \rangle -z\|_2$ averaged over the data. But, it's common that there doesn't exist a good linear rule for the whole data set. We propose to find a subset (a condition or ``cluster'' $\vec{c}$), such that there exists good fit on $\vec{c}$.
Of course, if we just pick any subset that fits some linear rule well, this is unlikely to be predictive. Instead, we will seek to pick out a subset according to some simple criteria. In other words, this condition should be described by some simple rule which will use the Boolean $\vec{x}$ attributes. Following the previous work~\cite{juba17}, we will use conditions represented by $k$-DNF (Disjunctive normal form) formulas, which is an ``or'' of {\em terms}, where each term is an ``and'' of at most $k$ attributes (which we permit to be negated). Thus:
\begin{align*}
\vec{c} &= t_1 \vee \ldots \vee t_s \  \text{for some }s\text{, where}\\
t_i &= \ell_{i,1} \wedge \ldots \wedge \ell_{i,k}, \ \text{and where } \ell_{i,j}\text{ is either }x_{i,j}\text{ or }\neg x_{i,j}, \text{ for some attribute } x_{i,j}
\end{align*}
\noindent
and when we say $t$-term DNF, we mean $s \leq t$. We focus on $k$-DNF conditions since the use of other natural representations seems to result in an inherently intractable problem~\citep{juba17}.

To summarize, we want to find a $k$-DNF condition such there exists a good linear prediction rule on the data satisfying this condition. Meanwhile, we want this condition to describe a subset of the data that isn't too small. We will demand that data belongs to it with probability at least $\mu$, as shown in Figure \ref{clr}. Since the task is thus to choose an appropriate set of terms (defining a $k$-DNF), we can view the terms as $m$ atomic sets of data.\footnote{%
Indeed, we will not use the structure of the terms and we could work instead with an arbitrary family of $m$ such atomic subsets of the data, as long as we can collect enough data relative to the number of subsets.}

\begin{figure}[h]
\includegraphics[width=4cm]{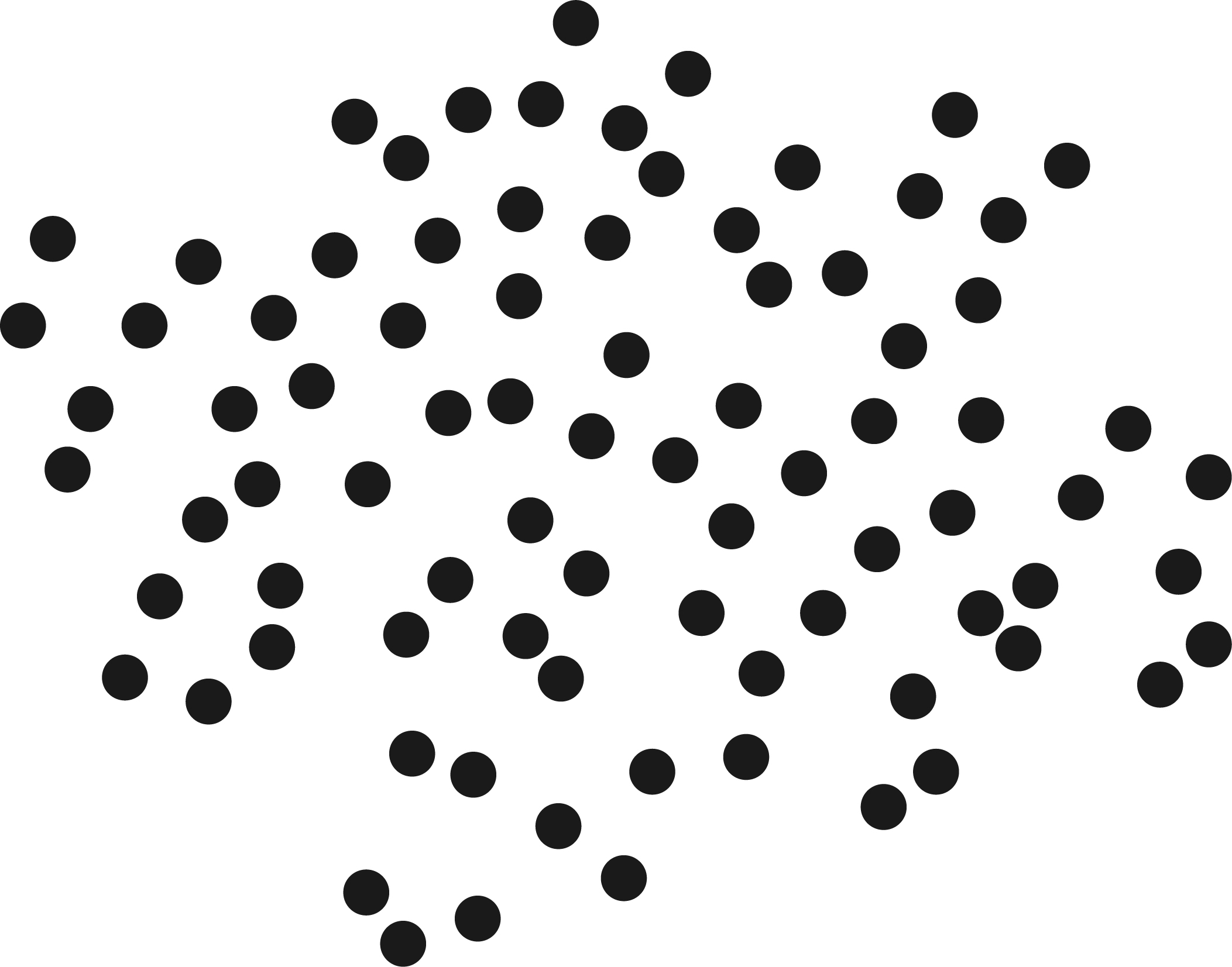}
\includegraphics[width=4cm]{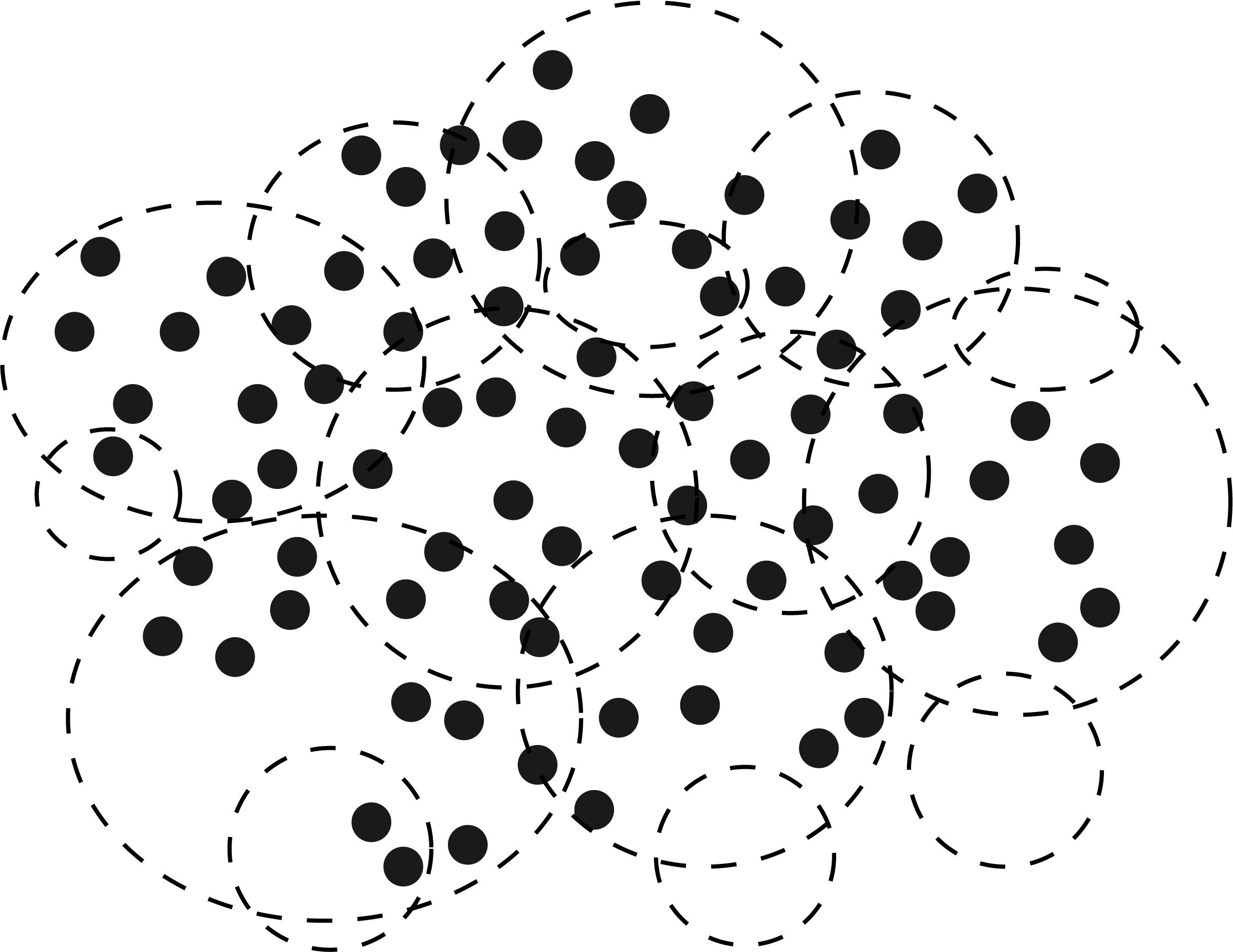}
\includegraphics[width=4cm]{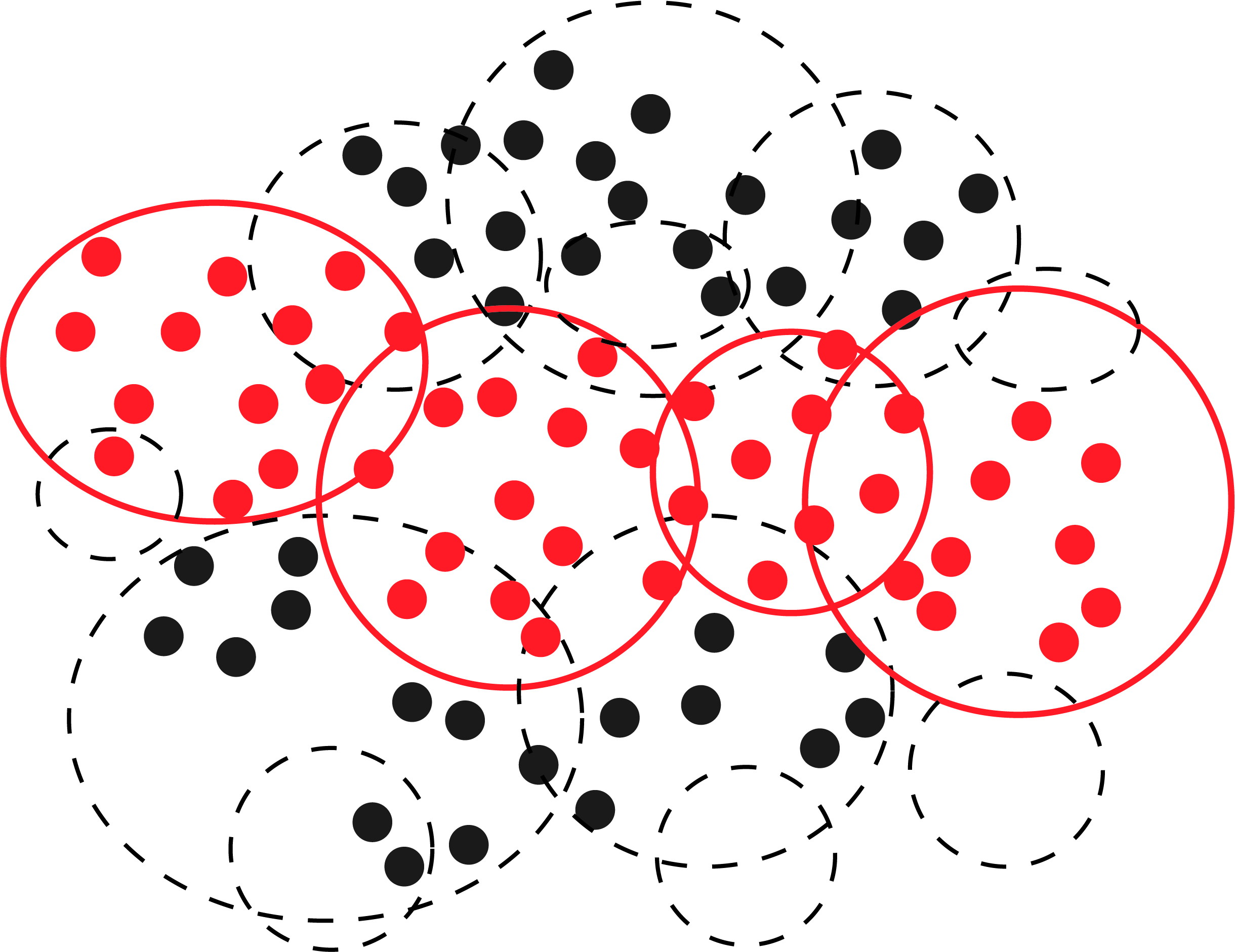}
\centering
\caption{Conditional Linear Regression}\label{clr}
\small{
Data 
$\{(x_1,\ldots,x_{n},y_1,z)\}$ on the $y_1 \times z$ plane. Each term $t$ is depicted in the $y_1\times z$ plane as a circle, enclosing the points satisfying that term. A $k$-DNF is then a union of these circles.
\textbf{1:} The data space.
\textbf{2:} The data and the terms.
\textbf{3:} A selected subset on the data space.
}\raggedright
\end{figure}

\begin{definition}[Conditional Linear Regression]
Conditional linear regression is the following task: 
given data $ \{(x^{(i)}_1,\ldots,x^{(i)}_n,y^{(i)}_1,\ldots,y^{(i)}_d,z^{(i)})\}_{i=1}^N$ drawn i.i.d. from a distribution $D$, we are to find a k-DNF condition $\vec{c}$ and a regression fit $\vec{w}= (w_1,\ldots,w_d)$, such that:\\ 
 \hspace*{0.5cm} (1) The regression loss $\|\langle \vec{w}, \vec{y} \rangle -z\|$ is bounded for the points satisfying $\vec{c}$.\\
 \hspace*{0.5cm} (2) The condition has probability at least $\mu$:
$\Pr[\vec{c}(\vec{x}) = 1] \geq \mu$.
\end{definition}

We will describe an algorithm that can return a pair $(\hat{\vec{c}},\hat{\vec{w}})$ that is close to the potential optimal solution $(\vec{c}^*,\vec{w}^*)$, given that the data distribution on $\vec{c}$ is sufficiently nice in the following sense. First, we assume that $\vec{w}^*$ gives a predictor with subgaussian residuals on $\vec{c}$. Second, we will assume that our loss function is Lipschitz. Third, we will consider the following quantity measuring how the desired conditional distribution $D|\vec{c}$ varies across the terms of $\vec{c}$: if we abuse notation to let a DNF $\vec{t}$ also denote a set of terms (thus, letting $|\vec{t}|$ denote the number of terms in $\vec{t}$),
\[
{\Sep}_0  :=  \max_{\vec{t}\subseteq\vec{c}:\Pr[\vec{t}(\vec{x})|\vec{c}]\geq\varepsilon}\frac{1}{\sqrt{|\vec{t}|}}\Big\|\Big[ \Ex[\vec{y}\vec{y}^\top | t_j] -  \mathbb{E}[\vec{y} \vec{y}^{\top}| \vec{c}]\Big]_{t_j\in \vec{t}} \Big\|_{op}.
\]
${\Sep}_0$ measures, in a spectral sense, how different on average the distributions of sufficiently large sets of terms are from the distribution over the entire desired subset $\vec{c}$. For example, if the distribution on this subset is independent of which term of $\vec{c}$ is satisfied, then ${\Sep}_0$ is $0$. Intuitively, if the distribution across these terms is very different from one another, it will be harder for our algorithm to identify that they should be fit by a common linear rule. We will need that this quantity is sufficiently small relative to the degree of (strong) convexity of the loss. More specifically, recall:
\begin{definition} A function $f:\Hh\to\Real$ for $\Hh\subseteq \Real^d$ is $\kappa-$strongly convex if $\forall \vec{w},\vec{w}' \in \Hh$,
\[f(\vec{w}') \geq \langle \vec{w}'-\vec{w}, \nabla f(\vec{w}) \rangle + \frac{\kappa}{2}\|\vec{w}'-\vec{w}\|_2^2. \]
\end{definition}
Generally, for the squared error loss on a bounded space, the convexity coefficient can be viewed as a constant, when the bound $\|y\|_2 \leq B$ is fixed. Note that we will be able to enforce that our loss function is at least $\kappa$-strongly convex by adding a regularization term, $\frac{\kappa}{2}\|\vec{w}\|_2^2$, at the potential cost of reducing the quality of the solution.
Our main theorem is of the form:

\begin{theorem}[Conditional $l_2$-Linear Regression]\label{2.2}
Suppose $D$ is a joint distribution over $x \in \{0,1\}^{n}, y \in \mathcal{B} \subset \mathbb{R}^{d}$ and $z \in\mathbb{R}$.
 If there exists a (ideal) $\Mgood$-term k-DNF cluster $\vec{c}^*$ and regression fit $\vec{w}^* \in \Hh \subset \mathbb{R}^{d_y}$, where $\mathcal{B}$ has $l_2$ radius $B$ and $\Hh$ has $l_2$ radius $r$, such that:\\
\hspace*{0.5cm}(1): $\mathbb{E}_D[( \langle \vec{w}^*,\vec{y} \rangle -z )^2  | \vec{c}^*(\vec{x}) =1]   \leq \epsilon.$\\ 
\hspace*{0.5cm}(2): $Pr[\vec{c}^*(\vec{x})=1]\geq \mu$.\\
\hspace*{0.5cm}(3): the error $( \langle \vec{w}^*,\vec{y} \rangle -z )$ follows a $\sigma$-subgaussian distribution on $D|\vec{c}$\\
\hspace*{0.5cm}(4): the loss function $f(\vec{w},(\vec{y},z)) = ( \langle \vec{w},\vec{y} \rangle -z )^2+\frac{\kappa}{2}\|\vec{w}\|_2^2$ is $L$-lipschitz and $\kappa$-strongly convex, where $\kappa\geq\Omega\Big(\frac{\Mgood S_{\frac{\gamma}{t}0}\log\frac{1}{\mu}}{\sqrt{\mu}}\Big)$\\
then for $\delta, \gamma \in (0,1)$, using
$N = \Oo(\frac{B^6 d^3\sigma^2L^2t^2}{\mu\gamma^4}\log(m/\delta) ) )$ examples,
we can find a k-DNF $\hat{\vec{c}}$ and parameters $\hat{\vec{w}}$ in polynomial time, such that with probability $(1-\delta)$:\\
\hspace*{0.5cm}(1): $\mathbb{E}_D[( \langle \hat{\vec{w}},\vec{y} \rangle -z )^2  | \hat{\vec{c}}(\vec{x}) =1]   \leq \Oo (\Mgood\log(\mu N)(\epsilon+\gamma))$\\
\hspace*{0.5cm}(2): $ Pr[\hat{\vec{c}}(\vec{x})=1]\geq (1-\gamma)\mu $.
\end{theorem}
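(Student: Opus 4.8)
The plan is to decompose the argument into three phases mirroring the algorithm: a pre-processing step that recasts the data as $m$ atomic sets (the terms of candidate $k$-DNFs), a list-learning core that operates on these sets to isolate a cluster of terms admitting a common linear fit, and a covering post-processing step that extracts the final $k$-DNF. First I would dispatch the sampling question. With $N = \Oo(\frac{\sigma^2 L^2 \Mgood}{\mu\gamma^3}\log(m/\delta))$ examples, a union bound over the $m$ terms guarantees that every term carrying enough conditional mass receives enough samples that its empirical second-moment matrix $\Ex[\vec{y}\vec{y}^\top\mid t_j]$ and its empirical regression loss both concentrate to within $\gamma$-scale accuracy; the $\log(m/\delta)$ factor absorbs the union bound and the failure probability, while the $\sigma$-subgaussian hypothesis on the residuals supplies the concentration of the loss. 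This reduces the problem to a finite, accurately estimated problem over terms.

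For the core, the central object is the per-term local estimate $\Wh_j$, the minimizer of the regularized empirical loss restricted to the points of $t_j$. Using $\kappa$-strong convexity I would show that any two terms drawn from $\vec{c}^*$ have nearby estimates: strong convexity converts a bound on the gradient discrepancy between the two terms' losses into a bound on $\|\Wh_i-\Wh_j\|_2$ scaling like $1/\kappa$, and that gradient discrepancy is precisely governed by how much the conditional moments $\Ex[\vec{y}\vec{y}^\top\mid t_j]$ deviate from $\Ex[\vec{y}\vec{y}^\top\mid\vec{c}^*]$ --- that is, by the separation ${\Sep}_0$. This is exactly where hypothesis (4), $\kappa\geq\Omega(\frac{\Mgood\,{\Sep}_0\log(1/\mu)}{\sqrt{\mu}})$, enters: it forces the good terms' estimates into a single tight cluster around $\Wstar$. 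I would then run the weighted spectral-filtering iteration of \citet{lud}, adapted so that terms rather than points carry the weights and two terms count as neighbors when their estimates are close, so that filtering down-weights terms whose estimates are spectral outliers of the parameter cloud and the surviving cluster's aggregate estimate stays $O(\gamma)$-close to $\Wstar$.

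The decisive property to establish --- and the step I expect to be the main obstacle --- is that no good term is ever lost. Because each good term carries mass bounded below and its estimate lies within the tight radius above of $\Wstar$, I would argue that the total spectral score charged to good terms is always a strictly small fraction of the total, so the filtering threshold removes only bad terms; iterating over the $O(\log(\mu N))$ rounds, the entire good cluster survives with its weight essentially intact. Making this simultaneous control of the filtering threshold, the spectral mass of the good cluster, and the cumulative weight loss across rounds go through is the crux, and it is precisely here that working with large sets rather than individual points is essential: points cannot be guaranteed the accuracy that renders the good cluster spectrally indistinguishable from its center, which is why \citet{lud} must lose a small fraction, whereas our terms can be driven to arbitrary accuracy.

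Finally, given a cluster of terms whose union covers mass $\mu$ and on which $\Wh$ attains average loss $O(\epsilon+\gamma)$, I would apply a greedy covering algorithm over the $\approx\mu N$ covered points to select a $k$-DNF $\hat{\vec{c}}$ covering at least $(1-\gamma)\mu$, giving conclusion (2). The $O(\Mgood\log(\mu N))$ blow-up in conclusion (1) arises in this step: the greedy cover may use an $O(\log(\mu N))$ factor more terms than the optimal $\Mgood$, and bounding the loss on the union $\hat{\vec{c}}$ by the contributions of its constituent terms costs a further factor equal to the number of terms, so the loss is at most $O(\Mgood\log(\mu N))$ times the per-term loss $O(\epsilon+\gamma)$. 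Collecting the three phases and propagating the failure probabilities through the union bound yields both guarantees with probability $1-\delta$.
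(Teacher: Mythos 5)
Your three-phase architecture (preprocess into terms, list-learn on terms, greedy cover) matches the paper's, and you correctly identify the central innovation --- that terms, unlike points, carry enough mass to be estimated accurately and hence never lost. But the core of your Phase 2 has a genuine gap, in two related places. First, you take the local estimate $\Wh_j$ to be the \emph{independent} minimizer of the regularized empirical loss on $t_j$ alone, and then cluster these. The paper instead obtains the $\Wh_j$ from a single joint SDP in which all parameters are constrained to a common ellipsoid $Y$ whose trace is penalized (Algorithm~\ref{algo1}), alternating with a soft down-weighting step in which a term is scored by how well it can be approximated by a convex combination of its neighbors' parameters (Algorithm~\ref{UpdateWeights}); the guarantee $\bar{f}(\Wavg)-\bar{f}(\Wstar)\leq 18\Mgood Sr/\sqrt{\mu}$ comes from the optimality of that coupled program (Lemmas~\ref{4.2}--\ref{4.5}), not from per-term strong convexity. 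Your per-term argument also needs a bound on each individual gradient discrepancy $\|\nabla f_j(\Wstar)-\nabla\bar f(\Wstar)\|$, whereas the hypothesis only controls the aggregate spectral quantity ${\Sep}_0$ over $\varepsilon$-large collections of terms, normalized by $\sqrt{|\vec{t}|}$; an individual term's covariance deviation is not assumed small.

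Second, and more decisively, your plan has no mechanism for reaching accuracy $\gamma$. The bias of any one-shot estimate is governed by $rS_0/\kappa$ where $r$ is the radius of the \emph{current} parameter domain; hypothesis (4) only makes this a constant fraction of $r$, not arbitrarily small, because the per-term covariance deviation $\Ex[\vec{y}\vec{y}^\top\mid t_j]-\Ex[\vec{y}\vec{y}^\top\mid\vec{c}^*]$ does not vanish with more data. The paper's engine is the outer loop of Algorithm~\ref{algo4}: use padded decompositions to isolate a cluster containing all of $\Igood$, re-run the SDP with the origin shifted to that cluster and radius $\rho=\Oo(r^{(\ell)}\log(2/\mu))$, and invoke the linchpin fact (Lemma~\ref{s} and Theorem~\ref{6.2}) that the spectral bound satisfies $S_\varepsilon^{(\ell)}=\Oo(r^{(\ell)}{\Sep}_0)$, so the new error is at most $r^{(\ell)}/6$ and the radius halves each round, converging geometrically to $r_{final}=\Oo(\gamma)$. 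Your proposal mentions an iteration only for the filtering, not this radius-halving loop, and never uses the linear scaling of $S$ in $r$ --- which the authors themselves flag as the property that makes the whole approach work for regression. Without it the argument stalls at constant relative accuracy. (Minor by comparison: in Phase 3 the $\Mgood$ factor actually enters through the double-counting comparison of Lemma~\ref{dcbound} between $\vec{c}^*$'s summed per-term losses and its true loss, and the $\log(\mu N)$ is the ratio-objective set-cover approximation factor; your accounting lands on the right bound but attributes the factors loosely.)
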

We remark that the Lipschitz condition follows from the bound on $y$ and $w$, $L \leq rB^2$,
and the number of good terms $t$ is always at most the total number of possible terms $ m \leq n^k$, so $\Oo (\Mgood\log(\mu N)\epsilon) \leq \tilde{\Oo} (n^k\epsilon)$ (suppressing the other parameters).

The ideal subset $\{x: \vec{c}^*(\vec{x}) =1\}$ is denoted as $\Igood$.
When later working on the space of terms, we also use $\Igood$ to denote the collection of terms of the DNF $\vec{c}^*$: $\{t_i : t_i \text{ is a term of } \vec{c}^* \}$, so the number of terms in $\Igood$ is $\Mgood$.
From the perspective of \citet{lud}, we treat $\Igood$ as our ``good data,'' with the other points being arbitrary bad data. Our algorithm is going to suggest a list of candidate parameters $\hat{\vec{w}}$, with one of them approximating $\vec{w}^*$.

\section{Soft Regression and Outlier Removal}
Our algorithm works primarily on terms: we consider the terms to be atomic sets of data, whose weights are the number of points (probability mass) satisfying the terms.
The main idea of our algorithm is to compute regression parameters $\vec{w}_j$ for each term $t_j$, and cluster the {\em terms} by the distance of their {\em parameters}, similar to \citet{lud}. Towards realizing this strategy, we need to compute approximations to the regression parameters that are not too impacted by the presence of bad data (``outliers'').

\subsection{Preprocessing}
In this section, we show how to convert the data into a suitable form: later, we will assume the terms are disjoint and that we have an adequate number of examples to estimate the loss on each term. We will ensure these conditions by introducing duplicate points when they are shared, and by deleting terms that are satisfied by too few examples.

\subsubsection{Reduction to Disjoint Terms by Duplicating Points}
Given $N$ data points and $m$ terms $ t_1, \ldots, t_m$, if we view terms as sets, our analysis will require these terms to be disjoint. A simple method is to duplicate the points for each term they are contained in. 
For example, if the $i^{th}$ point $\XI = (\vec{x},\vec{y},z)^{(i)}$ is contained in terms $t_a$ and $t_b$, then we create two points $(\vec{x},\vec{y},z)^{(a,i)}$ and $(\vec{x},\vec{y},z)^{(b,i)}$, each with the same attributes $ (\vec{x},\vec{y},z)$ as the original point $\XI$. 
After duplication, the terms are disjoint, and there will be at most $Nm$ points. We denote the resulting number of points by $N'$.
The size of $\Igood$, changing from $|\bigcup_{\Igood} t_i| $ to $\sum_{\Igood} |t_i| $, may also blow up with a factor ranging from $1$ to $\Mgood$. 
Note that the proportion of good points $N_{good}/N$ decreases by at most a factor of $1/m$ since $N'_{good}/N' \geq N_{good}/mN$. This double counting process may skew the empirical distribution of $\Igood$ by up to a factor of $\Mgood$. Consequently, it may result in up to a $\Mgood \leq n^{k}$-factor blow-up in the error, and this is ultimately the source of the increase in loss suffered by our algorithm in Theorem~\ref{2.2}.
For convenience, we will use the same notation $N$, $\Igood$ and $\mu$ for both before and after duplication when there is no confusion. 

\subsubsection{Reduction to Adequately Sampled Terms by Deleting Small Terms}
The approach of \citet{lud} can only guarantee that we obtain satisfactory estimates of the parameters for sufficiently large subsets of the data.  Intuitively, this is not a significant limitation as if a term has very small size, it will not contribute much to our empirical estimates. Indeed, with high probability, the small terms (terms with size $< \varepsilon \mu N $ for  $\varepsilon\leq \gamma/\Mgood$) only comprise a $\gamma$ fraction of $\Igood$. Based on this motivation, if a term has size less than $\varepsilon \mu N$, then we just delete it at the beginning (note here $\varepsilon$ meaning for a fraction of data, should not be confused with $\epsilon$ for error). Especially for a $\Mgood$-term DNF, not many terms could be small, so it is safe to ignore these small terms. As before, we will continue to abuse notation, using $t$ and $m$ for the number of terms when there is no confusion.

\subsection{Loss Functions}
In this section we define our loss functions and analyze their properties.

Given $N$ data points and $m$ disjoint sets (terms) $t_1 , \ldots, t_m$ with size (weight) $|t_1| , \ldots, |t_m| $, we can define a loss function for each point in the space of parameters. For each $i$th point, define $f^{(i)}: \Hh \rightarrow \Real$ by \[\FI(\vec{w}) = (\ZI - \langle \vec{w}, \YI \rangle )^2\]
Similarly, we define a loss function for each of the terms $t_j$, 
$f_1,\ldots, f_m: \Hh \rightarrow \Real$, as the average loss over these data points $\{\XI=(\XI, \YI, \ZI)\}$ in the term $t_j$ (beware we abuse the notation to let $\XI$ denote the $i$th point).
\begin{align*}
    f_j(\vec{w}) 
    &= \frac{1}{|t_j|}\sum_{ \XI \in t_j} \FI(\vec{w}) \\
    &= \frac{1}{|t_j|}\sum_{ \XI \in t_j} (\ZI - \langle \vec{w}, \YI \rangle )^2 \\
    &\underset{(*)}{=} \frac{1}{|t_j|}\|\vec{z} - Y \Wt \|^2_2\\
    &= \frac{1}{|t_j|} (\vec{z} - Y \Wt)^{\top} (\vec{z} - Y \Wt)\\
    &= \frac{1}{|t_j|} \left(\vec{z}^{\top}\vec{z} - \vec{z}^{\top}Y \Wt - \vec{z}Y^{\top}\vec{w} + \vec{w}Y^{\top}Y\Wt \right)\\
    &= \frac{1}{|t_j|} [1,\vec{w}]  
        \begin{bmatrix}
            \vec{z}^{\top}\vec{z} &  -\vec{z}^{\top} Y  \\
             -Y^{\top}\vec{z} &  Y^{\top}Y
        \end{bmatrix} 
    [1,\vec{w}]^{\top}
\end{align*}
Where at $(*)$, we write the formula in vectors and matrices. We treat $\vec{z}$ as a $|t_j| \times 1$ column vector, whose each coordinate is the $z$ for each point in the term $t_j$. Similarly, $Y$ is a $|t_j| \times d$ matrix, each row for a point and $\vec{w}$ is $1 \times d$ row vector. 
One of the advantage of our formulation is that the loss function for each term can be eventually written as $f_j (\vec{w}) = [1, \vec{w}] A [1,\vec{w}]^{\top}$, where $A$  is a $(d+1) \times (d+1)$ matrix. We can pre-compute this quadratic loss matrix $A$ so that the running time of the main algorithm is independent of the number of data points, and is thus a function only of the number of terms and dimension for our regression problem.

Note these loss functions are stochastic, depending on the sample from the distribution $(\vec{x},\vec{y},z) \sim D$. That is, the true loss for a fixed term $t_j$ is:
\[ \Ex [f_j(\vec{w})] = {\Ex}_{\XI} [(\ZI - \langle \vec{w}, \YI \rangle )^2 |t_j]. \]
Similarly, for $\Igood$, we define the loss function
\[ f_{\Igood}(\vec{w}) 
    = \frac{1}{|\Igood|}\sum_{ \XI \in \Igood} \FI(\vec{w}). \]
Let $\bar{f}$ denote the expected loss function for points averaged over $\Igood$,
\begin{align*}
\bar{f}(\vec{w}) = \Ex[f_{\Igood}].   
\end{align*}
Then the optimal $\Wstar$ is defined as 
\[ \Wstar := \underset{\vec{w}}{\arg\min} \bar{f}(\vec{w}).\]
Our ultimate goal is to find $\Wh$ that minimizes $\bar{f}(\hat{\vec{w}})$, but the difficulty is that $\bar{f}$ is unknown (since $\Igood$ is unknown). To overcome this barrier, instead of directly minimizing $\bar{f}(\hat{\vec{w}})$,
we try to find a parameter $\hat{\vec{w}}$ such that $\bar{f}(\hat{\vec{w}}) - \bar{f}(\vec{w}^*)$ is small. Once we get a close approximation $\hat{\vec{w}}$, we can use a greedy covering algorithm to find a good corresponding condition $\hat{\vec{c}}$.

In summary, we reformulate our problem in terms of these new loss functions as follows:
\begin{definition}[Restatement of conditional linear regression problem]
Given $D$ a linear regression distribution over points $\{\vec{x}^{(i)}:=(\XI, \vec{y}^{(i)},z^{(i)})\}_{(i)=1}^N$, and $\{t_j\}_{j=1}^m$ predefined disjoint subsets (terms), 
let $\Igood$ be the (unknown) target collection corresponding to $\vec{c}^* = \bigcup_{t_j\in \vec{c}^*} t_j$ with probability mass $Pr[\vec{x} \in \Igood]\geq \mu$, and $\bar{f}$ be the regression loss over $\Igood$. If there exists a linear regression fit $\vec{w}^*$ such that:
\[\mathbb{E}_D[\bar{f}(\vec{w}^*)]  \leq \epsilon.\] 
Then we want to find a $\hat{\vec{w}}$ that approximates $\vec{w}^*$: 
\[\mathbb{E}_D[\bar{f}(\hat{\vec{w}})]  \leq \gamma + \epsilon.\]
\end{definition}

\subsection{Main Optimization Algorithm}

\begin{figure}[h]
\includegraphics[width=15cm]{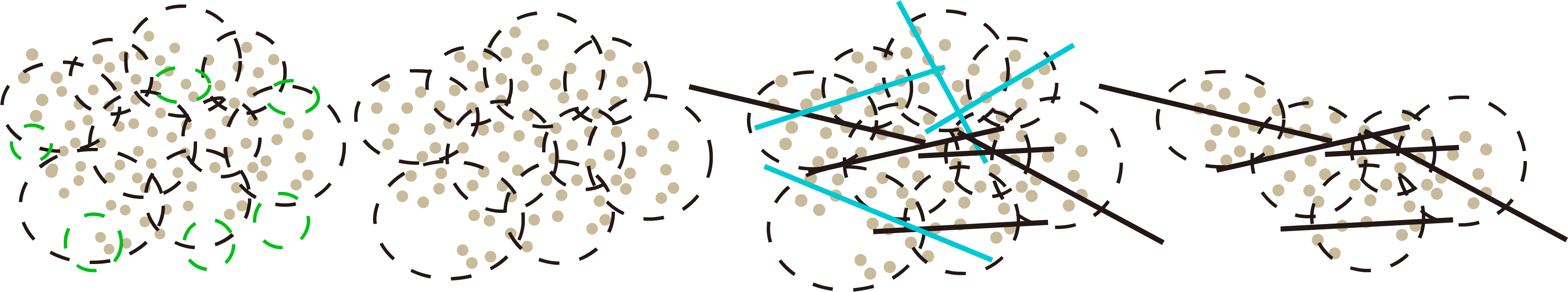}
\centering
\caption{Algorithm \ref{algo1} }\label{pic2}
\small{
\textbf{1:} The original data space with the terms.
\textbf{2:} Delete the small terms and duplicate points.
\textbf{3:} Compute the best regression parameter $\Wh_i$ for each term.
\textbf{4:} Meanwhile iteratively downweight these terms whose $\Wh_i$ have large error on their neighbor terms.
}\raggedright
\end{figure}

The main algorithm is an alternating-minimization-style algorithm: given a soft choice of which terms are outliers, we let each term choose a local set of regression parameters that are collectively regularized by the trace of their enclosing ellipsoid. Then, given these local regression parameters, we update our scoring of outliers by examining which terms find it difficult to assemble a coalition of sufficiently many ``neighboring'' terms whose parameters are, on average, close to the given term. We repeat the two until we obtain a sufficiently small enclosing ellipsoid for the collection of regression parameters.

\subsubsection{Semidefinite Programming for Soft Regression}

Following \citet{lud}, we now present Algorithm \ref{algo1} for approximating the regression parameters. We assign ``local''  regression parameters $\vec{w}_i$ for each term $t_i$, and use a semi-definite program (SDP) to minimize the total loss $\sum |t_i|f_i(\vec{w}_i)$ with regularization to force these parameters to be close to each other. Following each iteration, we use Algorithm \ref{UpdateWeights} to remove outliers, by decreasing the weight factors $c_i$ for those terms without enough neighbors. The process is illustrated in Figure~\ref{pic2}. Intuitively, if there exists a good linear regression fit $\vec{w}^*$ on $\Igood$, then for each term $t_i \in \Igood$, $f_i(\vec{w}^*)$ should be small. Therefore, we can find a small ellipse $Y$ (or $  \mathcal{E}_{Y}$) bounding all parameters for the terms in $\Igood$ if the center of $Y$ is close to $\vec{w}^*$. The SDP will find such an ellipse bounding the parameters while minimizing the weighted total loss.

\begin{algorithm}[h]
\caption{Soft regression algorithm}\label{algo1}
\begin{algorithmic}
   \STATE {\bfseries Input:} terms $t_{1:m}$
    \STATE {\bfseries Output:} parameters $\hat{\vec{w}}_{1:m}$ and a matrix $\hat{Y}$
   \STATE Initialize $c_{1:m} \gets (1,\ldots, 1) $, $\lambda \gets \frac{\sqrt{8\mu} N \Mgood \Ss}{r}$
   \REPEAT
   \STATE Let $\hat{\vec{w}}_{1:m}, \hat{Y}$ be the solution to SDP:
   \begin{equation}\label{SDP}
       \begin{aligned}
        & \underset{\vec{w}_1,\ldots,\vec{w}_m,Y}{\text{minimize}}
        & & \sum_{i=1}^{m} c_i |t_i| f_i (\vec{w}_i) + \lambda \tr(Y)\\
        & \text{subject to}
        & & \vec{w}_i {\vec{w}_i}^{\top} \preceq Y \ \mathrm{for} \ \mathrm{all} \ i=1,\ldots,m.  
        \end{aligned}
   \end{equation}
   \IF{ $\tr(\hat{Y}) > \frac{6r^2}{\mu}$ }
        \STATE $ c \gets \text{UpdateWeights} (c,\hat{\vec{w}}_{1:m}, \hat{Y})$
  \ENDIF
  \UNTIL $\tr(\hat{Y}) \leq \frac{6r^2}{\mu}$
  \STATE {\bfseries Return} $\hat{\vec{w}}_{1:m}, \hat{Y}$
\end{algorithmic}
\end{algorithm}

Formally, in the SDP \ref{SDP} (in Algorithm~\ref{algo1}), 
$Y$ is a $d \times d$-dimensional matrix (recall $d$ is the dimension for $\vec{y}$ and $\vec{w}$). We bound the parameters $\vec{w}_i$ with the ellipse $Y$ by imposing the semidefinite constraint $\vec{w}_i {\vec{w}_i}^{\top} \preceq Y$, which is equivalent to letting 
$\begin{bmatrix}
Y & \vec{w}_i \\
{\vec{w}_i}^{\top} & 1
\end{bmatrix} 
\succeq 0,
$ 
saying that $\vec{w}_i$ lies within the ellipse centered at 0 defined by $Y$.
The regularization $tr(Y)$ of the SDP penalizes the size of the ellipse, making the various parameter copies $\vec{w}_i$ lie close to each other.

\subsubsection{Removing Outliers}
The terms not in $\Igood$ may have large loss for the optimal parameters $\vec{w}^*$, and therefore make the total loss in SDP \ref{SDP} large. To remove these bad terms, we assign a weight factor $c_i \in (0,1)$ for each term $t_i$ and down weight these terms with large loss, as shown in Algorithm \ref{UpdateWeights}.

\begin{algorithm}
\caption{Algorithm for updating outlier weights}\label{UpdateWeights}
\begin{algorithmic}
   \STATE {\bfseries Input:} $c, \hat{\vec{w}}_{1:m}, \hat{Y} $.
    \STATE {\bfseries Output:} $c'$.
    \FOR{$i=1$ {\bfseries to} $m$}
    \STATE Let $\tilde{\vec{w}}_i$ be the solution to
    \begin{equation}
    \begin{aligned}
    & \underset{\tilde{\vec{w}}_i,a_{i1},\ldots,a_{im}}{\text{minimize}}
    & & f_i(\tilde{\vec{w}}_i)\\
    & \text{subject to}
    & & \tilde{\vec{w}}_i = \sum_{j=1}^{m} a_{ij} \hat{\vec{w}}_j,\quad   \sum_{j=1}^{m} a_{ij} = 1 \\
    & & &   0 \leq  a_{ij} \leq \frac{2}{\mu N}|t_j|, \quad \forall j   
    \end{aligned}
    \end{equation}
    $z_i \gets f_i(\tilde{\vec{w}}_i) - f_i(\hat{\vec{w}}_i)$
    \ENDFOR
    \STATE $ z_{max} \gets max\{\ z_i\ \big|\ c_i \neq 0\}  $
    \STATE $c'_i \gets c_i . \frac{z_{max} - z_i}{z_{max}}$ for $i = 1,\ldots ,n$
    \STATE {\bfseries Return}   $c'$
\end{algorithmic}
\end{algorithm}

In Algorithm \ref{UpdateWeights}, we solve an SDP for each term to find its best $\mu N$ neighbor points and compute the ``average'' parameter $\tilde{\vec{w}}_i$ over the neighborhood. $\tilde{\vec{w}}_i$ is a linear combination of its neighbors' parameters: $\tilde{\vec{w}}_i = \sum_{j=1}^{m} a_{ij} \hat{\vec{w}}_j$, minimizing the term's loss $f_i(\tilde{\vec{w}}_i)$.
Intuitively, if a term is a good term, i.e. $t_i \in \Igood$, then its parameter $\hat{\vec{w}}_i$ should be close to the average of parameters of all terms in $\Igood$, $\vec{w}_i \approx \sum_{\Igood}\frac{|t_j|}{|\Igood |}  \vec{w}_j $. 
In the SDP for $t_i$, we define coefficients $a_{ij}$ to play the role of $\frac{|t_j|}{|\Igood |}$. These coefficients $\{a_{ij}\}$ are required to sum to 1, i.e. $\sum_{j=1}^{m} a_{ij} = 1$, and each should not be larger than $\frac{|t_j|}{|\Igood |} \sim 2\frac{|t_j|}{\mu N}$.  At a high level, the SDP computes the best neighbors for $t_i$ by assigning $\{a_{ij}\}$, so that the average parameter $\tilde{\vec{w}}_i$ over the neighbors minimizes $f_i$.
If a term is bad, it is hard to find such good neighbors, so if the loss $f_i(\tilde{\vec{w}}_i)$ is much larger than the original loss, then we consider the term to be an outlier, and down-weight its weight factor $c_i$.

\subsection{A Bound on the Loss That Is Linear in the Radius of Parameters}
Similarly to \citet{lud}, we obtain a theorem saying the algorithm will return meaningful outputs on $\Igood$.
The main change is that we use terms instead of points. In other words, we generalize their arguments from unit-weight points to sets with different weights. And based on a spectral norm analysis, we show the bound will shrink linearly with the radius of the set of candidate parameters, as long as we have enough data.

First, to estimate the losses by their inputs,
we introduce the gradient $\nabla f$. By the convexity of $f$, we have
$(f(\vec{w})-f(\vec{w}^*) )
\leq  \langle \nabla f(\vec{w}), \vec{w}-\vec{w}^*  \rangle$.
Note that $\|\vec{w}-\vec{w}^*\|$ is bounded by $2r$, where $r:= \max \|\vec{w}\|_2$. We will need to bound the gradient as well.

To bound the loss functions, we use the spectral norm of gradients:
\[S := \max_{\vec{w}\in\Hh} \frac{1}{\sqrt{\Mgood}}
\| \big[\nabla f_j(\vec{w}) - \nabla \bar{f}(\vec{w}) \big]_{j\in \Igood} \|_{op}
\]
where $\| \cdot \|_{op}$ is the spectral norm (operator 2-norm) of the matrix whose rows are gradients of loss functions in $\Igood$: $(\nabla f_i(\vec{w}) - \nabla \bar{f}(\vec{w}))_{i\in \Igood}$. $S$ measures the difference between the gradient of loss functions of terms in $\Igood$: $\nabla f_i(\vec{w})$ and gradient of average loss on $\Igood$: $\nabla \bar{f}(\vec{w})$. At a high level, this bound tells us how bad these loss functions could be. We note that since the gradient is a linear operator, this quantity is invariant to regularization of the loss functions.

As shown by \citet{lud}, if $\nabla f_i - \nabla \bar{f} $ is a $\Sigf$ sub-gaussian distribution, then $S = \Oo(\Sigf)$, generally a constant. Although a constant bound $\Oo(\Sigf)$ is good for their purposes -- mean estimation -- it is too weak for linear regression. In the sequel we will show $S$ is going to shrink as the radius of parameters $r$ decreases. 

For linear regression, $f_j(\vec{w}):=
    \frac{1}{|t_j|}\sum_{\XI \in t_j} \FI (\vec{w})$,
and  $\nabla f_j(\vec{w}) = \frac{1}{|t_j|}\sum_{\XI \in t_j} \nabla \FI(\vec{w})$, where for each point
$\nabla \FI(\vec{w}) =  2(\Wt \YI - \ZI) \YI $. 
If we assume $\ZI = \Wst \YI + \epsilon^{(i)} $, and the residual $\epsilon^{(i)}$ (a subgaussian, e.g., from $ \mathcal{N}(0,\sigma_{\epsilon}^2)$) is independent of  $\YI$, then 
\begin{align*}
&\nabla \FI(\vec{w}) = 2(\vec{w}^{\top} - \vec{w}^{*\top}) \YI \YIT + \epsilon^{(i)} \YI \\
&\nabla f_j(\vec{w}) = \frac{1}{|t_j|} \left( 2(\vec{w}^{\top} - \vec{w}^{*\top}) \sum_{\XI \in t_j} \YI \YIT +
 \sum_{\XI \in t_j} \epsilon^{(i)} \YI \right)
\end{align*}
Similarly, we can write the target function as: 
\[
    \nabla \bar{f}(\vec{w}) = 2(\vec{w}^{\top} - \vec{w}^{*\top}) \mathbb{E}[\vec{y} \vec{y}^{\top}] 
    \]
So the difference of the gradients is actually:
\begin{align*}
     & (\nabla f_j(\vec{w})  - \nabla \bar{f}(\vec{w})) 
     = 2(\vec{w}^{\top} - \vec{w}^{*\top}) (\sum_{\XI \in t_j} \frac{\YI \YIT}{|t_j|} - \mathbb{E}[\vec{y} \vec{y}^{\top}]) + \sum_{\XI \in t_j} \frac{\epsilon^{(i)} \YI}{|t_j|} 
\end{align*}
The first term is going to shrink as $(\vec{w}^{\top} - \vec{w}^{*\top})$ decreases. (If we draw enough data, $ \frac{1}{|t_j|} \sum_{t_j} \YI \YIT \rightarrow \Ex [\YI \YIT|t_j]$, so we'll be able to regard the other factor as a fixed ``scaling.'')  The second term approaches zero as we draw more data, $\frac{1}{|t_j|}\sum_{t_j} \epsilon^{(i)} \YI \rightarrow 0$. So given that we have drawn enough data, we will be able to bound each row of $S$ by the radius $r:=\max_{\vec{w}} \|\vec{w}\|_2$ and similarly for the whole matrix.

More concretely, if we define $S_0 :=   \|\big[ \frac{1}{|t_j|}\sum_{t_j} \YI \YIT -  \mathbb{E}[\vec{y} \vec{y}^{\top}] \big]_{\Igood} \|_{op}$,
then we find $S = \mathcal{O}(r  S_0)$. Note, $S_0$ is fixed given the data, and thus remains constant across iterations.
Furthermore, $S_0$ concentrates around $\|\big[ \Ex[ \YI \YIT|t_j] -  \mathbb{E}[\vec{y} \vec{y}^{\top}] \big]_{\Igood} \|_{op}$ and can thus be bounded. Therefore, the bound on $S$ we can guarantee will decrease when we take more points. 
We know $\frac{1}{|t_j|}\sum_{t_i}\epsilon^{(i)}$ can be bounded with a simple sub-gaussian tail bound : $\Pr[\frac{1}{|t_j|}\sum \epsilon \geq \tau ] \leq \exp[{-\frac{2\tau^2}{\sigma_{\epsilon}^2/|t_j|}}] $. 
Plugging in $\tau\gets r$, and fixing $\delta$, we find that as long as the number of examples 
$|t_j| \geq \sigma^2_{\epsilon}\log(1/\delta)/2 r^2$, then $\sum_{t_j}\epsilon^{(i)} \leq r$ with probability $1-\delta$. Taking a union bound over $\delta \gets \delta/\Mgood$, it suffices to take $|t_j| \geq\sigma^2_{\epsilon}\log(m/\delta)/2 r^2 $, 
and thus $N = \Oo (\sigma^2_{\epsilon}\log(m/\delta)/\varepsilon\mu r^2 ) )$. In summary, we obtain

\begin{lemma}\label{s} For $N=\Oo (\sigma^2_{\epsilon}\log(m/\delta)/\varepsilon\mu r^2 ) )$ example points, with probability $1-\delta$ the spectral norm of the gradients $S$ is bounded by a linear function of the radius $r:=\max_{\vec{w}} \|\vec{w}\|_2$, i.e.,
$S = \mathcal{O}(r  S_0)$.
\end{lemma}

\subsection{Analysis of Main Optimization Algorithms \ref{algo1} and \ref{UpdateWeights} } 
Let $\hat{\vec{w}}_{1:m}$ be the outputs from Algorithm \ref{algo1}. We define the weighted average parameter of terms from $\Igood$ as $\hat{\vec{w}}_{avg} := ( \sum_{i \in I_{good}} c_i |t_i| \hat{\vec{w}}_i )/ (  \sum_{i \in I_{good}} c_i |t_i|) $. 
In this section, we aim to prove a bound on $\bar{f}(\hat{\vec{w}}_{avg}) - \bar{f}(\vec{w}^*) $ by controlling the optimization error $|f_i(\hat{\vec{w}}_i) - f_i(\vec{w}^*) |$ and the statistical error $|\bar{f}(\Wavg) -f_i(\hat{\vec{w}})|$. Then we prove Algorithm \ref{UpdateWeights}  will not decrease the weight of the good terms too much.

Theorem \ref{4.1} says that Algorithm \ref{algo1} can find a small ellipse $\mathcal{E}_Y$ bounding its output, and the expected loss over $\Wavg$ is close to the expected loss of $\vec{w}^*$.

\begin{theorem}[Weighted Version of Theorem 4.1, \cite{lud}]\label{4.1}
Let $\hat{\vec{w}}_{1:m}, \hat{Y}$ be the output of Algorithm 1. Then, $\bar{f}(\hat{\vec{w}}_{avg}) - \bar{f}(\vec{w}^*) \leq 18 \frac{\Mgood \Ss r }{\sqrt{\mu}}$. Furthermore, $\hat{\vec{w}}_{avg} \in \mathcal{E}_{\hat{Y}}$ and $\tr(\hat{Y}) \leq \frac{6r^2}{\mu}$.
\end{theorem}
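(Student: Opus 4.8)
The plan is to mirror the proof of Theorem 4.1 of \citet{lud}, generalizing it from unit-weight points to the weighted terms, where term $t_i$ now carries mass $c_i|t_i|$. First I would dispose of the two structural claims, which come essentially for free. The output satisfies $\tr(\hat Y)\le 6r^2/\mu$ directly by the stopping rule of Algorithm~\ref{algo1} (modulo arguing the loop halts, which is the job of Algorithm~\ref{UpdateWeights}). For $\Wavg\in\mathcal{E}_{\hat Y}$: each SDP constraint $\hat{\vec{w}}_i\hat{\vec{w}}_i^{\top}\preceq \hat Y$ says (by the Schur-complement equivalence noted after SDP~\ref{SDP}) that $\hat{\vec{w}}_i\in\mathcal{E}_{\hat Y}$; since $\mathcal{E}_{\hat Y}$ is a convex sublevel set and $\Wavg$ is a convex combination of $\{\hat{\vec{w}}_i\}_{i\in\Igood}$, it lies in $\mathcal{E}_{\hat Y}$ as well. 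A byproduct I will reuse is $\|\hat{\vec{w}}_i\|^2\le \tr(\hat Y)\le 6r^2/\mu$, hence $\|\hat{\vec{w}}_i-\vec{w}^*\| = \Oo(r/\sqrt{\mu})$.

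For the loss bound, I would start from SDP optimality against the planted feasible point $\vec{w}_i=\vec{w}^*$ for all $i$ and $Y=\vec{w}^*\vec{w}^{*\top}$ (whose trace is $\|\vec{w}^*\|^2\le r^2$), giving
\[
\sum_{i=1}^{m} c_i|t_i| f_i(\hat{\vec{w}}_i) + \lambda\,\tr(\hat Y)\ \le\ \sum_{i=1}^{m} c_i|t_i| f_i(\vec{w}^*) + \lambda r^2 .
\]
Writing $\alpha_i = c_i|t_i|/C$ with $C=\sum_{i\in\Igood} c_i|t_i|$, and using convexity of $\bar{f}$ (Jensen), I would split
\begin{align*}
\bar{f}(\Wavg)-\bar{f}(\vec{w}^*) &\le \sum_{i\in\Igood}\alpha_i\big[\bar{f}(\hat{\vec{w}}_i)-\bar{f}(\vec{w}^*)\big]\\
&= \underbrace{\sum_{i\in\Igood}\alpha_i\big[f_i(\hat{\vec{w}}_i)-f_i(\vec{w}^*)\big]}_{\text{optimization error}} + \underbrace{\sum_{i\in\Igood}\alpha_i\big[(\bar{f}-f_i)(\hat{\vec{w}}_i)-(\bar{f}-f_i)(\vec{w}^*)\big]}_{\text{statistical error}},
\end{align*}
which matches the two error sources flagged in the text.

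The statistical error is the routine half. Setting $g_i = f_i-\bar{f}$ and using the fundamental theorem of calculus, each summand is $\int_0^1 \langle \nabla g_i(\cdot),\hat{\vec{w}}_i-\vec{w}^*\rangle$; since the matrix with rows $\{\nabla g_i\}_{i\in\Igood}$ has operator norm at most $\sqrt{\Mgood}\,S$ by the definition of $S$, every row satisfies $\|\nabla g_i(\vec{w})\|\le\sqrt{\Mgood}\,S$, and combined with $\|\hat{\vec{w}}_i-\vec{w}^*\|=\Oo(r/\sqrt{\mu})$ and $\sum_i\alpha_i=1$ this gives a bound of order $\sqrt{\Mgood}\,S r/\sqrt{\mu}$ — lower order than the target. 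The optimization error is where the stated bound is actually produced: dividing the SDP inequality by $C$ and using $C=\Omega(\mu N)$ (good terms keep $\Theta(1)$ weight, the content of the companion guarantee on Algorithm~\ref{UpdateWeights}) together with $\lambda=\sqrt{8\mu}\,N\Mgood S/r$ yields $\lambda r^2/C = \Theta(\Mgood S r/\sqrt{\mu})$. This is exactly where the $\Mgood$ factor (from $\lambda$) and the $1/\sqrt{\mu}$ (from $\sqrt{\mu}N/C$) enter, and collecting constants across both errors should land at $18\,\Mgood S r/\sqrt{\mu}$.

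The hard part is the optimization error, specifically isolating the good-term contribution from the aggregate SDP bound. One cannot simply drop the bad terms: given $\hat Y$, the SDP decouples and $\hat{\vec{w}}_i$ minimizes $f_i$ over $\mathcal{E}_{\hat Y}$, so a bad term may have $f_i(\hat{\vec{w}}_i)$ far below $f_i(\vec{w}^*)$, and naively subtracting $\sum_{\text{bad}}$ inflates the bound rather than shrinking it. The resolution, following \citet{lud}, is to route the weighted gradient sum through the SDP's optimality (stationarity/complementary-slackness) conditions — the dual matrices of the constraints $\hat{\vec{w}}_i\hat{\vec{w}}_i^{\top}\preceq\hat Y$ sum to $\lambda I$, which ties $\sum_i c_i|t_i|\langle\nabla f_i(\hat{\vec{w}}_i),\hat{\vec{w}}_i\rangle$ to $2\lambda\,\tr(\hat Y)$ without ever invoking the bad terms' losses. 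A secondary difficulty, genuinely new relative to \citet{lud}, is threading the non-uniform weights $c_i|t_i|$ through the operator-norm step while preserving the $\sqrt{\Mgood}\,S$ control; here I would lean on the preprocessing that deletes terms of size $<\varepsilon\mu N$, so that no single good term carries disproportionate mass and the reweighted sum remains governed by the spectral bound.
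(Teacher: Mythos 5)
Your overall architecture matches the paper's: the trace bound comes from the stopping rule, membership of $\Wavg$ in $\mathcal{E}_{\hat Y}$ from convexity of the ellipsoid, the lower bound $\sum_{i\in\Igood}c_i|t_i|\geq \mu N/2$ from the analysis of Algorithm~\ref{UpdateWeights} (Lemma~\ref{4.5}), and the dominant $\Mgood Sr/\sqrt{\mu}$ term from $\lambda r^2$ divided by $\mu N/2$. Your decomposition differs mildly from the paper's: you apply Jensen to $\bar f$ and compare $\bar f(\hat{\vec{w}}_i)$ to $\bar f(\vec{w}^*)$ directly, whereas the paper routes through $f_i(\Wavg)$ via Lemma~\ref{4.3} (which itself rests on the H\H{o}lder pairing of Lemma~\ref{4.6}). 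Your route is legitimate and your row-wise bound on the statistical error is fine (each row of the gradient-difference matrix has norm at most $\sqrt{\Mgood}\,S$, and $\|\hat{\vec{w}}_i\|^2\le\tr(\hat Y)$), modulo the caveat—shared with the paper—that $S$ is defined as a max over $\vec{w}\in\Hh$ while $\hat{\vec{w}}_i$ may lie outside $\Hh$.

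The one place where your sketch does not close is exactly the step you flag as hard: isolating $\sum_{i\in\Igood}c_i|t_i|\bigl(f_i(\hat{\vec{w}}_i)-f_i(\vec{w}^*)\bigr)$. The KKT identity you invoke, $\sum_{i}c_i|t_i|\langle\nabla f_i(\hat{\vec{w}}_i),\hat{\vec{w}}_i\rangle=-2\lambda\tr(\hat Y)$, is a sum over \emph{all} terms, and restricting it to $\Igood$ and handling the residual $-\langle\nabla f_i(\hat{\vec{w}}_i),\vec{w}^*\rangle$ contributions requires further work (it can be completed using the per-constraint dual matrices $Z_i\succeq 0$ with $\sum_i Z_i=\lambda I$, complementary slackness, and the estimate $2\tr(Z_i\vec{w}^*\hat{\vec{w}}_i^{\top})\le\tr(Z_i\hat{\vec{w}}_i\hat{\vec{w}}_i^{\top})+\tr(Z_i\vec{w}^*\vec{w}^{*\top})$, which sums to $\lambda\|\vec{w}^*\|^2$—but none of this is in your sketch). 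The paper's Lemma~\ref{4.2}, inherited from Lemma 4.2 of \citet{lud}, gets the same bound by a one-line feasibility comparison that sidesteps duality entirely: compare $(\hat{\vec{w}}_{1:m},\hat Y)$ against the feasible point that keeps $\hat{\vec{w}}_i$ on the bad terms, replaces $\hat{\vec{w}}_i$ by $\vec{w}^*$ only for $i\in\Igood$, and enlarges the ellipsoid to $Y=\hat Y+\vec{w}^*\vec{w}^{*\top}$; the bad terms and $\lambda\tr(\hat Y)$ cancel, leaving $\sum_{i\in\Igood}c_i|t_i|(f_i(\hat{\vec{w}}_i)-f_i(\vec{w}^*))\le\lambda\|\vec{w}^*\|_2^2$. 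Your opening comparison against $\vec{w}_i=\vec{w}^*$ for all $i$ with $Y=\vec{w}^*\vec{w}^{*\top}$ is the wrong feasible point for this purpose, as you yourself observe. Finally, note that your "secondary difficulty" about non-uniform weights is handled in the paper simply by $\norm{diag(c_i|t_i|)}_{op}\le\mu N$ in Lemma~\ref{4.6}; the deletion of small terms is not needed for this theorem.
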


Lemma \ref{4.6} is a basic inequality used multiple times in the analysis. It bounds the loss via $S$. Since the algorithm is using terms instead of points, we are suffering an additional factor-$\Mgood$ blow-up of the error compared to the original bound, which is carried through the lemmas in this section.
\begin{lemma}\label{4.6}
For any $\vec{w}$ and any $\vec{w}_{1:n}$ satisfying $\vec{w}_i {\vec{w}_i}^{\top} \preceq Y$ for all i, we have 

\begin{equation} \Big| \sum_{i\in I_{good}} c_i |t_i|\langle \nabla f_i(\vec{w}) - \nabla \bar{f}(\vec{w}), \vec{w}_i \rangle \Big|
\leq \mu N \Mgood S\sqrt{\tr(Y)}.
\end{equation}
\end{lemma}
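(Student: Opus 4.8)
The plan is to collapse the weighted sum of inner products into a single spectral quantity that is controlled by $S$, following the skeleton of the corresponding bound in \citet{lud} but carrying the term weights $|t_i|$ through every step. Writing $g_i := \nabla f_i(\vec{w}) - \nabla \bar{f}(\vec{w})$, the first move is to use the ellipsoid constraint $\vec{w}_i\vec{w}_i^{\top}\preceq Y$ pointwise: this constraint gives $\langle g_i,\vec{w}_i\rangle^2 = g_i^{\top}\vec{w}_i\vec{w}_i^{\top}g_i \le g_i^{\top}Yg_i$, so $|\langle g_i,\vec{w}_i\rangle|\le\sqrt{g_i^{\top}Yg_i}$ for every $i$. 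Since the weights $c_i|t_i|$ are nonnegative, the triangle inequality then yields
\[ \Big|\sum_{i\in\Igood}c_i|t_i|\langle g_i,\vec{w}_i\rangle\Big| \le \sum_{i\in\Igood}c_i|t_i|\sqrt{g_i^{\top}Yg_i}. \]

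Next I would apply Cauchy--Schwarz over the index set $\Igood$, splitting each summand as $\sqrt{c_i|t_i|}\cdot\sqrt{c_i|t_i|\,g_i^{\top}Yg_i}$, producing the product of a total-weight factor $\big(\sum_{i\in\Igood}c_i|t_i|\big)^{1/2}$ and a spectral factor $\big(\sum_{i\in\Igood}c_i|t_i|\,g_i^{\top}Yg_i\big)^{1/2}$. The total-weight factor is bounded using $c_i\le1$ together with the fact that the good terms carry total weight of order $\mu N$, giving $\big(\sum_{i\in\Igood}c_i|t_i|\big)^{1/2}\le\sqrt{\mu N}$. For the spectral factor I rewrite it in trace form, $\sum_{i\in\Igood}c_i|t_i|\,g_i^{\top}Yg_i = \tr\big(Y\sum_{i\in\Igood}c_i|t_i|\,g_ig_i^{\top}\big)\le\tr(Y)\big\|\sum_{i\in\Igood}c_i|t_i|\,g_ig_i^{\top}\big\|_{op}$, which is valid because $Y\succeq0$.

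The crux of the argument, and the point where the weighted setting departs from the unit-weight analysis of \citet{lud}, is bounding this weighted spectral term back to the definition of $S$. Since $c_i\le1$ and each $g_ig_i^{\top}\succeq0$, I have $\sum_{i\in\Igood}c_i|t_i|\,g_ig_i^{\top}\preceq(\max_{i\in\Igood}|t_i|)\sum_{i\in\Igood}g_ig_i^{\top}$, so operator-norm monotonicity gives $\big\|\sum_{i\in\Igood}c_i|t_i|\,g_ig_i^{\top}\big\|_{op}\le(\max_{i\in\Igood}|t_i|)\big\|\sum_{i\in\Igood}g_ig_i^{\top}\big\|_{op}$. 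The unweighted matrix $\sum_{i\in\Igood}g_ig_i^{\top}$ equals $M^{\top}M$ for the matrix $M$ whose rows are the $g_i$, hence $\big\|\sum_{i\in\Igood}g_ig_i^{\top}\big\|_{op}=\|M\|_{op}^2\le\Mgood S^2$ directly from the definition of $S$. Using $\max_{i\in\Igood}|t_i|\le\sum_{i\in\Igood}|t_i|=O(\mu N)$ and assembling the two factors produces a bound of order $\sqrt{\mu N}\cdot S\sqrt{\tr(Y)\cdot\mu N\cdot\Mgood}=\mu N\sqrt{\Mgood}\,S\sqrt{\tr(Y)}$, which is within the claimed $\mu N\Mgood S\sqrt{\tr(Y)}$.

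The step I expect to be the real obstacle is the weight bookkeeping in the last paragraph: the extra factor of $\Mgood$ over Charikar et al.\ traces precisely to replacing unit-weight points by weighted terms, and the bound relies on the good terms having weights that both sum to $O(\mu N)$ and are individually $O(\mu N)$. Because the duplication preprocessing can skew the empirical mass of $\Igood$ by up to a factor of $\Mgood$, I would want to verify carefully that these two weight estimates survive that reduction (possibly absorbing the skew into the constant or the $\Mgood$ factor), so that the spectral comparison to $S$ and the total-mass bound hold simultaneously. The remaining steps are routine PSD/operator-norm inequalities.
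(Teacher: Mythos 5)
Your proof is correct, and it reaches the stated bound by a genuinely different route from the paper. The paper writes the sum as $\tr(F^{\top}\,diag(c_i|t_i|)\,W)$ and applies H\"older's inequality for the operator/nuclear norm dual pair, bounding $\norm{F}_{op}\leq\sqrt{\Mgood}\,S$ directly from the definition of $S$ and $\norm{W}_*\leq\sqrt{\Mgood\tr(Y)}$ via Lemma 3.1 of \citet{lud}; the product of the two $\sqrt{\Mgood}$ factors is where the $\Mgood$ in the statement comes from. You instead use the ellipsoid constraint pointwise to get $|\langle g_i,\vec{w}_i\rangle|\leq\sqrt{g_i^{\top}Yg_i}$, then scalar Cauchy--Schwarz over $i$, then the trace inequality $\tr(YA)\leq\tr(Y)\norm{A}_{op}$ for PSD $A$, and finally $\norm{\sum_i g_ig_i^{\top}}_{op}=\norm{M}_{op}^2\leq\Mgood S^2$. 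This is more elementary (it never needs the nuclear norm or the external Lemma 3.1 --- though your pointwise step is essentially how that lemma is proved) and it actually yields the slightly sharper bound $\mu N\sqrt{\Mgood}\,S\sqrt{\tr(Y)}$, which of course implies the stated one since $\Mgood\geq 1$. Both arguments lean on the same weight bookkeeping, namely $\max_{i\in\Igood}|t_i|\leq\sum_{i\in\Igood}c_i|t_i|\leq \Oo(\mu N)$ (the paper asserts $\Ngood\leq\mu N$ at the corresponding step), so the caveat you raise about the duplication preprocessing applies equally to the paper's proof and is not an additional obstacle for yours.
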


\begin{proof-of-lemma}{\ref{4.6}}
Let F be the matrix whose $i^{th}$ row is $\big(\nabla f_i(\vec{w}_0) - \nabla \bar{f}(\vec{w}_0)\big)$,
 and let W be the matrix whose $i^{th}$ row is $\vec{w}_i$. 
We consider only the rows $i \in I_{good}$, so the dimension of each matrix is $\Mgood \times d$. We have
$$\Big|\sum_{i\in I_{good}} c_i |t_i|\langle\nabla f_i(\vec{w}_0) - \nabla \bar{f}(\vec{w}_0)\rangle \Big|
= \tr(F^{\top}diag(|t_i|c_i)W)$$
$$\leq \norm{diag(|t_i|) diag(c_i) F}_{op}\norm{W}_*$$ 
 by H\H{o}lder's inequality. We can bound each part:
 \begin{align*}
     \norm{diag(t_i)}_{op}&\leq \underset{t_i \in I_{good}}{\max} |t_i| \leq \Ngood \leq \mu N \\
     \norm{diag(c)}_{op}&\leq 1 \textnormal{ since } c\in [0,1] \\
     \norm{F}_{op} &\leq \sqrt{\Mgood} S \textnormal{, by the definition of S }\\
     \norm{W}_* &\leq \sqrt{\Mgood \tr(Y)} \textnormal{, by Lemma 3.1 of \cite{lud} }
 \end{align*}
Combining these, we see that $\norm{diag(|t_i|c)F}_{op}\norm{W}_*$ is bounded by $\mu \Mgood N  S \sqrt{\tr(Y)}$.
\end{proof-of-lemma}

Lemma \ref{4.2} bounds the difference between $f_i(\hat{\vec{w}}_i) $ and $f_i(\vec{w}^*)$, based on the optimality of our solution to SDP  \ref{SDP} in Algorithm \ref{algo1}. Its proof follows identically to Lemma 4.2 of \cite{lud}.
\begin{lemma}[c.f.~Lemma 4.2 of \cite{lud}]\label{4.2}
The solution $\hat{\vec{w}}_{1:m}$ to the SDP in Algorithm \ref{algo1} satisfies: 
\begin{equation}
\sum_{i\in I_{good}} 
c_i |t_i| (f_i(\hat{\vec{w}}_i)-f_i(\vec{w}^*)) \leq \lambda \norm{\vec{w}^*}^2_2.
\end{equation}
\end{lemma}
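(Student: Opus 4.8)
The plan is to exploit the optimality of the SDP solution $(\hat{\vec{w}}_{1:m},\hat{Y})$ by exhibiting a carefully chosen feasible point whose objective value we can compare against. Since the SDP in Algorithm~\ref{algo1} minimizes $\sum_{i=1}^m c_i|t_i| f_i(\vec{w}_i)+\lambda\tr(Y)$ over all $(\vec{w}_{1:m},Y)$ satisfying $\vec{w}_i\vec{w}_i^\top\preceq Y$, any feasible competitor yields an upper bound on the optimal objective, and hence a relation between the values $f_i(\hat{\vec{w}}_i)$ and $\tr(\hat{Y})$. All of the work is in choosing the competitor so that this relation localizes to $\Igood$.

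The competitor I would use is a hybrid: for the good terms $i\in\Igood$ I set $\vec{w}_i=\vec{w}^*$, while for the remaining terms I keep $\vec{w}_i=\hat{\vec{w}}_i$ unchanged. To house these vectors in a common ellipsoid I take $Y'=\hat{Y}+\vec{w}^*{\vec{w}^*}^\top$. Feasibility is immediate: for the good terms $\vec{w}^*{\vec{w}^*}^\top\preceq Y'$ because $\hat{Y}\succeq 0$, and for the other terms $\hat{\vec{w}}_i\hat{\vec{w}}_i^\top\preceq\hat{Y}\preceq Y'$ because $\vec{w}^*{\vec{w}^*}^\top\succeq 0$. The key accounting point is that $\tr(Y')=\tr(\hat{Y})+\|\vec{w}^*\|_2^2$, so replacing $\hat{Y}$ by $Y'$ inflates the regularizer by exactly $\lambda\|\vec{w}^*\|_2^2$.

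Plugging this competitor into the optimality inequality, the $\lambda\tr(\hat{Y})$ contributions cancel on both sides, leaving only the extra $\lambda\|\vec{w}^*\|_2^2$; and because the hybrid leaves every bad term's parameter equal to $\hat{\vec{w}}_i$, the sums $\sum_{i\notin\Igood}c_i|t_i|f_i(\hat{\vec{w}}_i)$ cancel as well. What survives is $\sum_{i\in\Igood}c_i|t_i|f_i(\hat{\vec{w}}_i)\leq\sum_{i\in\Igood}c_i|t_i|f_i(\vec{w}^*)+\lambda\|\vec{w}^*\|_2^2$, which is precisely the claimed bound after moving $\sum_{i\in\Igood}c_i|t_i|f_i(\vec{w}^*)$ to the left.

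The step I expect to be the crux is exactly the restriction of the sum to $\Igood$. A more naive competitor --- setting every $\vec{w}_i=\vec{w}^*$ with $Y=\vec{w}^*{\vec{w}^*}^\top$ --- only bounds the full sum $\sum_{i=1}^m c_i|t_i|(f_i(\hat{\vec{w}}_i)-f_i(\vec{w}^*))$, and one cannot simply discard the bad terms afterward: for $i\notin\Igood$ the loss $f_i(\vec{w}^*)$ may be enormous while $f_i(\hat{\vec{w}}_i)$ is small, making those summands very negative, so subtracting them pushes the inequality in the wrong direction. The hybrid construction sidesteps this entirely by modifying only the good coordinates, so that no sign control over the bad terms is ever required; this is the only place where the argument departs from a one-line appeal to SDP optimality.
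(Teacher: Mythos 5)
Your proof is correct and is essentially the same argument as the one the paper defers to (Lemma 4.2 of Charikar et al.): the hybrid competitor with $\vec{w}_i=\vec{w}^*$ on $\Igood$, $\vec{w}_i=\hat{\vec{w}}_i$ elsewhere, and $Y'=\hat{Y}+\vec{w}^*{\vec{w}^*}^\top$ is exactly the feasible point used there, and the cancellation you describe yields the stated bound. Your closing remark about why the naive all-$\vec{w}^*$ competitor fails is also the right reason the hybrid is needed.
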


Lemma \ref{4.3} bounds the difference between $f_i(\hat{\vec{w}}_{avg})$ and  $f_i(\hat{\vec{w}}_i) $. Its proof likewise identically follows Lemma 4.3 of \cite{lud}:
\begin{lemma}[c.f.~Lemma 4.3 of \cite{lud}]\label{4.3} 
Let $\hat{\vec{w}}_{avg} := ( \sum_{i \in I_{good}} c_i |t_i| \hat{\vec{w}}_i )/ (  \sum_{i \in I_{good}} c_i |t_i|) $. The solution $\hat{\vec{w}}_{1:m} $, $ \hat{Y} $ to Algorithm 1 satisfies
\begin{align*}
&\sum_{i\in I_{good}}  c_i |t_i| \Big( f_i(\hat{\vec{w}}_{avg})-f_i(\hat{\vec{w}}_i) \Big) 
\leq \mu N \Mgood S \Big( \sqrt{\tr(\hat{Y})} + r \Big),\\
&\sum_{i\in I_{good}}  c_i |t_i| \Big( \bar{f}_i(\hat{\vec{w}}_{avg})-\bar{f}(\vec{w}^*) \Big)
\leq \sum_{i\in I_{good}}  c_i |t_i|  \Big( f_i(\hat{\vec{w}}_{avg}) - f_i(\vec{w}^*) \Big) + 2\mu N \Mgood r S.
\end{align*}
\end{lemma}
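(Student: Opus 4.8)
The plan is to follow the structure of Lemma 4.3 of \cite{lud}, adapting the unit-weight point argument to our weighted-term setting; both inequalities are consequences of the basic spectral bound of Lemma~\ref{4.6} combined with convexity of the $f_i$. The two prefactors $\mu N$ and $\Mgood$ appearing throughout are exactly the weight bound $|t_i|\le\mu N$ and the $\sqrt{\Mgood}$ factors built into the definition of $S$ and into $\norm{W}_*$, so they propagate automatically once Lemma~\ref{4.6} is invoked.

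For the first inequality, I would start from convexity of each $f_i$, which gives $f_i(\Wavg)-f_i(\hat{\vec{w}}_i)\le\langle\nabla f_i(\Wavg),\Wavg-\hat{\vec{w}}_i\rangle$, and sum against the weights $c_i|t_i|$. The key cancellation is that $\sum_{i\in\Igood}c_i|t_i|(\Wavg-\hat{\vec{w}}_i)=0$ by the very definition of $\Wavg$, so I may freely subtract $\nabla\bar f(\Wavg)$ from each $\nabla f_i(\Wavg)$ without changing the sum. Writing the result as $\sum c_i|t_i|\langle\nabla f_i(\Wavg)-\nabla\bar f(\Wavg),\Wavg\rangle-\sum c_i|t_i|\langle\nabla f_i(\Wavg)-\nabla\bar f(\Wavg),\hat{\vec{w}}_i\rangle$, I apply Lemma~\ref{4.6} twice. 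In the second sum the vectors $\hat{\vec{w}}_i$ satisfy the SDP constraint $\hat{\vec{w}}_i\hat{\vec{w}}_i^\top\preceq\hat Y$, so Lemma~\ref{4.6} yields the term $\mu N\Mgood S\sqrt{\tr(\hat Y)}$; in the first sum every ``$\vec{w}_i$'' equals the single vector $\Wavg$, so I take $Y=\Wavg\Wavg^\top$ with $\tr(Y)=\norm{\Wavg}_2^2\le r^2$ (since $\Wavg$ is a convex combination of the $\hat{\vec{w}}_i\in\Hh$), giving the term $\mu N\Mgood S r$. Summing the two bounds gives exactly $\mu N\Mgood S(\sqrt{\tr(\hat Y)}+r)$.

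For the second inequality I would control the empirical-to-population gap by integrating the gradient difference along the segment from $\Wstar$ to $\Wavg$. Setting $\vec{w}(\tau)=\Wstar+\tau(\Wavg-\Wstar)$, the fundamental theorem of calculus gives $\big(f_i(\Wavg)-f_i(\Wstar)\big)-\big(\bar f(\Wavg)-\bar f(\Wstar)\big)=\int_0^1\langle\nabla f_i(\vec{w}(\tau))-\nabla\bar f(\vec{w}(\tau)),\Wavg-\Wstar\rangle\,\dd\tau$; for our quadratic losses the integrand is affine in $\tau$, so equivalently one may just evaluate at the midpoint. Summing against $c_i|t_i|$ and exchanging sum and integral, for each fixed $\tau$ I apply Lemma~\ref{4.6} with the common vector $\vec{w}_i=\Wavg-\Wstar$, taking $Y=(\Wavg-\Wstar)(\Wavg-\Wstar)^\top$ so that $\sqrt{\tr(Y)}=\norm{\Wavg-\Wstar}_2\le 2r$. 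This bounds the integrand in absolute value by $2\mu N\Mgood r S$ uniformly in $\tau$, so the integral obeys the same bound; rearranging yields the claimed inequality.

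The step I expect to be the main obstacle is the uniform, pointwise use of Lemma~\ref{4.6} inside the integral: Lemma~\ref{4.6} is stated for a fixed evaluation point $\vec{w}$, but the segment point $\vec{w}(\tau)$ moves. This is legitimate precisely because $S$ is defined as a supremum of the gradient spectral norm over all $\vec{w}\in\Hh$, so the bound of Lemma~\ref{4.6} holds with the same $S$ for every $\vec{w}(\tau)\in\Hh$ (using convexity of $\Hh$). The only other point needing care is justifying $\norm{\Wavg}_2\le r$ and $\norm{\Wavg-\Wstar}_2\le 2r$, which follow from $\Wavg$ being a convex combination of parameters in the radius-$r$ domain $\Hh$; everything else is the bookkeeping of the weight factors already packaged into Lemma~\ref{4.6}.
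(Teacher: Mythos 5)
Your proof is correct and takes essentially the same route as the paper, which writes out no argument for this lemma and instead defers to Lemma 4.3 of \cite{lud}: your first part (convexity, the centering identity $\sum_{i\in\Igood}c_i|t_i|(\hat{\vec{w}}_{avg}-\hat{\vec{w}}_i)=0$, and two applications of Lemma~\ref{4.6} with $Y=\hat{Y}$ and $Y=\hat{\vec{w}}_{avg}\hat{\vec{w}}_{avg}^{\top}$) and your second part (integrating the gradient gap along the segment and bounding it uniformly via Lemma~\ref{4.6}, using that $S$ is a supremum over $\Hh$) are precisely that argument transplanted to the weighted-term setting. The only cosmetic remark is that $\bar{f}_i$ in the lemma statement is a typo for $\bar{f}$, which you have read in the intended way.
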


We next consider an analogue of Lemma 4.4 of \cite{lud}.
To deal with the different weights of terms, our Algorithm \ref{UpdateWeights} considers the neighbors with their weights, and therefore uses different definitions of $a$ and $W$ (from those of \cite{lud}) in the analysis.
Lemma \ref{4.4} bounds  $\tr(Y)$ and the difference between $f_i(\tilde{\vec{w}}_i)$ and  $f_i(\hat{\vec{w}}_i) $.
\begin{lemma}\label{4.4}
For $\tilde{\vec{w}}_i$ as obtained in Algorithm \ref{UpdateWeights}, 
 $ \tilde{Y}:= \frac{2}{\mu N}\hat{W}\hat{W}^{\top}$, 
 and $W:=[\sqrt{|t_1|} \vec{w}_1, \dots ,\sqrt{|t_m|} \vec{w}_m] $.\\
 we have \[ \tilde{\vec{w}}_i\tilde{\vec{w}}_i^{\top} \preceq \tilde{Y}\] for all $i$,
 and also \[\tr(\tilde{Y}) \leq \frac{2r^2}{\mu}\]
 In addition:
\[
 \tr(\hat{Y}) \leq \frac{2r^2}{\mu} + 
 \frac{1}{\lambda}  \Big( \sum^m c_i |t_i| \big( f_i(\tilde{\vec{w}}_i) - f_i(\hat{\vec{w}}_i) \big)  \Big)
\]
\end{lemma}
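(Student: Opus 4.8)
The plan is to establish the three assertions in order, treating the final trace bound as a consequence of the first two together with the optimality of the SDP~\ref{SDP} in Algorithm~\ref{algo1}. The organizing observation is that the pair $(\tilde{\vec{w}}_{1:m},\tilde{Y})$ is a \emph{feasible} point of SDP~\ref{SDP}: claim 1, the semidefinite domination $\tilde{\vec{w}}_i\tilde{\vec{w}}_i^{\top}\preceq\tilde{Y}$, is exactly the SDP's constraint, and claim 2 controls the value of its regularizer $\tr(\tilde{Y})$. Here I read $\hat{W}$ as the hatted instance of the displayed matrix, i.e.\ the $d\times m$ matrix with columns $\sqrt{|t_j|}\,\hat{\vec{w}}_j$, so that $\hat{W}\hat{W}^{\top}=\sum_{j}|t_j|\,\hat{\vec{w}}_j\hat{\vec{w}}_j^{\top}$ and $\tilde{Y}=\frac{2}{\mu N}\sum_j |t_j|\,\hat{\vec{w}}_j\hat{\vec{w}}_j^{\top}$.

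For claim 2 I would simply expand $\tr(\tilde{Y})=\frac{2}{\mu N}\tr(\hat{W}\hat{W}^{\top})=\frac{2}{\mu N}\sum_{j=1}^m |t_j|\,\|\hat{\vec{w}}_j\|_2^2$, then bound $\|\hat{\vec{w}}_j\|_2\leq r$ (each parameter lies in $\Hh$, of radius $r$) and use that the disjoint terms obey $\sum_j |t_j|\leq N$; this gives $\tr(\tilde{Y})\leq \frac{2}{\mu N}\cdot r^2\cdot N=\frac{2r^2}{\mu}$. For claim 1 the essential inputs are the constraints of Algorithm~\ref{UpdateWeights}: $\sum_j a_{ij}=1$ and the box bound $0\leq a_{ij}\leq\frac{2}{\mu N}|t_j|$. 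Fixing an arbitrary test vector $\vec{u}$ and writing $s_j:=\vec{u}^{\top}\hat{\vec{w}}_j$, I apply Cauchy--Schwarz, $\big(\sum_j a_{ij}s_j\big)^2\leq\big(\sum_j a_{ij}\big)\big(\sum_j a_{ij}s_j^2\big)=\sum_j a_{ij}s_j^2$, and then replace each coefficient by its upper bound $a_{ij}\leq\frac{2}{\mu N}|t_j|$ to obtain $\sum_j a_{ij}s_j^2\leq\frac{2}{\mu N}\sum_j |t_j|\,s_j^2=\vec{u}^{\top}\tilde{Y}\vec{u}$. Since $\big(\vec{u}^{\top}\tilde{\vec{w}}_i\big)^2=\vec{u}^{\top}\tilde{\vec{w}}_i\tilde{\vec{w}}_i^{\top}\vec{u}$ and $\vec{u}$ is arbitrary, this is precisely the claimed PSD ordering.

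Finally, claim 3 follows from a single optimality comparison: since $(\tilde{\vec{w}}_{1:m},\tilde{Y})$ is feasible and $(\hat{\vec{w}}_{1:m},\hat{Y})$ is optimal, the objective satisfies $\sum_i c_i|t_i|f_i(\hat{\vec{w}}_i)+\lambda\tr(\hat{Y})\leq\sum_i c_i|t_i|f_i(\tilde{\vec{w}}_i)+\lambda\tr(\tilde{Y})$; rearranging, dividing by $\lambda$, and substituting $\tr(\tilde{Y})\leq\frac{2r^2}{\mu}$ from claim 2 yields the stated bound on $\tr(\hat{Y})$. The main obstacle---indeed the only genuinely non-routine step---is claim 1, the weighted PSD domination, which is exactly where the generalization from unit-weight points (Charikar et al.) to weighted terms must be engineered: the per-coordinate budget $\frac{2}{\mu N}|t_j|$ on the mixing coefficients is calibrated so that the weights $|t_j|$ produced by the Cauchy--Schwarz estimate line up with those defining $\tilde{Y}$, letting the bound telescope to $\tilde{Y}$ rather than to a strictly larger matrix. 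The point I would most want to verify is that the mass used in the SDP constraints ($\sum_j|t_j|\leq N$, with $N$ the post-duplication count) is consistent with the $N$ appearing in $\frac{2}{\mu N}$, since any constant mismatch there would propagate into both the $\frac{2r^2}{\mu}$ bound and the feasibility of the neighbor program.
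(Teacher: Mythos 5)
Your proposal is correct and follows essentially the same route as the paper: Jensen/Cauchy--Schwarz for the PSD domination $\tilde{\vec{w}}_i\tilde{\vec{w}}_i^{\top}\preceq\sum_j a_{ij}\hat{\vec{w}}_j\hat{\vec{w}}_j^{\top}$ followed by the box bound $a_{ij}\leq\frac{2}{\mu N}|t_j|$, the trace bound via $\sum_j|t_j|\leq N$ and $\|\hat{\vec{w}}_j\|\leq r$, and the final inequality by comparing the SDP objective at the optimal $(\hat{\vec{w}}_{1:m},\hat{Y})$ against the feasible $(\tilde{\vec{w}}_{1:m},\tilde{Y})$. The only cosmetic difference is that you verify the PSD ordering through an arbitrary test vector $\vec{u}$ rather than stating the matrix Jensen inequality directly, which is the same argument.
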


\begin{proof-of-lemma}{\ref{4.4}}
Let $\tilde{\vec{w}}_i = \sum_{j=1}^m a_{ij}\hat{\vec{w}}_j$ as defined in Algorithm \ref{UpdateWeights}. 
First, we want to show $\tilde{\vec{w}}_i\tilde{\vec{w}}_i^{\top} \preceq \tilde{Y}$:
\begin{align*}
    \tilde{\vec{w}}_i\tilde{\vec{w}}_i^{\top} 
    &= \Big(\sum_{j=1}^n a_{ij}\hat{\vec{w}}_j \Big)\Big(\sum_{j=1}^n a_{ij}\hat{\vec{w}}_j \Big)^{\top}\\
    &\preceq \sum_{j=1}^n a_{ij}\hat{\vec{w}}_j \hat{\vec{w}}_j^{\top}\\
     &\preceq \sum_{j=1}^n \frac{2}{\mu N}|t_j|\hat{\vec{w}}_j \hat{\vec{w}}_j^{\top}\\
     &= \frac{2}{\mu N} \sum_{j=1}^n |t_j|\hat{\vec{w}}_j \hat{\vec{w}}_j^{\top}\\
     &= \tilde{Y}
\end{align*}
and 
\begin{align*}
\tr(\tilde{Y}) &= \frac{2}{\mu N}\tr(\hat{W}\hat{W}^{\top})\\ 
& =  \frac{2}{\mu N} \tr(diag([|t_i|])) \|\vec{w}\| \\
& \leq \frac{2}{\mu N} \sum_{i=1}^m |t_i| r^2\\
& \leq \frac{2r^2}{\mu}. 
\end{align*}
For the third claim, since $(\Wh_{1:m},\hat{Y})$ is the optimal solution of the SDP in Algorithm \ref{algo1} and $(\tilde{\vec{w}}_{1:m},\tilde{Y})$  is a feasible solution of that, we have 
\[\sum^m_{i=1}c_i |t_i|f_i(\hat{\vec{w}}_i) + \lambda \tr(\hat{Y})
\leq \sum^m_{i=1}c_i |t_i|f_i({\tilde{\vec{w}}}_i) + \lambda tr(\tilde{Y})\]
This gives us
\[
 \tr(\hat{Y}) \leq \frac{2r^2}{\mu} + 
 \frac{1}{\lambda}  \Big( \sum_{i=1}^m c_i |t_i| \big( f_i(\tilde{\vec{w}}_i) - f_i(\hat{\vec{w}}_i) \big)  \Big).
\]
\end{proof-of-lemma}

Then, analogous to Corollary 4.4 of \cite{lud}, we show $\Wavg$ can be viewed as a feasible solution to SDP in Algorithm \ref{UpdateWeights}, so we can bound $(f_i(\tilde{\vec{w}}_i) - f_i(\Wh))$ by $(f_i(\Wavg) - f_i(\Wh))$.
\begin{corollary}\label{4.4cor}
If $ \sum_{i\in I_{good}} c_i |t_i| \geq \frac{\mu N}{2} $, then
\[
    \sum_{i\in I_{good}} c_i |t_i| \big( f_i(\tilde{\vec{w}}_i) - f_i(\hat{\vec{w}}_i) \big)
    \leq \mu N \Mgood \Big( \sqrt{\tr(\hat{Y})} + r \Big)
\]
\end{corollary}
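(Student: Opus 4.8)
The plan is to show that the weighted good-term average $\Wavg$ is itself a \emph{feasible} point of the per-term minimization solved in Algorithm~\ref{UpdateWeights}, so that the optimality of $\tilde{\vec{w}}_i$ forces $f_i(\tilde{\vec{w}}_i)\le f_i(\Wavg)$ for every $i$. Summing the gap $f_i(\tilde{\vec{w}}_i)-f_i(\hat{\vec{w}}_i)$ against the weights $c_i|t_i|$ then reduces the corollary to the first inequality of Lemma~\ref{4.3}, which already controls $\sum_{i\in\Igood} c_i|t_i|(f_i(\Wavg)-f_i(\hat{\vec{w}}_i))$.

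First I would construct the certificate of feasibility. Recall that for term $i$ the inner SDP asks for coefficients $a_{ij}\ge 0$ with $\sum_j a_{ij}=1$ and $a_{ij}\le \frac{2}{\mu N}|t_j|$, defining $\tilde{\vec{w}}_i=\sum_j a_{ij}\hat{\vec{w}}_j$ and minimizing $f_i(\tilde{\vec{w}}_i)$. I would set $a_{ij}=\frac{c_j|t_j|}{\sum_{k\in\Igood}c_k|t_k|}$ for $j\in\Igood$ and $a_{ij}=0$ otherwise, so that $\sum_j a_{ij}\hat{\vec{w}}_j=\Wavg$ exactly. The normalization $\sum_j a_{ij}=1$ and nonnegativity are then immediate.

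The key step, and the only place the hypothesis enters, is the upper-bound constraint. Using $c_j\le 1$ together with the assumption $\sum_{k\in\Igood}c_k|t_k|\ge\frac{\mu N}{2}$, we obtain $a_{ij}=\frac{c_j|t_j|}{\sum_{k\in\Igood}c_k|t_k|}\le\frac{|t_j|}{\mu N/2}=\frac{2}{\mu N}|t_j|$, so the proposed coefficients are feasible. Hence $\Wavg$ is attainable as a candidate $\tilde{\vec{w}}_i$, and by optimality $f_i(\tilde{\vec{w}}_i)\le f_i(\Wavg)$, giving the termwise bound $f_i(\tilde{\vec{w}}_i)-f_i(\hat{\vec{w}}_i)\le f_i(\Wavg)-f_i(\hat{\vec{w}}_i)$.

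Finally I would take the weighted sum over $i\in\Igood$ and invoke the first inequality of Lemma~\ref{4.3}:
\[
\sum_{i\in\Igood}c_i|t_i|\big(f_i(\tilde{\vec{w}}_i)-f_i(\hat{\vec{w}}_i)\big)\le\sum_{i\in\Igood}c_i|t_i|\big(f_i(\Wavg)-f_i(\hat{\vec{w}}_i)\big)\le\mu N\Mgood\Ss\big(\sqrt{\tr(\hat{Y})}+r\big),
\]
which is the claimed bound, with the spectral factor $\Ss$ inherited from Lemma~\ref{4.3} (the statement as printed appears to have dropped this $\Ss$). The main obstacle is essentially the feasibility verification itself: everything hinges on the per-neighbor cap $a_{ij}\le\frac{2}{\mu N}|t_j|$ being satisfiable by the weights defining $\Wavg$, which is exactly why the hypothesis $\sum_{i\in\Igood}c_i|t_i|\ge\frac{\mu N}{2}$ is needed. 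If too much good weight had already been removed, $\Wavg$ would no longer be expressible within the allowed caps, and the comparison to Lemma~\ref{4.3} would break down.
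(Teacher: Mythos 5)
Your proof is correct and follows essentially the same route as the paper's: exhibit $\Wavg$ as a feasible point of the inner program via $a_{ij}=c_j|t_j|/\sum_{k\in\Igood}c_k|t_k|$, verify the cap $a_{ij}\le\frac{2}{\mu N}|t_j|$ using the hypothesis, and then invoke the first inequality of Lemma~\ref{4.3}. You are also right that the factor $\Ss$ was dropped from the corollary's displayed bound; the paper's own proof concludes with $\mu N\Mgood S\bigl(\sqrt{\tr(\hat{Y})}+r\bigr)$.
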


\begin{proofof}{Corollary \ref{4.4cor}}
First, we show that $\hat{\vec{w}}_{avg}$ is a feasible solution for the semidefinite program for $\tilde{\vec{w}}$ in Algorithm 2.\\
By taking $ a_{ij} = \frac{c_j |t_j|}{\sum_{j'\in I_{good}} c_{j'}|t_{j'}| }$ for $j\in I_{good}$ and $0$ otherwise, 
we get $a_{ij} \leq \frac{2|t_j|}{\mu N}$ since $\sum_{j'\in I_{good}} c_{j'}|t_{j'}| \geq \frac{\mu N}{2}$.
We see
\begin{align*}
\hat{\vec{w}}_{avg} &= \frac{\sum_{j \in I_{good}} c_j |t_j| \hat{\vec{w}}_j}{\sum_{j'\in I_{good}} c_{j'}|t_{j'}|} 
 = \sum_{j=1}^{N} a_{ij} \hat{\vec{w}}_j
\end{align*}
Then by optimality,
\[ \sum_{i\in \Igood} c_i|t_i|(f_i(\tilde{\vec{w}}_i) - f_i(\Wh)) \leq \sum_{i\in \Igood} c_i|t_i|(f_i(\Wavg) - f_i(\Wh)) \]
which is bounded by $\mu N \Mgood S \Big( \sqrt{\tr(\hat{Y})} + r \Big)$ by Lemma \ref{4.3}.
\end{proofof}

Lemma \ref{4.5} shows $\sum_{I_{good}} c_i|t_i|$ is large enough. In other words, Algorithm \ref{UpdateWeights} will not down weight good terms too much.  Its proof follows identically to Lemma 4.5 of \cite{lud}.
\begin{lemma}[c.f.~Lemma 4.5 of \cite{lud}]\label{4.5}
Suppose that $\frac{1}{N}\sum\limits_{i=1}^m c_i|t_i| (f_i( \tilde{\vec{w}}_i)-f_i(\hat{\vec{w}}_i)) \geq \frac{2}{\mu N}\sum_{i \in I_{good}} c_i|t_i| (f_i( \tilde{\vec{w}}_i)-f_i( \hat{\vec{w}}_i))$ Then, the
update step in Algorithm 2 satisfies 
\begin{equation}\frac{1}{\mu N}
\sum\limits_{i\in I_{good}}
|t_i| (c_i - c'_i) \leq \frac{1}{2N}
\sum\limits_{i=1}^m |t_i|(c_i - c'_i)
\end{equation}
Moreover, the above supposition holds if $\lambda = \frac{\sqrt{8\mu} N\Mgood S}{r}$
and $\tr(\hat{Y}) > \frac{6r^2}{2\mu}$.
\end{lemma}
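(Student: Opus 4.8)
The plan is to handle the two assertions separately, since the first is essentially bookkeeping about the update rule while the second requires combining the earlier structural lemmas.

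For the first (conditional) claim, I would unwind the weight update in Algorithm~\ref{UpdateWeights}. Writing $z_i := f_i(\tilde{\vec{w}}_i) - f_i(\hat{\vec{w}}_i)$, the update $c'_i = c_i \frac{z_{max}-z_i}{z_{max}}$ gives the one-line identity $c_i - c'_i = \frac{c_i z_i}{z_{max}}$. Substituting this into both sides of the target inequality $\frac{1}{\mu N}\sum_{i\in\Igood}|t_i|(c_i-c'_i) \le \frac{1}{2N}\sum_{i=1}^m |t_i|(c_i-c'_i)$, the common factor $\frac{1}{z_{max}}$ cancels and the claim collapses to $\frac{1}{\mu}\sum_{i\in\Igood} c_i|t_i| z_i \le \frac12 \sum_{i=1}^m c_i|t_i| z_i$, which is exactly the hypothesized supposition after multiplying through by $N/2$. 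So the first claim is pure algebra, provided $z_{max}>0$ so that the division is legitimate and preserves the inequality's direction.

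For the ``moreover'' part, I would establish the supposition by sandwiching the two sums. Upper-bound the good-term sum by Corollary~\ref{4.4cor}: $\sum_{i\in\Igood} c_i|t_i| z_i \le \mu N \Mgood S(\sqrt{\tr(\hat{Y})}+r)$, so the right-hand side of the supposition is at most $2\Mgood S(\sqrt{\tr(\hat{Y})}+r)$. Lower-bound the all-term sum by rearranging the third inequality of Lemma~\ref{4.4} into $\sum_{i=1}^m c_i|t_i| z_i \ge \lambda\big(\tr(\hat{Y}) - \tfrac{2r^2}{\mu}\big)$, so the left-hand side of the supposition is at least $\frac{\lambda}{N}\big(\tr(\hat{Y})-\tfrac{2r^2}{\mu}\big)$. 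Conveniently, this lower bound is strictly positive as soon as $\tr(\hat{Y}) > \tfrac{2r^2}{\mu}$, which simultaneously certifies $z_{max}>0$ and so closes the loose end left in the first part. It then remains to verify the scalar inequality $\frac{\lambda}{N}\big(\tr(\hat{Y})-\tfrac{2r^2}{\mu}\big) \ge 2\Mgood S(\sqrt{\tr(\hat{Y})}+r)$; substituting $\lambda = \frac{\sqrt{8\mu}N\Mgood S}{r}$ and cancelling the common factor $\Mgood S$, and writing $\tr(\hat{Y}) = \tfrac{r^2 s^2}{\mu}$, this reduces to $\sqrt2\,(s^2-2) \ge s + \sqrt{\mu}$, which (using $\mu\le 1$) holds once $s$ exceeds an absolute constant, i.e.\ once $\tr(\hat{Y})$ exceeds the update threshold of Algorithm~\ref{algo1}.

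I expect the main obstacle to be this last scalar step. The lower bound on the total sum grows \emph{linearly} in $\tr(\hat{Y})$, whereas the upper bound on the good-term sum grows only like $\sqrt{\tr(\hat{Y})}$, so the whole argument hinges on checking that the threshold on $\tr(\hat{Y})$ together with the prescribed $\lambda$ makes the linear term overtake the square-root term; the behaviour at the boundary is delicate, and a careful accounting suggests the relevant threshold is the update threshold $\tfrac{6r^2}{\mu}$ of Algorithm~\ref{algo1} (a larger constant than might first appear necessary). The only remaining care is the sign condition $z_{max}>0$, which, as noted, is guaranteed by the positive lower bound coming from Lemma~\ref{4.4} in precisely the regime where the update step is invoked.
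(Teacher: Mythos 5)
Your proof is correct and follows essentially the same route as the proof the paper defers to (Lemma 4.5 of Charikar et al.): the first claim is the algebraic identity $c_i - c'_i = c_i z_i / z_{max}$, after which the target inequality is just the supposition divided by $z_{max}>0$, and the supposition itself is verified by sandwiching the two sums between the lower bound from Lemma~\ref{4.4} and the upper bound from Corollary~\ref{4.4cor}. Your flag about the threshold is also well taken: the scalar inequality $\sqrt{2}(s^2-2)\geq s+\sqrt{\mu}$ indeed fails at $\tr(\hat{Y})=\frac{6r^2}{2\mu}$ (i.e.\ $s^2=3$) but holds at the algorithm's actual update threshold $\tr(\hat{Y})>\frac{6r^2}{\mu}$, so the ``$2\mu$'' in the denominator of the lemma statement is evidently a typo, and the version you prove is the one the analysis of Algorithm~\ref{algo1} actually uses.
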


Finally, we prove Theorem \ref{4.1}, which bounds the difference in the empirical loss of $\hat{\vec{w}}_{avg}$ and $\vec{w}^*$. 

\begin{proofof}{Theorem \ref{4.1}}
First, show the the weights of $\Igood$ will never be too small.
By  Lemma \ref{4.5}, the invariant $\sum_{i \in I_{good}} c_i |t_j| \geq \frac{\mu N}{2} + \frac{\mu}{2} \sum_{i=1}c_i |t_i|$ holds throughout the algorithm. Therefore we get $\sum_{i \in I_{good}} c_i |t_j| \geq \frac{\mu N}{2}  $. 
In particular, Algorithm \ref{algo1} will terminate, since Algorithm \ref{UpdateWeights} zeros out at least one outlier $c_i$ each time, and this can happen at most $m - \Mgood$ times before $\sum_{i \in I_{good}} c_i |t_i| $ would drop below $\frac{\mu N }{2}$, which we showed impossible.\\
Now, let $(\hat{\vec{w}}_{1:m}, \hat{Y})$ be the value returned by Algorithm \ref{algo1}.  By Lemma \ref{4.3} we then have
\begin{align*}
     \sum_{i \in I_{good}} c_i |t_i| (\bar{f}( \hat{\vec{w}}_{avg} ) - \bar{f}(\vec{w}^*) )
      &\leq \sum_{i \in I_{good}} c_i |t_i|  (\bar{f}( \hat{\vec{w}}_{avg} ) - \bar{f}(\vec{w}^*) )  + 2\mu N \Mgood S r\\
    & \leq \sum_{i \in I_{good}} c_i  |t_i| (\bar{f}( \hat{\vec{w}}_{i} ) - \bar{f}(\vec{w}^*) )  + 3\mu N \Mgood S  r + \sqrt{6\mu}N \Mgood S r . 
\end{align*}
By Lemma \ref{4.2}, we have $\sum_{i \in I_{good}} c_i|t_i|(f_i(\hat{\vec{w}}_i - f_i (\vec{w}^*)) \leq \lambda r^2 $ and, by the assumption we have $\tr(\hat{Y}) \leq \frac{6r^2}{\mu}$.  Plugging in $\lambda  = \frac{\sqrt{8\mu} N\Mgood S}{r}$, we get
\begin{align*}
     \sum_{i \in I_{good}} c_i |t_i| (\bar{f}( \hat{\vec{w}}_{avg} ) - \bar{f}(\vec{w}^*) )
     & \leq \lambda r^2 + 3\mu N \Mgood S r+ \sqrt{6\mu } N \Mgood S r \\
     &  = 3\mu N \Mgood S r + ( \sqrt{6} + \sqrt{8} ) \sqrt{\mu }N \Mgood S r\\
     & \leq 9 \sqrt{\mu} N \Mgood S r
\end{align*}
Since $ \sum_{i \in I_{good}} c_i |t_i|  \geq \frac{\mu N}{2}$, dividing through by $\sum_{i \in I_{good}} c_i |t_i|$ yields $\bar{f}(\hat{\vec{w}}_{avg}) - \bar{f}(\vec{w}^*) \leq 18\frac{\Mgood Sr}{\sqrt{\mu}}$.
\end{proofof}

\section{List-regression Algorithm}
We finally introduce the main algorithm to cluster the terms. Again following \citet{lud}, we initially use Algorithm \ref{algo1} to assign a candidate set of parameters $\hat{\vec{w}}_i$ for each term. 
In each iteration, we use Padded Decompositions to cluster the terms by their parameters, and then reuse Algorithm \ref{algo1} on each cluster. After each iteration, we can decrease the radius of the ellipse containing the parameters chosen by terms in $\Igood$ by half. Eventually, the algorithm will be able to constrain the parameters for all of the good terms in a very small ellipse, as illustrated in Figure \ref{pic3}.
The algorithm will then output a list of candidate parameters, with one of them approximating $\vec{w}^*$.

\begin{figure}[h]
\includegraphics[width=15cm]{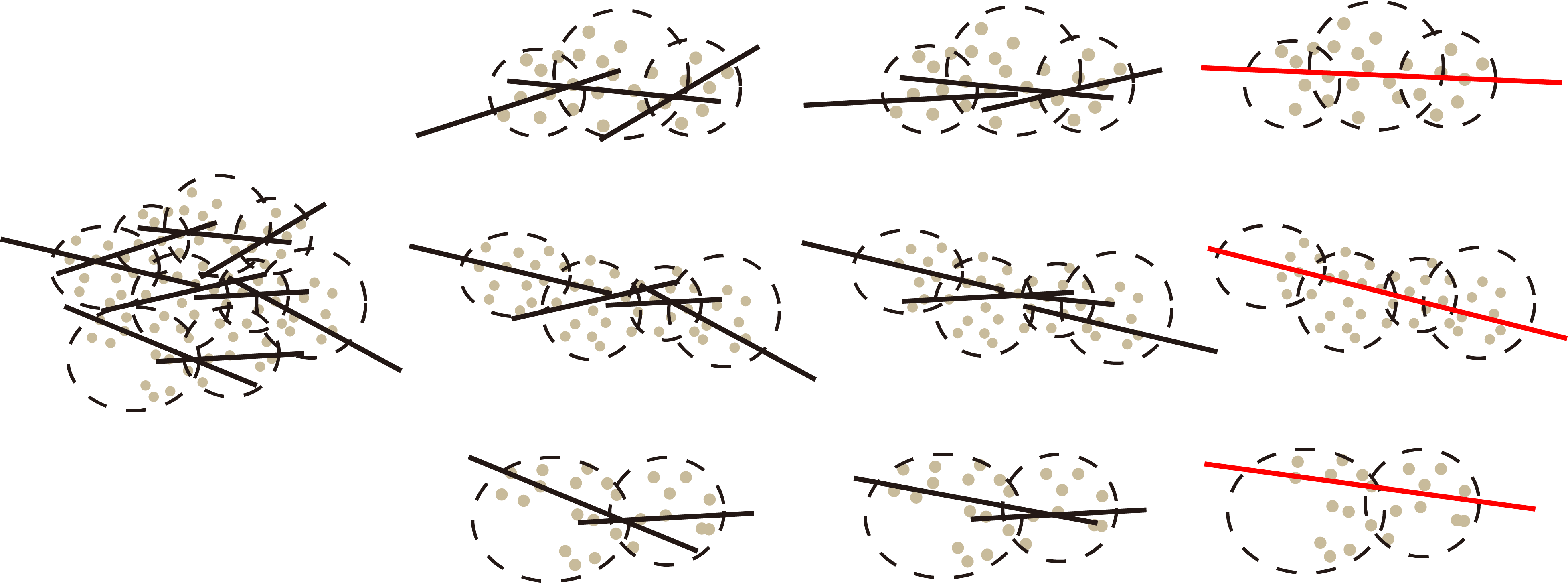}
\centering
\caption{Algorithm \ref{algo4} }\label{pic3}
\small{
\textbf{1:} Run Algorithm \ref{algo1}. Get a $\Wh_i$ for each term.
\textbf{2:} Cluster the terms by their parameter $\Wh_i$.
\textbf{3:} Iteratively re-run Algorithm \ref{algo1} on each cluster and re-cluster the terms, so that the $\Wh_i$ of $\Igood$ gradually get closer.
\textbf{4:} Finally terminate by picking a ``good enough'' cluster.
}\raggedright
\end{figure}

\subsection{Padded Decomposition}
Padded Decomposition is a randomized clustering technique developed by \citet{pad}. Given points $\{\vec{w}_1,\ldots, \vec{w}_m\}$ in a metric space,
a padded decomposition with parameters $(\rho,\tau, \delta)$ is a partitioning of the points
$\Pp := \{P_i\}$ 
satisfying the following:
\begin{compactenum}
\item Each cluster $P$ has diameter $\rho$,
\item For each point $\vec{w}_i$ and all $\vec{w}_j$ such that $\|\vec{w}_i - \vec{w}_j\| < \tau$, $\vec{w}_j$ will lie in the same cluster $P$ as $\vec{w}_i$ with probability $1-\delta$.
\end{compactenum}
Fakcharoenphol et al.~give a simple random clustering algorithm to produce padded decompositions, that uniformly samples balls with radius less than $\rho$ from the space $\mathcal{W} = \hat{\vec{w}}_{1:m}$. Intuitively, if the radius of $\Igood$, $\tau << \rho$, then we high probability, the ball with radius $\rho$ will contain all of $\Igood$.

\begin{algorithm}
\caption{Padded Decomposition}\label{algo_padded}
\begin{algorithmic}
   \STATE {\bfseries Input:} $\hat{\vec{w}}_{1:m}, \rho, \tau$.
    \STATE {\bfseries Output:} Partition $\mathcal{P} = \{T\}$.
    \STATE Initialize: let $\mathcal{P} = \emptyset$, $\mathcal{W} = \hat{\vec{w}}_{1:m}$. Sample $k \sim $ Uniform$(2,\rho)$.
    \WHILE{$\mathcal{W} \neq \emptyset$}
        \STATE Sample $i \sim $ Uniform$(1,m)$.
        \STATE Let $T \gets$ Ball$(\Wh_i, k\tau) \cap \mathcal{W}$.
        \STATE Update:  $\mathcal{P} = \mathcal{P} \cup \{T\}$. $\mathcal{W} \gets \mathcal{W} \backslash T$.
    \ENDWHILE
    \STATE {\bfseries Return:} partition $\mathcal{P}$.
\end{algorithmic}
\end{algorithm}

\begin{lemma}[Padded Decomposition]\label{padded}
If all the elements of $I$ have pairwise distance $d(\Wh_i,\Wh_j) \leq \tau$, and $\rho = \frac{1}{\delta}\tau \log(\frac{1}{\mu})$.
then for the output partition $\mathcal{P}$ of Algorithm \ref{algo_padded}, with probability least $1-\delta$, $I$ will be contained in a single cluster $T \in \mathbb{P}$.
\end{lemma}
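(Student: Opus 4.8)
The plan is to adapt the standard padded-decomposition analysis of Fakcharoenphol et al.~to the shared-radius sampling used in Algorithm~\ref{algo_padded}. Fix a reference point $w_0 \in I$; since $I$ has diameter at most $\tau$, every point of $I$ lies in $\mathrm{Ball}(w_0,\tau)$. Recall that $k$ is drawn once, so every ball carved by the algorithm has the same radius $k\tau$, and each point is assigned to the first (in order of appearance) sampled center whose ball contains it. Consequently $I$ is \emph{split} across two or more clusters if and only if the first sampled center whose ball meets $I$ fails to contain all of $I$.

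First I would record the key geometric fact. Write $s_c := d(\Wh_c, w_0)/\tau$ for each center $c$. By the triangle inequality and $\mathrm{diam}(I) \le \tau$: if $s_c \le k-1$ then $\mathrm{Ball}(\Wh_c, k\tau)$ contains all of $I$ (a \emph{covering} center); and if a center's ball meets but does not contain $I$ (a \emph{cutting} center) then necessarily $k-1 < s_c \le k+1$, i.e.\ its center lies in an annulus of width $2\tau$ about $w_0$. Hence $I$ is split only if some annulus center appears, in the random order of first appearances, before every covering center.

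Next I would bound the splitting probability for a fixed $k$. Let $a(k)$ be the number of covering centers ($s_c \le k-1$) and $b(k)$ the number of annulus centers ($k-1 < s_c \le k+1$). Because the centers are sampled i.i.d.\ uniformly, the order of first appearance of the $a(k)+b(k)$ relevant distinct centers is a uniformly random permutation, so the probability that some annulus center precedes all covering centers equals the probability that the first of these to appear is an annulus center, namely $b(k)/(a(k)+b(k)) \le b(k)/a(k)$. Note $a(k) \ge N(1) \ge |I| \ge 1$ throughout the range, so this is well defined, where $N(s)$ denotes the number of centers with $s_c \le s$.

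Finally I would integrate over the uniform choice of $k \in (2,\rho)$. Since $a(k) = N(k-1)$ and $b(k) = N(k+1)-N(k-1)$,
\[
\Pr[I \text{ split}] \;\le\; \frac{1}{\rho-2}\int_2^{\rho} \frac{N(k+1)-N(k-1)}{N(k-1)}\,dk \;\le\; \frac{C}{\rho-2}\,\ln\frac{N(\rho)}{N(1)}
\]
for an absolute constant $C$, the last step being the standard telescoping bound (the annulus has bounded width and the integrand behaves as a discrete logarithmic derivative of $N$). I would then bound the population ratio $N(\rho)/N(1) \le m/|I| \le 1/\mu$, using $N(1) \ge |I|$ and the setup's normalization that the good terms form at least a $\mu$-fraction of the clustered points; substituting $\rho = \frac{1}{\delta}\tau\log\frac1\mu$ (equivalently taking the effective $k$-range proportional to $\frac1\delta\log\frac1\mu$) then makes the right-hand side at most $\delta$. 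The main obstacle is this shared random radius $k$: because the identity of the settling center depends on $k$, one cannot fix it and bound a single width-$\tau$ bad interval, and the random-order argument is exactly what converts the per-$k$ bound into the telescoping integral, with the $\log(1/\mu)$ being the price of summing annulus-to-interior population ratios across the whole radius range. I would also double-check the constants hidden by the dimensional slippage in the stated $\rho$ (the stray $\tau$), confirming the effective range of $k$ is $\Theta(\frac1\delta\log\frac1\mu)$ so the bound closes at $\delta$.
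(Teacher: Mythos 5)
The paper itself does not prove this lemma; it defers to Appendix A of Charikar--Steinhardt--Valiant, and your reconstruction follows the same standard random-radius/random-first-appearance route used there. Most of your setup is correct: the reduction to ``$I$ is split iff the first sampled center whose ball meets $I$ fails to contain $I$,'' the classification of centers by $s_c=d(\Wh_c,w_0)/\tau$ into covering ($s_c\le k-1$) versus the width-$2$ annulus $k-1<s_c\le k+1$ that contains every possible cutting center, the fact that the order of first appearances of i.i.d.\ uniform draws is a uniform permutation so that $\Pr[\text{split}\mid k]\le b(k)/(a(k)+b(k))$, and the final averaging over $k$.

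There is, however, a genuine error in the middle: weakening $b/(a+b)$ to $b/a$ and then asserting $\int_2^{\rho}\frac{N(k+1)-N(k-1)}{N(k-1)}\,dk\le C\ln\frac{N(\rho)}{N(1)}$. That integral inequality is false. If essentially all $m$ points sit at a single distance, say $2.5\tau$, from $w_0$ while only the $|I|$ points of $I$ lie within $\tau$, then for every $k\in(2,3.5)$ the integrand equals $(m-|I|)/|I|$, so the integral is at least $1.5\,(m-|I|)/|I|\approx 1/\mu$ rather than $O(\log(1/\mu))$; the quantity $\frac{N(k+1)-N(k-1)}{N(k-1)}$ is an \emph{upper} bound on $\ln\frac{N(k+1)}{N(k-1)}$, not a lower one, so it does not telescope logarithmically. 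The repair is exactly to keep the $b$ you discarded from the denominator: since $a(k)=N(k-1)$ and $a(k)+b(k)=N(k+1)$, one has $\frac{b}{a+b}=1-\frac{N(k-1)}{N(k+1)}\le\ln\frac{N(k+1)}{N(k-1)}$, and then $\int_2^{\rho}\bigl[\ln N(k+1)-\ln N(k-1)\bigr]\,dk\le 2\ln\frac{N(\rho+1)}{N(1)}$ by a change of variables, giving $\Pr[\text{split}]\le\frac{2}{\rho-2}\ln\frac{m}{|I|}$, which closes at $\delta$ for the stated $\rho$. (The remaining identification $m/|I|\le 1/\mu$ is the same normalization the paper leaves implicit, as is the dimensional slip of the extra $\tau$ in $\rho$; I would not count those against you.)
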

The proof of this variant can be found in Appendix A of \cite{lud}.

\subsection{Clustering, list regression, and conditional regression}
Our algorithm uses two different kinds of clusterings of parameters to solve the list-regression problem. Given a solution to the list-regression problem, we produce a condition selecting a subset of the data (as opposed to a subset of the parameters). To reduce confusion, we now give a slightly more detailed overview of these steps, before presenting the algorithm in full detail.

In Algorithm \ref{algo4}, we will generate multiple padded decompositions in each iteration, to ensure that with high probability most of the padded decompositions preserve all of $\Igood$ in a single cluster. At the end of each iteration, we will update each term's choice of regression parameters, $   \hat{\vec{w}}_i $ by aggregating the padded decompositions.

Given a target radius $r_{final}$, we will check if the current  radius $r < r_{final}$. If so, the algorithm will greedily find a list of candidate parameters $\vec{u}_1,...,\vec{u}_s$, where the length of the list $s$ is at most $\frac{2}{\mu}$. 
We can show that one of $\vec{u}_1,...,\vec{u}_s$ must be close to $\vec{w}^*$. Finally, we will use a greedy covering algorithm following \citet{juba18} to find conditions on which the linear rules $\vec{u}_1,...,\vec{u}_s$ have low loss, and return the pair $(\hat{\vec{w}},\hat{\vec{c}})$ with at least a $\mu$ fraction of points and the smallest regression loss.

\begin{algorithm}[h]
\caption{List-regression algorithm}\label{algo4}
\begin{algorithmic}
   \STATE {\bfseries Input:} $m$ terms, target radius $r_{final}$.
    \STATE {\bfseries Output:} candidate solutions $\{ \vec{u}_1,...,\vec{u}_s \}$ and $\hat{\vec{w}}_{1:m}$.
    \STATE Initialize $r^{(1)} \gets r$, \\
    $\hat{\vec{w}}^{(1)}_{1:m} \gets $ Algorithm \ref{algo1} with origin $0$ and radius $r$ (all $i=1,\ldots,m$ are ``assigned'' an output).
\FOR{$\ell = 1,2,\ldots$}
    \STATE    $\Ww \gets \{ \hat{\vec{w}}^{(\ell)}_{i} |  \ \hat{\vec{w}}^{(\ell)}_{i}  \text{is assigned} \} $
    \IF{$r^{(\ell)} < \frac{1}{2}r_{final}$}
        \STATE Greedily find a maximal set of points $\vec{u}_1,...,\vec{u}_s$  s.t. \\
            \hspace*{0.5cm}{I}: $|B(\vec{u}_{j};2r_{\text{final}}) \cap \Ww | \geq (1 - \varepsilon)\mu N, \quad \forall j. $\\ 
            \hspace*{0.5cm}{II}: $\| \vec{u}_j - \vec{u}_{j'} \|_2 > 4r_{\text{final}}, \quad \forall j \neq j'$.
       \STATE {\bfseries Return} $\mathcal{U} = \{ \vec{u}_1,...,\vec{u}_s \}, \hat{\vec{w}}_{1:m}^{(\ell)}$.
   \ENDIF
   \FOR{$h=1$ {\bfseries to} $112\log(\frac{\ell(\ell+1)}{\delta})$}
   \STATE $\bar{\vec{w}}_{1:m}(h) \gets$ \text{unassigned} \\
    Let $\mathcal{P}_h$ be a $(\rho , 2r^{(\ell)}, \frac{7}{8})$-padded decomposition of $\Ww$ with 
    $\rho = \mathcal{O} (r^{(\ell)} log (\frac{2}{\mu}))$.
        \FOR{$T \in \mathcal{P}_h$}
            \STATE      Let $B(u,\rho) $ be a ball containing $T$. Run Algorithm \ref{algo1} on $\Hh \cap B(u,\rho) $, with radius $r = \rho$ and origin shifted to $u$.
    \STATE  for each $\Wh_i \in T$ assign $\bar{\vec{w}}_i(h)$ as the outputs of Algorithm \ref{algo1}.
    \ENDFOR
   \ENDFOR
    \FOR{$i=1$ {\bfseries to} $m$}
   \STATE Find a  $h_0$ such that $ \| \bar{\vec{w}}_i(h_0) - \bar{\vec{w}}_i(h)\|_2 \leq \frac{1}{3} r^{(\ell)} $ for at least $ \frac{1}{2} \text{\ of the \ } h\text{'s}$.
\STATE   $\hat{\vec{w}}_i^{(\ell+1)} \gets \bar{\vec{w}}_i (h_0)$ (or ``unassigned'' if no such $h_0$ exists)
   \ENDFOR
   \STATE $ r^{(\ell+1)} \gets \frac{1}{2} r^{(\ell)}$ 
\ENDFOR
\end{algorithmic}
\end{algorithm}

\subsection{Analysis of List-regression, Algorithm \ref{algo4}}
The analysis will require a ``local'' spectral norm bound that gives a tighter bound for any $\varepsilon$ fraction of points. This analysis largely follows the same outline as \cite{lud}, but differs in some key details. 
By the local spectral bound, in each iteration, we get good estimates of $\vec{w}$ for any sufficiently large subset. A key observation is that in contrast to \citet{lud}, we do not ``lose'' points from our clusters across iterations since our terms are all large enough that they are preserved. This enables a potentially arbitrarily-close approximation of $\vec{w}^*$ given enough data.

\subsubsection{Local Spectral Norm Bound}
For $\varepsilon < 1$, we define a local spectral norm bound $\Sep$ on arbitrary subsets $T$ in $\Igood$, such that $T$ takes up at least a $\varepsilon$ fraction of $\Igood$ ($ N_T \geq \varepsilon N$). Denote the number of points in $T$ by $N_T$ and the number of terms by $m_T$. We define  
\[\Sep := \max_{w\in\Hh,}  \max_{\substack{T \subset \Igood\\N_T\geq \varepsilon N}}
\frac{1}{\sqrt{ m_T }}
\| [\nabla f_j(\vec{w}) - \nabla \bar{f}(\vec{w})]_{j\in T} \|_{op}
\]
Similar to the analysis of $S$, $\Sep = \Oo(r {\Sep}_0)$, where recall, ${\Sep}_0  :=  \max_{T: N_T\geq\varepsilon N} \Big\|\Big[ \frac{1}{|t_j|}\sum_{i\in t_j} \YI \YIT -  \mathbb{E}[\vec{y} \vec{y}^{\top}\Big]_{j\in T} \Big\|_{op}$ is bounded independently of $\rt$.
We denote the value of $\Sep$ in the $\ell^{th}$ iteration by $\Set$, where $\Set = \Oo(\rt {\Sep}_0)$.

With the local spectral norm bound for the gradients, we can obtain the local version of Lemma \ref{4.3}
\begin{lemma}[c.f.~Lemma 5.2 of \cite{lud}]\label{5.2}
Let the weights $b_i \in [0,1]$ satisfy $\sum_{i \in I_{good}} b_i|t_i| \geq \varepsilon \mu N$, 
and define
 $\hat{\vec{w}}^b_{avg} :=  \sum_{i \in I_{good}} b_i|t_i| \hat{\vec{w}}_i / \sum_{i \in I_{good}} b_i|t_i| $. Then the output of Algorithm \ref{algo1}, $\hat{\vec{w}}_{1:m}, \hat{Y}$  satisfies
\begin{align*}
\sum_{i \in I_{good}} b_i|t_i| \Big(f_i(\hat{\vec{w}}^b_{avg}) - f_i(\hat{\vec{w}}_i)\Big)
 \leq  b_i|t_i| \langle \nabla f_i(\hat{\vec{w}}^b_{avg}), \hat{\vec{w}}^b_{avg} - \hat{\vec{w}}_i \rangle 
 \leq \Big( \sum_{i \in I_{good}} b_i|t_i| \Big) \Mgood S_{\varepsilon} \Big( \sqrt{tr(\hat{Y})} + r  \Big)
\end{align*}
Moreover, for any $\vec{w}, \vec{w}' \in \mathcal{H}$, we have 
\begin{align*}
\Big| \sum_{i \in I_{good}} b_i|t_i| \Big( \bar{f}(\vec{w}) - \bar{f}(\vec{w}') \Big) -
 \sum_{i \in I_{good}} b_i|t_i| \Big( f_i(\vec{w}) - f_i(\vec{w}') \Big) \Big| 
 \leq 2 \Big( \sum_{i \in I_{good}}b_i|t_i| \Big) \Mgood r S_{\varepsilon}.
\end{align*}
\end{lemma}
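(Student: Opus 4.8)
The plan is to mirror the proof of Lemma~\ref{4.3} (which itself rests on the H\"older/spectral estimate of Lemma~\ref{4.6}), but to thread the term-weights $b_i|t_i|$ through symmetrically so that the bound scales with the \emph{weighted} mass $B:=\sum_{i\in\Igood}b_i|t_i|$ rather than the crude factor $\mu N$ appearing in Lemma~\ref{4.6}. For the first chain, the left inequality is pointwise convexity of each $f_i$, namely $f_i(\hat{\vec{w}}^b_{avg})-f_i(\hat{\vec{w}}_i)\leq\langle\nabla f_i(\hat{\vec{w}}^b_{avg}),\hat{\vec{w}}^b_{avg}-\hat{\vec{w}}_i\rangle$, multiplied by $b_i|t_i|\geq 0$ and summed over $\Igood$. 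For the right inequality I decompose $\nabla f_i=\nabla\bar f+(\nabla f_i-\nabla\bar f)$. The $\nabla\bar f$ contribution is $\langle\nabla\bar f(\hat{\vec{w}}^b_{avg}),\sum_{i\in\Igood}b_i|t_i|(\hat{\vec{w}}^b_{avg}-\hat{\vec{w}}_i)\rangle$, which vanishes because $\hat{\vec{w}}^b_{avg}$ is by definition the $b_i|t_i|$-weighted average of the $\hat{\vec{w}}_i$ over $\Igood$. What remains, $\sum_{i\in\Igood}b_i|t_i|\langle\nabla f_i-\nabla\bar f,\hat{\vec{w}}^b_{avg}-\hat{\vec{w}}_i\rangle$, I split into a $\hat{\vec{w}}_i$-piece and a $\hat{\vec{w}}^b_{avg}$-piece.

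Each piece is controlled by a H\"older step as in Lemma~\ref{4.6}, but with the weight distributed evenly. I form $\tilde F$ with rows $\sqrt{b_i|t_i|}\,(\nabla f_i-\nabla\bar f)$ and $\tilde W$ with rows $\sqrt{b_i|t_i|}\,\hat{\vec{w}}_i$, so the $\hat{\vec{w}}_i$-piece equals $\tr(\tilde F^\top\tilde W)\leq\|\tilde F\|_{op}\|\tilde W\|_*$. Since $\tilde F$ is supported on $\{i:b_i>0\}\subseteq\Igood$, a set of at least an $\varepsilon$ fraction of $\Igood$ by the hypothesis $\sum_{i\in\Igood}b_i|t_i|\geq\varepsilon\mu N$, the \emph{local} bound yields $\|\tilde F\|_{op}\leq\sqrt{\max_i b_i|t_i|}\,\sqrt{\Mgood}\,S_\varepsilon$, while $\|\tilde W\|_*\leq\sqrt{\Mgood}\,\|\tilde W\|_F\leq\sqrt{\Mgood}\sqrt{B\,\tr(\hat Y)}$ using $\|\hat{\vec{w}}_i\|_2^2\leq\tr(\hat Y)$. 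Multiplying and absorbing $\sqrt{\max_i b_i|t_i|}\cdot\sqrt{B}\leq B$ gives $B\,\Mgood\,S_\varepsilon\sqrt{\tr(\hat Y)}$; the $\hat{\vec{w}}^b_{avg}$-piece is identical except that its parameter matrix is rank one with row norm $\leq r$, so it contributes $B\,\Mgood\,S_\varepsilon\,r$. Adding the two gives the stated $B\,\Mgood\,S_\varepsilon(\sqrt{\tr(\hat Y)}+r)$.

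For the second claim I write each summand as $g_i(\vec{w})-g_i(\vec{w}')$ with $g_i:=\bar f-f_i$ and use $g_i(\vec{w})-g_i(\vec{w}')=\int_0^1\langle\nabla\bar f(\vec{w}_s)-\nabla f_i(\vec{w}_s),\vec{w}-\vec{w}'\rangle\,ds$ along the segment $\vec{w}_s:=\vec{w}'+s(\vec{w}-\vec{w}')\in\mathcal{H}$. Interchanging sum and integral, for each fixed $s$ the integrand is $\sum_{i\in\Igood}b_i|t_i|\langle\nabla f_i(\vec{w}_s)-\nabla\bar f(\vec{w}_s),\vec{w}-\vec{w}'\rangle$, which the same even-split H\"older step bounds by $B\sqrt{\Mgood}S_\varepsilon\|\vec{w}-\vec{w}'\|_2\leq 2B\Mgood r S_\varepsilon$ (the parameter matrix is again rank one, now with every row $\sqrt{b_i|t_i|}(\vec{w}-\vec{w}')$, and $\|\vec{w}-\vec{w}'\|_2\leq 2r$). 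Because the definition of $S_\varepsilon$ takes the maximum over $\vec{w}\in\mathcal{H}$, this estimate holds uniformly in $s$, so integrating over $[0,1]$ preserves it and yields the claim.

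The main obstacle is the weight bookkeeping: the estimate $\|\mathrm{diag}(b_i|t_i|)\|_{op}\leq\mu N$ used in Lemma~\ref{4.6} is far too lossy here, since $B$ may be as small as $\varepsilon\mu N$. Splitting $\sqrt{b_i|t_i|}$ symmetrically into both the gradient matrix and the parameter matrix---so that the slack $\sqrt{\max_i b_i|t_i|}\cdot\sqrt{B}\leq B$ replaces that operator-norm bound---is precisely what makes the estimate proportional to the weighted mass and, simultaneously, legitimizes replacing the global $S$ by the local $S_\varepsilon$ restricted to the support of $b$.
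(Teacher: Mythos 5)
Your proof is correct and follows the route the paper intends (it states only that the proof is ``similar to Lemma~\ref{4.3}''): convexity for the left inequality, the gradient decomposition $\nabla f_i=\nabla\bar f+(\nabla f_i-\nabla\bar f)$ with the $\nabla\bar f$ term vanishing by the definition of $\hat{\vec{w}}^b_{avg}$, and a H\"older/nuclear-norm step against the local spectral bound $S_\varepsilon$ applied to the support of $b$. Your one substantive addition --- splitting $\sqrt{b_i|t_i|}$ symmetrically between the gradient and parameter matrices so that the bound scales with $\sum_{i}b_i|t_i|$ rather than the $\mu N$ that a verbatim reuse of Lemma~\ref{4.6} would give --- is exactly the refinement needed to obtain the stated constant, and the paper's terse pointer glosses over it.
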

The proof is similar to Lemma \ref{4.3}.

\subsubsection{Proof of the Main Theorem}
We can now state and prove our main theorem for list regression. As noted at the outset, we will need to assume that the distribution over $\Igood$ is sufficiently similar relative to the degree of (strong) convexity of the loss.

\begin{theorem}\label{llr}
Let any $r_{final}$ and $\delta, \varepsilon \leq \frac{1}{2}$ be given. Suppose that the loss functions $f_i$ are $\kappa$-strongly convex and ${\Sep}_0 \leq \Oo(\frac{\kappa \sqrt{\mu}}{\Mgood\log(1/\mu)})$ for all $i\in\Igood$.
For $N = \Oo(\sigma^2_{\epsilon}\log(m/\delta)/\varepsilon\mu r_{final}^2 ) )$ example points, let $\mathcal{U}, \hat{\vec{w}}_{1:m}$ be the output of Algorithm \ref{algo4}. 
Then with probability at least $1-\delta$, 
$\mathcal{U}$ has size at most $\big \lfloor \frac{1}{(1-\varepsilon)\mu} \big \rfloor$, 
and $\min_{\vec{u} \in \mathcal{U}}\  \| \vec{u} - \vec{w}^* \|_2 \leq \mathcal{O} (r_{final})$. 
Moreover,  $\| \hat{\vec{w}}_i - \vec{w}^* \|_2 \leq \mathcal{O} (r_{final})$ for every term $i \in I_{good}$.
\end{theorem}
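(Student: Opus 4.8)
The plan is to induct on the iteration counter $\ell$, maintaining the invariant that, with high probability, every good term $i\in\Igood$ is assigned a parameter at iteration $\ell$ satisfying $\|\hat{\vec{w}}_i^{(\ell)}-\vec{w}^*\|_2\leq\Oo(\rt)$, where $\rt=r/2^{\ell-1}$ halves each round. Granting this invariant, all three conclusions follow at the terminating iteration (where $\rt<\tfrac12 r_{final}$): the ``moreover'' claim $\|\hat{\vec{w}}_i-\vec{w}^*\|_2\leq\Oo(r_{final})$ is just the invariant itself; for the size bound I would use a packing argument, since condition (II) forces the balls $B(\vec{u}_j;2r_{final})$ to be pairwise disjoint while condition (I) makes each capture at least a $(1-\varepsilon)\mu$ fraction of the point-weighted parameters (total weight $N$), leaving room for at most $\lfloor 1/((1-\varepsilon)\mu)\rfloor$ centers; and for $\min_{\vec u}\|\vec{u}-\vec{w}^*\|_2\leq\Oo(r_{final})$, note that the good terms carry at least $(1-\varepsilon)\mu N$ points whose parameters all lie in $B(\vec{w}^*,\Oo(r_{final}))$, so if no selected center were within $4r_{final}$ of $\vec{w}^*$ we could extend the greedy set by $\vec{w}^*$, contradicting maximality.

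For the base case I run Algorithm \ref{algo1} with radius $r$: Theorem \ref{4.1} bounds $\bar f(\Wavg)-\bar f(\vec{w}^*)$, and since $\vec{w}^*$ minimizes the $\kappa$-strongly convex $\bar f$ we may convert this to $\|\Wavg-\vec{w}^*\|_2^2\leq\tfrac2\kappa\big(\bar f(\Wavg)-\bar f(\vec{w}^*)\big)$; substituting $S=\Oo(rS_0)$ and the hypothesis on ${\Sep}_0$ gives the base radius. The inductive step has two ingredients. The first is clustering: by the inductive hypothesis the good parameters lie within $\rt$ of $\vec{w}^*$, hence pairwise within $\tau=2\rt$, so each padded decomposition keeps all of $\Igood$ inside a single cluster with probability exceeding $\tfrac12$ (Lemma \ref{padded}, with $\rho=\Oo(\rt\log\tfrac1\mu)$ and the stated parameters). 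Running the $112\log\frac{\ell(\ell+1)}{\delta}$ independent decompositions then makes a majority of the runs succeed except with probability at most $\delta/(\ell(\ell+1))$ by a Chernoff bound, and the $h_0$-step (keeping, for each term, the estimate that agrees with at least half the runs) recovers a correct estimate for every good term; since $\sum_\ell \delta/(\ell(\ell+1))\leq\delta$, the total failure over all iterations is controlled.

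The second ingredient is the contraction. On a cluster containing all of $\Igood$ I re-run Algorithm \ref{algo1} and invoke the local Lemma \ref{5.2} in place of the global Theorem \ref{4.1}. The decisive point is that the local spectral bound obeys $\Set=\Oo(\rt\,{\Sep}_0)$, scaling with the \emph{current} radius $\rt$ rather than the larger feasible radius $\rho$, because $\nabla f_j-\nabla\bar f$ is proportional to $\|\vec{w}-\vec{w}^*\|$ and the good terms sit within $\rt$ of $\vec{w}^*$. Combining Lemma \ref{5.2} with $\kappa$-strong convexity produces a self-consistent bound of the form $\|\hat{\vec{w}}_i^{(\ell+1)}-\vec{w}^*\|_2\leq\Oo\big(\tfrac{\Mgood {\Sep}_0\log(1/\mu)}{\kappa\sqrt\mu}\big)\rt$ for each good term, and the hypothesis ${\Sep}_0\leq\Oo\big(\kappa\sqrt\mu/(\Mgood\log\tfrac1\mu)\big)$, with a small enough absolute constant, drives the prefactor below $\tfrac12$, yielding $\|\hat{\vec{w}}_i^{(\ell+1)}-\vec{w}^*\|_2\leq\tfrac12\rt=r^{(\ell+1)}$ and closing the induction.

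The step I expect to be the main obstacle is making this contraction hold \emph{uniformly over every} good term, which is precisely where the argument improves on \cite{lud}. The key is that after preprocessing each term is large (at least $\varepsilon\mu N$ points), so the local bound of Lemma \ref{5.2} applies to each good term individually; consequently every good term receives a reliable estimate that is mutually consistent across the successful decompositions, passes the majority-agreement test, and is never dropped from the shrinking ball. The delicate bookkeeping lies in the self-consistent inequality for $\|\hat{\vec{w}}_i-\vec{w}^*\|_2$: one must cleanly separate the radius $\rt$ that governs $\Set$ from the radius $\rho$ of the region handed to the SDP, and verify that the constants match the stated hypothesis on ${\Sep}_0$ so that the contraction factor is genuinely below one in every iteration.
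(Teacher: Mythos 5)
Your proposal is correct and follows essentially the same route as the paper: the halving invariant you maintain is exactly the paper's Lemma~\ref{6.5} (padded decompositions plus a Chernoff/majority-vote step, combined with the per-term bound of Theorem~\ref{6.2} obtained from the local spectral bound of Lemma~\ref{5.2}, strong convexity, and the observation that every surviving term has at least $\varepsilon\mu N$ points), and your terminating packing/maximality argument matches the paper's treatment of the greedy ball selection. The only cosmetic discrepancy is that your contraction prefactor should carry a square root (Theorem~\ref{6.2} bounds the \emph{squared} distance, giving $\Oo\bigl(\sqrt{\Mgood\,{\Sep}_0\log(1/\mu)/\kappa\sqrt{\mu}}\bigr)\,r^{(\ell)}$), but the same hypothesis on ${\Sep}_0$ drives either form below $\tfrac{1}{2}$, so the argument is unaffected.
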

Towards proving Theorem~\ref{llr}, our main step is the following bound on the quality of a single iteration of Algorithm~\ref{algo4}:
\begin{theorem}\label{6.2}
For some absolute constant C, the output $\hat{\vec{w}}_{1:m}$ of Algorithm \ref{algo1} during Algorithm~\ref{algo4} satisfies 
\[\| \hat{\vec{w}}_i - \vec{w}^*\|^2_2 \leq C \cdot \frac{r^{(\ell)} \Mgood S_{\varepsilon}^{(\ell)}}{\kappa\sqrt{\mu}}\]
for all terms $i \in I_{good}$.
\end{theorem}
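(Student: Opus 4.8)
The plan is to convert function-value gaps into squared distances via strong convexity, used in two places, and to connect each individual parameter $\Wh_i$ to $\Wstar$ through the size-weighted average of the good terms' parameters. Two structural observations set this up. First, since each $f_i$ is $\kappa$-strongly convex, so is $\bar f$, and as $\Wstar$ minimizes $\bar f$ we have $\frac{\kappa}{2}\|\vec{w}-\Wstar\|_2^2 \le \bar f(\vec{w})-\bar f(\Wstar)$ for all $\vec{w}$. Second, for the matrix $\hat Y$ returned by Algorithm \ref{algo1} the SDP objective decouples across the $\vec{w}_i$, so each $\Wh_i$ (with $c_i>0$) is the exact minimizer of $f_i$ over the convex ellipsoid $\mathcal{E}_{\hat Y}=\{\vec{w}:\vec{w}\vec{w}^\top\preceq\hat Y\}$; constrained optimality together with $\kappa$-strong convexity of $f_i$ then gives $\frac{\kappa}{2}\|\vec{v}-\Wh_i\|_2^2 \le f_i(\vec{v})-f_i(\Wh_i)$ for every feasible $\vec{v}\in\mathcal{E}_{\hat Y}$.

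For the average, I would first invoke (the local version of) Theorem \ref{4.1}, giving $\bar f(\Wavg)-\bar f(\Wstar)=\Oo(\Mgood \Set r/\sqrt{\mu})$, and combine it with the $\bar f$-inequality above to get $\|\Wavg-\Wstar\|_2^2=\Oo(\Mgood \Set r/(\kappa\sqrt{\mu}))$, using $\Set=\Oo(r{\Sep}_0)$ (the local analogue of Lemma \ref{s}); here $r$ is the radius of the current invocation of Algorithm \ref{algo1}, which specializes the bound to $\rt,\Set$. For the individual terms, let $\vec{w}_{\mathrm g}:=\big(\sum_{i\in\Igood}|t_i|\Wh_i\big)/\big(\sum_{i\in\Igood}|t_i|\big)$; as a convex combination of feasible points it lies in $\mathcal{E}_{\hat Y}$. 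Summing the constrained-optimality inequality over $\Igood$ with weights $|t_i|$ and applying the first inequality of Lemma \ref{5.2} with $b_i\equiv 1$ (so $\hat{\vec{w}}^b_{avg}=\vec{w}_{\mathrm g}$) yields
\[
\frac{\kappa}{2}\sum_{i\in\Igood}|t_i|\,\|\vec{w}_{\mathrm g}-\Wh_i\|_2^2
\;\le\;\sum_{i\in\Igood}|t_i|\big(f_i(\vec{w}_{\mathrm g})-f_i(\Wh_i)\big)
\;\le\;\Big(\sum_{i\in\Igood}|t_i|\Big)\Mgood \Set\big(\sqrt{\tr(\hat Y)}+r\big).
\]
Every term on the left is nonnegative, so I may keep a single index and divide: using the preprocessing floor $|t_i|\ge\varepsilon\mu N$, the bound $\sum_{i\in\Igood}|t_i|\le\mu N$, and $\tr(\hat Y)\le 6r^2/\mu$, this gives $\|\vec{w}_{\mathrm g}-\Wh_i\|_2^2=\Oo(\Mgood \Set r/(\kappa\sqrt{\mu}))$ \emph{uniformly} over all $i\in\Igood$.

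To combine, note that $\vec{w}_{\mathrm g}$ and $\Wavg$ are both convex combinations of $\{\Wh_j\}_{j\in\Igood}$, so $\|\vec{w}_{\mathrm g}-\Wavg\|_2\le 2\max_{j\in\Igood}\|\Wh_j-\vec{w}_{\mathrm g}\|_2$, which is already bounded by the previous step. The triangle inequality $\|\Wh_i-\Wstar\|_2\le\|\Wh_i-\vec{w}_{\mathrm g}\|_2+\|\vec{w}_{\mathrm g}-\Wavg\|_2+\|\Wavg-\Wstar\|_2$ and squaring then give the claim for an absolute constant $C$.

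The main obstacle is the uniformity in the second step: turning an \emph{aggregate} (summed) loss bound into one valid \emph{simultaneously for every} good term. This is exactly where large atomic terms beat unit points. Nonnegativity of the per-term gaps lets me isolate a single summand, but it requires $\Wh_i$ to be the genuine constrained minimizer of $f_i$, i.e.\ $c_i>0$ on good terms; and the preprocessing floor $|t_i|\ge\varepsilon\mu N$ is what makes the resulting per-term denominator uniform. In the unit-point setting of \cite{lud} neither holds, which is precisely why that analysis only controls the average and must shed a small fraction of good data. A secondary point I would verify is that $\Set$, defined over all $\varepsilon$-large subsets of $\Igood$, legitimately dominates the spectral quantities appearing once the sums are restricted to $\Igood$, so that the local bound may replace $S$ throughout.
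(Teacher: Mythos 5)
Your overall skeleton matches the paper's: bound $\|\Wavg-\Wstar\|_2^2$ via Theorem \ref{4.1} plus strong convexity of $\bar f$, obtain a bound on $\|\Wh_i-\Wavg\|_2^2$ that holds \emph{uniformly} over $i\in\Igood$, and combine with the triangle inequality. The divergence --- and the gap --- is in the uniform per-term step. The paper gets it from Lemma \ref{6.4} and Corollary \ref{6.4cor}: Lemma \ref{6.4} holds for \emph{every} admissible weighting $b$ with $\sum_{i\in\Igood}b_i|t_i|\ge\varepsilon\mu N$, so one may concentrate the entire weight on the single term $t_i$ of interest (legal precisely because preprocessing guarantees $|t_i|\ge\varepsilon\mu N$); the $b$-weighted ``average'' on the left-hand side is then $\|\Wh_i-\Wavg\|_2^2$ itself, so the bound $\frac{10}{\kappa}(\sqrt{\tr(\hat{Y})}+r)\Mgood\Sep$ is incurred with no further loss, and the mismatch between the $b$-average and the $c$-weighted $\Wavg$ is absorbed by the mixing $b_i'=\frac12\big(b_i+\frac{B}{C}c_i\big)$ at the cost of a constant. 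You instead apply the aggregate inequality with $b\equiv 1$ and isolate a single nonnegative summand, which costs a factor $\sum_{j\in\Igood}|t_j|/|t_i|\le 1/\varepsilon$: your conclusion is $\|\vec{w}_{\mathrm g}-\Wh_i\|_2^2\le\Oo\big(\Mgood\Sep r/(\kappa\varepsilon\sqrt{\mu})\big)$. That is not the theorem as stated: $C$ becomes $\Theta(1/\varepsilon)$ rather than an absolute constant, and since $\varepsilon$ is ultimately taken as roughly $\gamma/\Mgood$, carrying this through Lemma \ref{6.5} would force a correspondingly stronger hypothesis on ${\Sep}_0$ (hence on $\kappa$ in Theorem \ref{2.2}).

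A second unresolved point is your reliance on each $\Wh_i$, $i\in\Igood$, being the exact minimizer of $f_i$ over $\mathcal{E}_{\hat{Y}}$, which requires $c_i>0$. The analysis only controls $\sum_{i\in\Igood}c_i|t_i|\ge\frac{\mu N}{2}$ in aggregate, and Algorithm \ref{UpdateWeights} zeroes out at least one weight per round, so an individual good term may end with $c_i=0$; for such a term the SDP constrains $\Wh_i$ only to lie in $\mathcal{E}_{\hat{Y}}$, and your inequality $\frac{\kappa}{2}\|\vec{v}-\Wh_i\|_2^2\le f_i(\vec{v})-f_i(\Wh_i)$ fails. You flag this but do not close it. The paper's Lemma \ref{6.3} sidesteps it by using strong convexity in the opposite direction, upper-bounding $f_i(\hat{\vec{w}}^b_{avg})-f_i(\Wh_i)$ by $\langle\nabla f_i(\hat{\vec{w}}^b_{avg}),\hat{\vec{w}}^b_{avg}-\Wh_i\rangle-\frac{\kappa}{2}\|\Wh_i-\hat{\vec{w}}^b_{avg}\|_2^2$ and invoking nonnegativity of the weighted sum, so no per-term optimality of $\Wh_i$ is needed. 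The remaining ingredients of your argument (the $\|\Wavg-\Wstar\|_2^2$ bound via Theorem \ref{4.1} and strong convexity of $\bar f$, feasibility of convex combinations in $\mathcal{E}_{\hat{Y}}$, and $S\le\Sep$) are correct and agree with the paper.
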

The key to establishing Theorem \ref{6.2} will be to use the bound on the statistical error from Lemma \ref{5.2} and the strong convexity of $f_i$. 

\begin{lemma}\label{6.3}
For any $b_i \in [0,1]$ satisfying $\sum_{i \in I_{good}} b_i|t_i|   \geq  \varepsilon \mu N $, we have 
\begin{equation}
    \frac{ \sum_{i \in I_{good}}b_i|t_i| \| \hat{\vec{w}}_i - \hat{\vec{w}}_{avg} \|^2_2 }{ \sum_{i \in I_{good}} b_i|t_i| }
    \leq \frac{2}{\kappa}(\sqrt{\tr(\hat{Y})} + r )\Mgood \Ss_\varepsilon
\end{equation}
\end{lemma}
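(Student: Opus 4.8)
The plan is to reduce the claim to a statement about a $b_i|t_i|$-weighted sum of Bregman divergences and control it using $\kappa$-strong convexity together with Lemma~\ref{5.2}. Throughout I read $\hat{\vec{w}}_{avg}$ as the same $b$-weighted average $\hat{\vec{w}}^b_{avg} := \sum_{i\in\Igood} b_i|t_i|\hat{\vec{w}}_i / \sum_{i\in\Igood} b_i|t_i|$ that appears in Lemma~\ref{5.2}; this is essential, since it makes $\sum_{i\in\Igood} b_i|t_i|(\hat{\vec{w}}_i - \hat{\vec{w}}^b_{avg}) = \vec{0}$. First I would apply $\kappa$-strong convexity of each $f_i$ at the base point $\hat{\vec{w}}^b_{avg}$, evaluated at $\hat{\vec{w}}_i$, to obtain
\[ \frac{\kappa}{2}\|\hat{\vec{w}}_i - \hat{\vec{w}}^b_{avg}\|_2^2 \leq f_i(\hat{\vec{w}}_i) - f_i(\hat{\vec{w}}^b_{avg}) - \langle \nabla f_i(\hat{\vec{w}}^b_{avg}), \hat{\vec{w}}_i - \hat{\vec{w}}^b_{avg}\rangle. \]
Multiplying by $b_i|t_i|$ and summing over $i\in\Igood$ reduces the lemma to showing $A - B \leq \big(\sum_{i\in\Igood} b_i|t_i|\big)\Mgood \Ss_\varepsilon(\sqrt{\tr(\hat{Y})}+r)$, where $A := \sum_{i\in\Igood} b_i|t_i|(f_i(\hat{\vec{w}}_i) - f_i(\hat{\vec{w}}^b_{avg}))$ and $B := \sum_{i\in\Igood} b_i|t_i|\langle \nabla f_i(\hat{\vec{w}}^b_{avg}), \hat{\vec{w}}_i - \hat{\vec{w}}^b_{avg}\rangle$; dividing by $\tfrac{\kappa}{2}\sum_{i\in\Igood}b_i|t_i|$ then yields the stated bound.

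The term $-B$ is handled directly by the second inequality of Lemma~\ref{5.2}: decomposing $\nabla f_i = \nabla\bar{f} + (\nabla f_i - \nabla\bar{f})$, the $\nabla\bar{f}(\hat{\vec{w}}^b_{avg})$ contribution vanishes against $\sum_{i\in\Igood} b_i|t_i|(\hat{\vec{w}}^b_{avg} - \hat{\vec{w}}_i) = \vec{0}$, and the remaining centered sum is bounded by $\big(\sum_{i\in\Igood} b_i|t_i|\big)\Mgood \Ss_\varepsilon(\sqrt{\tr(\hat{Y})}+r)$ through the local spectral bound $\Ss_\varepsilon$ (the same operator-norm/nuclear-norm duality argument as in Lemma~\ref{4.6}). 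So it remains to show $A \leq 0$, which is the crux. Here I would observe that for the fixed optimal matrix $\hat{Y}$, the SDP objective $\sum_i c_i|t_i|f_i(\vec{w}_i)+\lambda\tr(\hat{Y})$ is separable in the $\vec{w}_i$, so each assigned $\hat{\vec{w}}_i$ is exactly the minimizer of $f_i$ over the feasible ellipsoid $\mathcal{E}_{\hat{Y}} = \{\vec{w}:\vec{w}\vec{w}^\top\preceq\hat{Y}\}$ (forced by optimality on terms with $c_i>0$, which by the weight-preservation behind Theorem~\ref{4.1} covers all of $\Igood$). Since $\mathcal{E}_{\hat{Y}}$ is convex and every $\hat{\vec{w}}_i\in\mathcal{E}_{\hat{Y}}$, the convex combination $\hat{\vec{w}}^b_{avg}$ also lies in $\mathcal{E}_{\hat{Y}}$, whence $f_i(\hat{\vec{w}}_i)\leq f_i(\hat{\vec{w}}^b_{avg})$ for every $i\in\Igood$, i.e.\ every summand of $A$ is nonpositive.

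Combining the two pieces gives $A - B \leq 0 + \big(\sum_{i\in\Igood} b_i|t_i|\big)\Mgood \Ss_\varepsilon(\sqrt{\tr(\hat{Y})}+r)$, and dividing through by $\tfrac{\kappa}{2}\sum_{i\in\Igood}b_i|t_i|$ proves the lemma. I expect the main obstacle to be making the $A\leq 0$ step fully rigorous, since it rests on two points that must be stated with care: that fixing $\hat{Y}$ renders each $\hat{\vec{w}}_i$ the per-term constrained minimizer of $f_i$ over $\mathcal{E}_{\hat{Y}}$ (and hence that $c_i>0$ on $\Igood$, borrowed from the analysis of Theorem~\ref{4.1}), and that $\hat{\vec{w}}^b_{avg}\in\mathcal{E}_{\hat{Y}}$ precisely because the averaging weights are supported on terms already contained in $\mathcal{E}_{\hat{Y}}$. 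By contrast, the strong-convexity reduction and the control of $-B$ are routine once Lemma~\ref{5.2} is in hand.
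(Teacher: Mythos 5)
Your proof is correct and is essentially the paper's own argument: the paper likewise combines $\kappa$-strong convexity at $\hat{\vec{w}}^b_{avg}$ with the nonnegativity of $\sum_i b_i|t_i|\big(f_i(\hat{\vec{w}}^b_{avg}) - f_i(\hat{\vec{w}}_i)\big)$ and the gradient bound from the first chain of Lemma~\ref{5.2}, merely writing the same inequalities in the opposite order. Your reading of $\hat{\vec{w}}_{avg}$ as the $b$-weighted average is the intended one (the $c$-weighted version is only recovered in Lemma~\ref{6.4}), and your explicit justification of $A\leq 0$ via per-term optimality of $\hat{\vec{w}}_i$ over the convex set $\mathcal{E}_{\hat{Y}}$ (with the $c_i>0$ caveat) carefully fills in a step the paper asserts without comment.
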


\begin{proof-of-lemma}{\ref{6.3}}
Recall that Lemma \ref{5.2} says that for any $b_i \in [0,1] $ satisfying $\sum_{i \in I_{good}} b_i|t_i| \geq \varepsilon \mu N$, we have
\begin{align*}
\sum_{i \in I_{good}} b_i|t_i| \langle \nabla f_i (\hat{\vec{w}}^b_{avg}), \hat{\vec{w}}^b_{avg} - \hat{\vec{w}}_i \rangle
&\leq \Big( \sum_{i \in I_{good}} b_i |t_i| \Big) \Mgood S_{\varepsilon} \Big( \sqrt{\tr(\hat{Y})} + r \Big)
\end{align*}
By strong convexity of $f_i$, we have
\begin{align*}
0 &\leq \sum_{i\in I_{good}} b_i |t_i| \big( f_i(\hat{\vec{w}}^b_{avg}) - f_i(\hat{\vec{w}}_i) \big) \\
&\leq \sum_{i\in I_{good}} b_i |t_i| \Big( \langle \nabla f_i (\hat{\vec{w}}^b_{avg}),\hat{\vec{w}}^b_{avg} - \hat{\vec{w}}_i \rangle - \frac{\kappa}{2} \| \hat{\vec{w}}_i - \hat{\vec{w}}^b_{avg} \|^2_2 \Big)\\
&\leq \Bigg( \sum_{i\in I_{good}} b_i |t_i| \Bigg) \Mgood S_{\varepsilon} \Big( \sqrt{\tr(\hat{Y})}  + r \Big)      -\frac{\kappa}{2} \sum_{i\in I_{good}} b_i |t_i| \| \hat{\vec{w}}_i - \hat{\vec{w}}^b_{avg} \|^2_2.
\end{align*}
\end{proof-of-lemma}

By applying Lemma \ref{6.3} to $b'_i = \frac{1}{2} \Big ( b_i + \frac{\sum_j b_j|t_j|}{\sum_j c_j|t_j|}c_i \Big ) $, we obtain Lemma \ref{6.4}, which gives bounds in terms of $\hat{\vec{w}}_{avg}$ rather than $\hat{\vec{w}}^b_{avg}$:
\begin{lemma}\label{6.4}
For any $b_i \in [0,1]$ satisfying $\varepsilon \mu N \leq\sum_{i \in I_{good}} b_i|t_i| \leq \sum_{i \in I_{good}} c_i|t_i| $, we have 
\begin{equation}
    \frac{ \sum_{i \in I_{good}}b_i|t_i| \| \hat{\vec{w}}_i - \hat{\vec{w}}_{avg} \|^2_2 }{ \sum_{i \in I_{good}} b_i|t_i| }
    \leq \frac{10}{\kappa}(\sqrt{\tr(\hat{Y})} + r )\Mgood \Ss_\varepsilon
\end{equation}
\end{lemma}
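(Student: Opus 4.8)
The plan is to apply Lemma~\ref{6.3} not once but to three different weightings of $\Igood$ and then combine the resulting variance bounds. Write $B:=\sum_{i\in\Igood}b_i|t_i|$ and $C:=\sum_{i\in\Igood}c_i|t_i|$, and let $\hat{\vec{w}}^{b}_{avg}$ denote the $b$-weighted mean of the $\hat{\vec{w}}_i$ (so that $\Wavg$ is the $c$-weighted mean). Reading Lemma~\ref{6.3} together with its proof, for any weighting $a\in[0,1]^m$ with $\sum_{i\in\Igood}a_i|t_i|\ge\varepsilon\mu N$ it bounds the $a$-weighted variance of $\{\hat{\vec{w}}_i\}_{i\in\Igood}$ about its own $a$-weighted mean by $V:=\tfrac{2}{\kappa}(\sqrt{\tr(\hat{Y})}+r)\Mgood\Ss_\varepsilon$. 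First I would invoke this for $a=b$ and for $a=c$ (the latter is feasible since $C\ge\tfrac{\mu N}{2}\ge\varepsilon\mu N$ by the invariant from Theorem~\ref{4.1} and $\varepsilon\le\tfrac12$), obtaining $V_b\le V$ and $V_c\le V$, where $V_b,V_c$ denote the respective variances.

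Next I would reduce the goal to a single scalar. By the parallel-axis (variance) decomposition for the $b$-weighting about the off-center point $\Wavg$,
\[
\frac{1}{B}\sum_{i\in\Igood}b_i|t_i|\,\|\hat{\vec{w}}_i-\Wavg\|_2^2 \;=\; V_b + \|\hat{\vec{w}}^{b}_{avg}-\Wavg\|_2^2,
\]
so writing $D:=\|\hat{\vec{w}}^{b}_{avg}-\Wavg\|_2^2$ the target reduces to showing $D\le 4V$; the conclusion then follows since $V_b+D\le V+4V=5V=\tfrac{10}{\kappa}(\sqrt{\tr(\hat{Y})}+r)\Mgood\Ss_\varepsilon$.

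The crux is the choice $b'_i=\tfrac12\big(b_i+\tfrac{B}{C}c_i\big)$. I expect a direct computation to show $b'\in[0,1]^m$ (using $B\le C$) with $\sum_{i\in\Igood}b'_i|t_i|=B\ge\varepsilon\mu N$, so Lemma~\ref{6.3} applies and gives $V_{b'}\le V$; moreover its weighted mean is exactly the midpoint $\hat{\vec{w}}^{b'}_{avg}=\tfrac12(\hat{\vec{w}}^{b}_{avg}+\Wavg)$. Since $b'$ is the equal mixture of the $b$- and $c$-distributions, a mixture-variance identity should yield
\[
V_{b'} \;=\; \tfrac12(V_b+V_c) + \tfrac14 D,
\]
where the $\tfrac14 D$ term is the variance of the two component means about their midpoint. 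Rearranging gives $D=4V_{b'}-2(V_b+V_c)\le 4V_{b'}\le 4V$, which closes the argument.

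The only real work, and the main obstacle, is verifying this identity cleanly: it requires expanding $V_{b'}$ by splitting $b'$ into its two halves and applying the parallel-axis decomposition separately to the $b$-part (about $\hat{\vec{w}}^{b}_{avg}$) and the $c$-part (about $\Wavg$), so that the cross terms vanish precisely because $\hat{\vec{w}}^{b}_{avg}$ and $\Wavg$ are genuine weighted means and each mean sits at distance $\tfrac12\sqrt{D}$ from the midpoint. Care is also needed to confirm that the constant $V$ in Lemma~\ref{6.3} is the same across all three weightings, which it is, since it depends only on $\hat{Y}$, $r$, $\Mgood$, and $\Ss_\varepsilon$, not on the weights. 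A naive attempt that tries to bound $D$ through a single Jensen or variance-decomposition step is circular (the shift term exactly cancels), so it is essential that the midpoint construction decouples $D$ from $V_{b'}$ with the favorable factor $\tfrac14$.
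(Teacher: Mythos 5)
Your proposal is correct and rests on the same key construction as the paper --- the auxiliary weighting $b'_i=\tfrac12\bigl(b_i+\tfrac{B}{C}c_i\bigr)$ fed back into Lemma~\ref{6.3} --- but the way you combine the resulting bounds is genuinely different and, in fact, tighter. The paper invokes Lemma~\ref{6.3} on $b$ and $b'$ only, and then assembles the target via a term-by-term triangle inequality using $b_i'\geq\tfrac12 b_i$ (the write-up states the inequality backwards, drops the squares on the norms, and its final display bounds $\frac{1}{B}\sum b_i'|t_i|\|\Wh_i-\Wavg^{b}\|$ rather than the lemma's left-hand side, so the constant $10$ does not transparently fall out of that route). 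You instead invoke Lemma~\ref{6.3} three times (on $b$, $c$, and $b'$; the extra application to $c$ is legitimate since $\sum_i c_i|t_i|\geq\mu N/2\geq\varepsilon\mu N$), and replace the triangle inequality with two exact identities: the parallel-axis decomposition reducing the goal to $V_b+D$ with $D=\|\Wavg^{b}-\Wavg\|_2^2$, and the equal-mixture variance identity $V_{b'}=\tfrac12(V_b+V_c)+\tfrac14 D$, which isolates $D\leq 4V_{b'}\leq 4V$ and yields exactly $5V=\tfrac{10}{\kappa}(\sqrt{\tr(\hat{Y})}+r)\Mgood\Ss_\varepsilon$. You also correctly read Lemma~\ref{6.3} as bounding the $a$-weighted spread about its \emph{own} weighted mean $\Wavg^{a}$ (which is what its proof establishes), rather than about $\Wavg$ as the statement literally says; that reading is essential for both identities to hold. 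The net effect is a cleaner derivation of the same bound that doubles as a correction of the bookkeeping in the paper's version.
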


\begin{proof-of-lemma}{\ref{6.4}}
For convenience, let us define: 
$B = \sum_{i \in \Igood} b_i|t_i|$, $ C = \sum_{i \in \Igood} c_i|t_i|$, 
$b'_i = \frac{1}{2} \Big ( b_i + \frac{\sum_j b_j|t_j|}{\sum_j c_j|t_j|}c_i \Big ) = \frac{1}{2} b_i + \frac{1}{2} \frac{B}{C}c_i$. 
Notice that $\sum_{\Igood} b_i'|t_i| = B$ and $\Wavg^{b'} = \frac{1}{2}\Wavg^{b} + \frac{1}{2}\Wavg$.\\

We invoke Lemma \ref{6.3} twice, on $b$ and $b'$ respectively:
\begin{align*}
    \frac{1}{B}\sum_{i\in \Igood} b_i' |t_i|\|\Wh_i-\frac{1}{2}\Wavg^{b} - \frac{1}{2}\Wavg\| =
    \frac{1}{B}\sum_{i\in \Igood} b_i'|t_i|\|\frac{1}{2}\Wh_i - \frac{1}{2}\Wavg^{b'}\|
    &\leq  \frac{2}{\kappa}(\sqrt{\tr(\hat{Y})} + r ) \Mgood\Ss_\varepsilon\\
    \frac{1}{B}\sum_{i\in \Igood} b_i |t_i|\|\frac{1}{2}\Wh_i -\frac{1}{2}\Wavg^{b}\| &\leq  \frac{1}{\kappa}(\sqrt{\tr(\hat{Y})} + r ) \Mgood\Ss_\varepsilon.
\end{align*}
Since for any $i$, $b_i' \leq \frac{1}{2}b_i$
\begin{align*}
     \frac{1}{B}\sum_{i\in \Igood} b_i |t_i|\|\Wh_i-\frac{1}{2}\Wavg^{b} - \frac{1}{2}\Wavg\|
    \leq 
    \frac{2}{B}\sum_{i\in \Igood} b_i' |t_i|\|\Wh_i-\frac{1}{2}\Wavg^{b} - \frac{1}{2}\Wavg\|
    \leq 
    \frac{4}{\kappa}(\sqrt{\tr(\hat{Y})} + r ) \Mgood\Ss_\varepsilon
\end{align*}
Combining the two inequalities
\begin{align*}
    \frac{1}{B}\sum_{i\in \Igood} b_i' |t_i|\|\Wh_i-\Wavg^{b} \|
    &\leq   \frac{1}{B}\sum_{i\in \Igood} b_i' |t_i| 2\Big ( \|\Wh_i-\frac{1}{2}\Wavg^{b} - \frac{1}{2}\Wavg\| +
 \|-\frac{1}{2}\Wh_i + \frac{1}{2}\Wavg^{b}\| \Big)\\
    & \leq 2\Big( \frac{4}{\kappa}(\sqrt{\tr(\hat{Y})} + r ) \Mgood\Ss_\varepsilon + \frac{1}{\kappa}(\sqrt{\tr(\hat{Y})} + r ) \Mgood\Ss_\varepsilon\Big)\\
    & = \frac{10}{\kappa}(\sqrt{\tr(\hat{Y})} + r )\Mgood \Ss_\varepsilon
\end{align*}
\end{proof-of-lemma}

\begin{corollary}\label{6.4cor}
In particular, no set of terms comprising more than a $\varepsilon \mu $ fraction of the data (or probability weight) can have 
$\| \hat{\vec{w}}_i - \hat{\vec{w}}_{avg} \|_2^2 > 
\frac{10}{\kappa}(\sqrt{\tr(\hat{Y})} + r )\Mgood \Ss_\varepsilon$.
\end{corollary}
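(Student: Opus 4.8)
The plan is to obtain this corollary from Lemma~\ref{6.4} by a direct averaging argument, instantiating the free weights $b_i$ as (essentially) the indicator of the set of terms named in the statement. Suppose, toward a contradiction, that there is a collection $S\subseteq\Igood$ of terms whose total mass $\sum_{i\in S}|t_i|$ exceeds $\varepsilon\mu N$, yet \emph{every} term $i\in S$ satisfies $\| \hat{\vec{w}}_i - \Wavg \|_2^2 > \frac{10}{\kappa}(\sqrt{\tr(\hat{Y})} + r)\Mgood \Ss_\varepsilon$. I would then set $b_i = 1$ for $i\in S$ and $b_i = 0$ otherwise, so that $\sum_{i\in\Igood} b_i|t_i| = \sum_{i\in S}|t_i| > \varepsilon\mu N$, which is exactly the lower-mass hypothesis needed to invoke Lemma~\ref{6.4}.

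With these weights in hand, Lemma~\ref{6.4} bounds the $b$-weighted average $\big(\sum_{i\in\Igood} b_i|t_i|\,\| \hat{\vec{w}}_i - \Wavg \|_2^2\big)\big/\big(\sum_{i\in\Igood} b_i|t_i|\big)$ from above by $\frac{10}{\kappa}(\sqrt{\tr(\hat{Y})} + r)\Mgood \Ss_\varepsilon$. But by the assumed property of $S$, each individual summand $\| \hat{\vec{w}}_i - \Wavg \|_2^2$ strictly exceeds this same quantity, and a weighted average (with nonnegative weights summing to a positive total) of values all strictly above a threshold must itself strictly exceed the threshold. This contradicts the bound supplied by the lemma, so no such $S$ can exist, which is the claim.

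The one place requiring care—and the step I expect to be the main obstacle—is the \emph{second} hypothesis of Lemma~\ref{6.4}, namely $\sum_{i\in\Igood} b_i|t_i|\le\sum_{i\in\Igood} c_i|t_i|$, since the bare indicator need not respect this upper bound when $S$ is a large fraction of the good mass. To handle it I would lean on the invariant $\sum_{i\in\Igood} c_i|t_i|\ge\frac{\mu N}{2}$ established in the proof of Theorem~\ref{4.1}, together with $\varepsilon\le\frac{1}{2}$: because the contradiction only needs a single valid witness, I can pass to a sub-collection $S'\subseteq S$ whose mass lies in the window $\varepsilon\mu N < \sum_{i\in S'}|t_i| \le \sum_{i\in\Igood} c_i|t_i|$, on which every term still exhibits the large deviation. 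This $S'$ meets both mass constraints of Lemma~\ref{6.4} simultaneously, and the averaging contradiction above goes through verbatim.

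In short, the analytic content is entirely carried by Lemma~\ref{6.4}; the corollary is a pure ``no large violating set'' restatement obtained by choosing the weights to concentrate on the hypothetical violators. The routine averaging is immediate, and the only genuine bookkeeping is reconciling the informal ``more than an $\varepsilon\mu$ fraction of the data'' in the statement with the two-sided mass requirement of the lemma, which the $\sum_{i\in\Igood} c_i|t_i|\ge\frac{\mu N}{2}$ invariant and restriction to a sub-collection resolve cleanly.
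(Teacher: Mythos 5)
Your proposal is correct and matches the paper's own proof: both argue by contradiction, choose weights $b_i$ supported on the hypothetical violating collection so that the mass hypotheses of Lemma~\ref{6.4} are met, and observe that a weighted average of quantities all strictly above the threshold must exceed the threshold. The only cosmetic difference is in how the mass window is hit---the paper scales the $b_i$ fractionally so the total is exactly $\varepsilon\mu N$ (which is cleaner than passing to a sub-collection of whole terms, since $b_i\in[0,1]$ need not be indicators), but this is the same bookkeeping you already identified.
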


\begin{proof}
Consider terms $t_1, \dots, t_q$ with 
$\Pr[t_1 \vee \dots \vee t_q] > \varepsilon\mu$.
Assume for contradiction that for all of these terms, 
$\| \hat{\vec{w}}_i - \hat{\vec{w}}_{avg} \|_2^2 > 
\frac{10}{\kappa}(\sqrt{\tr(\hat{Y})} + r ) \Mgood \Ss_\varepsilon$.
We can assign $b_i$ for each $t_i$ such that $\sum b_i = \varepsilon \mu$. Then
\begin{align*}
    \frac{ \sum_{i \in I_{good}}b_i|t_i| \| \hat{\vec{w}}_i - \hat{\vec{w}}_{avg} \|^2_2 }{ \sum_{i \in I_{good}} b_i|t_i| }
    &> \frac{ \sum_{i \in I_{good}}b_i|t_i| 
   \frac{10}{\kappa}(\sqrt{\tr(\hat{Y})} + r )\Mgood \Ss_\varepsilon}{ \sum_{i \in I_{good} } b_i|t_i|} \\
    &= \frac{10}{\kappa}(\sqrt{\tr(\hat{Y})} + r )\Mgood \Ss_\varepsilon
\end{align*}
which contradicts Lemma \ref{6.4}.
\end{proof}

\paragraph{Key Observation}
As we deleted all terms of size smaller than $\varepsilon \mu N$,
all the remaining terms have at least $\varepsilon \mu $ probability-weight (or $\varepsilon \mu N$ empirical size).
Therefore, every term will satisfy 
\[\| \hat{\vec{w}}_i - \hat{\vec{w}}_{avg} \|_2^2 \leq 
\frac{10}{\kappa}(\sqrt{\tr(\hat{Y})} + r )\Mgood \Ss_\varepsilon.\]

We can subsequently obtain Theorem \ref{6.2} by thus invoking Corollary \ref{6.4cor}:

\begin{proofof}{Theorem \ref{6.2}}
By Corollary \ref{6.4cor}, we have 
$\| \hat{\vec{w}}_i - \hat{\vec{w}}_{avg} \|_2^2 \leq 
\frac{10}{\kappa}(\sqrt{\tr(\hat{Y})} + r )\Mgood \Ss_\varepsilon$
for all $i \in I_{good}$.
$\tr(\hat{Y}) \leq \Oo(\frac{r^2}{\mu})$,
so $\| \hat{\vec{w}}_i - \hat{\vec{w}}_{avg} \|_2^2 \leq 
\frac{10}{\kappa}(\frac{r}{\sqrt{\mu}} + r )\Mgood \Ss_\varepsilon = \Oo(\frac{r \Mgood \Ss_\varepsilon}{\kappa \sqrt{\mu}})$.
In addition, by Theorem \ref{4.1}, 
$\bar{f}(\hat{\vec{w}}_{avg}) - \bar{f}(\vec{w}^*) \leq \Oo( \frac{r \Mgood \Ss_\varepsilon}{\sqrt{\mu}})$. 
By the strong convexity of $\bar{f}$, $\|\hat{\vec{w}}_{avg} - \vec{w}^*\|_2^2 \leq \Oo (\frac{r\Mgood \Ss_\varepsilon}{\sqrt{\mu}})$. 
We combine the bounds to obtain 
$
\|\hat{\vec{w}}_{i} - \vec{w}^*\|_2^2 
\leq 2(\|\hat{\vec{w}}_{i} - \hat{\vec{w}}_{avg}\|_2^2 + \|\hat{\vec{w}}_{avg} - \vec{w}^*\|_2^2)
\leq \mathcal{O}\Big(\frac{r \Mgood \Ss_\varepsilon}{\sqrt{\mu}}\Big)
$
\end{proofof}

Finally, using Theorem \ref{6.2}, we show the radius $\rt$ (used in the $\ell^{th}$ iteration) can be decreased by half at each iteration.
\begin{lemma}\label{6.5}
In Algorithm \ref{algo4}, denote the set of parameters of good points of $\ell^{th}$ iteration by
$I^{(\ell)}_{good} := \{\hat{\vec{w}}^{(\ell)}_i : i \in I_{good}\}.$
If $\|\hat{\vec{w}}^{(\ell)}_i - \vec{w}^*\|_2 \leq \rt$ and ${\Sep}_0 \leq C'\cdot \frac{\kappa \sqrt{\mu}}{\Mgood\log(2/\mu)}$ for some constant $C'$,
then with probability $(1-\frac{\delta}{\ell(\ell+1)})$ over the randomly chosen padded decompositions,
 $\|\hat{\vec{w}}^{(\ell+1)}_i - \vec{w}^*\|_2 \leq \frac{1}{2}\rt$.
\end{lemma}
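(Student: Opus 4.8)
The plan is to show that a single pass of the outer loop of Algorithm~\ref{algo4} contracts the radius of the good parameters by a factor of two, by combining three ingredients: the padded-decomposition guarantee (Lemma~\ref{padded}), the single-run accuracy bound of Theorem~\ref{6.2}, and a majority vote over the $H:=112\log(\ell(\ell+1)/\delta)$ independent decompositions. First I would control the decompositions. Under the hypothesis $\|\hat{\vec{w}}^{(\ell)}_i-\vec{w}^*\|_2\leq\rt$ for all $i\in\Igood$, the triangle inequality bounds the pairwise distances within $\Igoodt$ by $2\rt$, matching the padding radius $\tau=2\rt$; with $\rho=\Oo(\rt\log(2/\mu))$, Lemma~\ref{padded} says that a single $(\rho,2\rt,7/8)$-padded decomposition places all of $\Igood$ in one cluster with at least constant probability, which I will call a \emph{good} decomposition. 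Running $H$ of them and applying a Chernoff bound, I would conclude that, except with probability $\frac{\delta}{\ell(\ell+1)}$, strictly more than half of the $H$ decompositions are good; the factor $\frac{1}{\ell(\ell+1)}$ is chosen so that these failure probabilities sum to $\delta$ across all iterations.

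Next I would bound the per-run error inside a good decomposition. When $\mathcal{P}_h$ is good, all of $\Igood$ sits in one cluster $T\subseteq B(u,\rho)$, and since $\Igood$ still carries at least a $\mu$-fraction of the data there, Theorem~\ref{6.2} applies to the call of Algorithm~\ref{algo1} on $\Hh\cap B(u,\rho)$ with radius $\rho$. Using $\Set=\Oo(\rho\,{\Sep}_0)$, this yields, for every $i\in\Igood$,
\[
\|\bar{\vec{w}}_i(h)-\vec{w}^*\|_2^2 \;\leq\; \Oo\!\Big(\frac{\rho\,\Mgood\,\Set}{\kappa\sqrt{\mu}}\Big) \;=\; \Oo\!\Big(\frac{\rho^2\,\Mgood\,{\Sep}_0}{\kappa\sqrt{\mu}}\Big).
\]
Substituting $\rho=\Oo(\rt\log(2/\mu))$ and the hypothesis ${\Sep}_0\leq C'\frac{\kappa\sqrt{\mu}}{\Mgood\log(2/\mu)}$ with $C'$ small enough, I would drive this below $(\tfrac16\rt)^2$, so that $\|\bar{\vec{w}}_i(h)-\vec{w}^*\|_2\leq\tfrac16\rt$ for every good $h$ and every $i\in\Igood$.

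Finally I would run the majority vote. For a fixed $i\in\Igood$, the good-$h$ estimates all lie within $\tfrac16\rt$ of $\vec{w}^*$, hence within $\tfrac13\rt$ of one another; since more than half the $h$ are good, the algorithm's test (an $h_0$ agreeing within $\tfrac13\rt$ with at least half of the $\bar{\vec{w}}_i(h)$) is satisfiable, so $\hat{\vec{w}}_i^{(\ell+1)}$ is assigned. By pigeonhole the selected $h_0$ agrees within $\tfrac13\rt$ with some good index $h^*$, giving
\[
\|\hat{\vec{w}}_i^{(\ell+1)}-\vec{w}^*\|_2 \leq \|\bar{\vec{w}}_i(h_0)-\bar{\vec{w}}_i(h^*)\|_2 + \|\bar{\vec{w}}_i(h^*)-\vec{w}^*\|_2 \leq \tfrac13\rt+\tfrac16\rt = \tfrac12\rt,
\]
which is the claimed contraction.

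I expect the main obstacle to be the per-run step: forcing the error below $\tfrac16\rt$ is exactly where the smallness of ${\Sep}_0$ is indispensable, and where the \emph{linear-in-radius} spectral bound $\Set=\Oo(\rho\,{\Sep}_0)$ (rather than the constant bound of \cite{lud}) is essential, since it makes the squared error scale like $\rho^2{\Sep}_0=\Oo((\rt)^2\log^2(2/\mu)\,{\Sep}_0)$, so the $\rho$-induced logarithmic blow-up can be absorbed by taking ${\Sep}_0$ small relative to $\kappa\sqrt{\mu}/\Mgood$. The decomposition and voting steps follow the template of \cite{lud}; the qualitative gain here is that, because every surviving term has weight at least $\varepsilon\mu N$, no good term is ever dropped, so the majority vote runs over the full set $\Igood$ and the contraction holds simultaneously for \emph{all} $i\in\Igood$.
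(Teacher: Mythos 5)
Your proposal follows the paper's proof step for step: the same notion of a ``good'' decomposition (all of $\Igoodt$ in one cluster), the same Chernoff argument allocating failure probability $\frac{\delta}{\ell(\ell+1)}$ to iteration $\ell$, the same invocation of Theorem~\ref{6.2} on the cluster's enclosing ball to get per-run error $\frac{1}{6}\rt$, and the same pigeonhole-plus-triangle-inequality finish $\frac13\rt+\frac16\rt=\frac12\rt$. The structure is right and matches the paper.

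There is, however, one quantitative step where you diverge from the paper and your own arithmetic does not close. You instantiate the local spectral bound at the enlarged radius, $\Set=\Oo(\rho\,{\Sep}_0)$, and also take $r=\rho$ in Theorem~\ref{6.2}, so your squared error is $\Oo\big(\rho^2\Mgood{\Sep}_0/(\kappa\sqrt{\mu})\big)$ with $\rho=\Oo(\rt\log\frac{2}{\mu})$. Substituting the hypothesis ${\Sep}_0\leq C'\kappa\sqrt{\mu}/(\Mgood\log\frac{2}{\mu})$ leaves $C'(\rt)^2\log\frac{2}{\mu}$, not $\Oo((\rt)^2)$: one factor of $\log\frac{2}{\mu}$ survives and cannot be absorbed into the absolute constant $C'$, since $\mu$ may be arbitrarily small. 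So the claim that you can ``drive this below $(\frac16\rt)^2$'' does not follow from the stated hypothesis under your accounting. The paper avoids this by charging only one factor of the enlarged radius: it writes the Theorem~\ref{6.2} bound as $\Oo\big(\Mgood(\rho+\rt)\Set/(\kappa\sqrt{\mu})\big)$ but then uses $\Set\leq\Oo(\rt{\Sep}_0)$ (radius $\rt$, not $\rho$), giving $(\rt)^2\Mgood\log\frac{2}{\mu}\,{\Sep}_0/(\kappa\sqrt{\mu})\leq C'(\rt)^2$ — exactly one logarithm, matching the hypothesis. To repair your version you must either adopt the paper's convention for $\Set$ (i.e., argue the relevant parameter differences entering the gradient bound scale with $\rt$ rather than $\rho$), or strengthen the assumption to ${\Sep}_0\leq C'\kappa\sqrt{\mu}/(\Mgood\log^2\frac{2}{\mu})$. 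Everything else in your argument is sound.
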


\begin{proof}
We call a padded decomposition partition $\Pp_h$ {\em good} if all of the terms of $\Igoodt$ lie in a single cluster of $\Pp_h$. Denote the set of padded decompositions where $\Pp_h$ is good by $H$.

In the algorithm, we draw $q = 112\log{\frac{\ell(\ell+1)}{\delta}}$ random padded decompositions with parameters $(\rho, 2\rt, \frac{1}{8}) $, where $\rho = \Oo(\rt\log{\frac{2}{\mu}})$,
so that (i) each cluster $P$ of $\Pp_h$ has diameter at most $\Oo(\rt\log{\frac{2}{\mu}})$, and (ii) for each padded decomposition and a parameter vector, all other parameter vectors within $2 \rt$ will lie in the same cluster with probability 7/8.

Since we assume $\|\hat{\vec{w}}^{(\ell)}_i - \vec{w}^*\|_2 \leq \rt$, with probability $\frac{7}{8}$, all of $\Igoodt$ will lie in a single cluster, i.e. this padded decomposition $\Pp_h$ is good.
Then, by a Chernoff Bound, the total number of good padded decompositions will be larger than $\frac{3}{4}q$ with probability $1-\frac{\delta}{\ell(\ell+1)}$.

For a good padded decomposition ($P$ is the cluster containing the terms of $\Igoodt$), $ \vec{w}^*$ is within distance $\rt$ of $P$.
Therefore, if $P\subset B(u,\rho)$, then $\vec{w}^* \in B(u,\rho+\rt)$. As we run Algorithm 1 on $B(u,\rho+\rt) $, Theorem \ref{6.2} will give us:
\begin{align*}
\| \bar{\vec{w}}_i(h)  - \vec{w}^*\|^2_2 &\leq C \cdot \frac{ \Mgood rS_{\varepsilon}}{\kappa\sqrt{\mu}}\\
\| \bar{\vec{w}}_i(h)  - \vec{w}^*\|_2 &\leq \mathcal{O}  \Big( \sqrt{ \frac{\Mgood (\rho+\rt)\Set }{\kappa \sqrt{\mu}} }\Big)\\
&= \mathcal{O}  \Big( \sqrt{ \frac{\Mgood \rt \log{\frac{2}{\mu}} \ \Set }{\kappa \sqrt{\mu}} }\Big)
\end{align*}
where $\bar{\vec{w}}_i(h) $ is the output $ \hat{\vec{w}}_i$ of Algorithm 1.

We want to show $\| \bar{\vec{w}}_i(h)  - \vec{w}^*\|_2 \leq \frac{1}{6}\rt  $. Recall that 
$ \Set \leq \mathcal{O} ({\Sep}_0\rt)$, so it suffices to have

\[\rt\cdot \sqrt{ \frac{\Mgood \log{\frac{2}{\mu}} \ {\Sep}_0 }{\kappa \sqrt{\mu}} }  \leq \mathcal{O} ( \rt), \]
i.e., ${\Sep}_0\leq \mathcal{O}\Big( \frac{\kappa\sqrt{\mu}}{\Mgood\log\frac{2}{\mu}}\Big)$, which is true by hypothesis (for some suitable $C'$).

Since $\| \bar{\vec{w}}_i(h)  - \vec{w}^*\|_2 \leq \frac{1}{6}\rt  $
for any two good iterations $h$ and $h'$, and each iteration is only bad with probability $\leq \frac{1}{4}$,
we can pick $h_0$ such that $\| \bar{\vec{w}}_i(h_0)  - \bar{\vec{w}}_i(h)\|_2 \leq \frac{1}{3} \rt$ is true for half of the iterations.
For any good $h$,
\begin{align*}
    \| \bar{\vec{w}}_i(h_0)  - \vec{w}^*\|_2 
    &\leq \| \bar{\vec{w}}_i(h_0)  - \bar{\vec{w}}_i(h)\|_2 +  \| \bar{\vec{w}}_i(h)  - \vec{w}^*\|_2 \\
    &\leq \frac{1}{3} \rt + \frac{1}{6}\rt \\
    &\leq \frac{1}{2}\rt
\end{align*}
That is, $\hat{\vec{w}}_i^{(\ell+1)}:= \bar{\vec{w}}_i(h_0)$ is within $\frac{1}{2}\rt$ of $\vec{w}^*$.
\end{proof}

\begin{proofof}{Theorem \ref{llr}}
Lemma \ref{6.5} shows that on the $\ell$th iteration, $\vec{w}^*$ lies in a ball of radius $\rt$ around each $\hat{\vec{w}}_i^{(\ell)}$ for $i\in\Igood$, where $\rt$ decreases by half on each iteration.
In the final iteration, when $\rt$ reaches the target accuracy radius $r_{final}$, the algorithm greedily finds disjoint balls $B(\vec{u}_{j};2r_{\text{final}})$ on the parameter space $\mathcal{W}$, such that the corresponding terms for the covered parameters contain at least $(1 - \varepsilon)\mu N$ points, i.e for each ball $B$,
\[\sum_{\hat{\vec{w}}_i\in B } |t_i| \leq (1 - \varepsilon)\mu N\]
Since for all  $i \in I_{good}$, $\|\Wh - \Wstar\| < r_{final}$, we now argue that
all of the terms in $\Igood$ will lie in one ball. Indeed, if no term of $\Igood$ is contained in any ball, then any term of $\Igood$ gives a candidate for the greedy algorithm to add to the list. Therefore, at least  one of the good terms $\Wh_i, i\in\Igood$ must be contained in some ball. Then for this ball $B(\vec{u},r_{final})$,
\begin{align*}
    \|\vec{u}-\Wstar\| 
    &\leq \|\vec{u}-\Wh_i\| + \|\Wh_i -\Wstar\| \\
    &\leq 2 r_{final} + r_{final}\\
    & = \mathcal{O}( r_{final})
\end{align*}
Since each ball contains at least $(1-\varepsilon)\mu$ points, there can be at most $\big \lfloor \frac{1}{(1-\varepsilon)\mu} \big \rfloor$ such balls, which completes the proof.
\end{proofof}

\section{Obtaining a $k$-DNF Condition}
Once we get outputs $\{ \vec{u}_1,...,\vec{u}_s \}$ from Algorithm \ref{algo4},
we switch from the parameter space $\{\vec{w}\}$ back to the Boolean data space $\{\vec{x}\}$, to search for corresponding conditions $\vec{c}$ for each candidate parameter $\vec{u}_i$. If we find a pair $(\vec{u},\vec{c})$ such that $\vec{c}$ contains enough points and the loss $f_\vec{c}(\vec{u})$ is small, we return this pair as the final solution.

Suppose $\vec{u}$ is one of the candidates such that $\|\vec{u}-\Wstar\| < \mathcal{O}(r_{final}) =: \gamma $, 
then $|\bar{f}(\vec{u})-\bar{f}(\Wstar) | \leq \gamma L = \mathcal{O}(\gamma)$, for some Lipschitz constant $L$ (since $f$ is just a regression loss on a bounded space, it is Lipschitz continuous). 
Recalling $\bar{f}$ is nonnegative, if $\bar{f}(\Wstar) \leq \epsilon$, 
then $\bar{f}(\vec{u}) \leq \epsilon + \Oo(\gamma)$. The above also holds for the empirical approximation to $\bar{f}$, $f_{\Igood}$.

\subsection{Bounding the Double-Counting Effect}
We now address the effect of our double-counting of points. Recall that we introduced a copy of a point for each term it satisfied. 
Observe that on $\Igood$, which contains $\Mgood$ terms, this is at most $\Mgood$ copies. We thus obtain
\begin{lemma}\label{dcbound}
Let $\vec{u}$ be such that $\|\vec{u}-\Wstar\| < \gamma$. Then
$f_{\Igood}(\vec{u})\leq \frac{1}{|\Igood|}\sum_{t_j\in\vec{c}^*}|t_j|f_j(\vec{u}) +\Oo(\Mgood\gamma) \leq \Mgood \epsilon + \Oo(\Mgood \gamma)$ (where $f_{\Igood}$ refers to the true empirical loss, without duplicated points).
\end{lemma}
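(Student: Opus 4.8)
The plan is to track how a single non-duplicated point of $\Igood$ contributes to the two sides through its \emph{multiplicity}, i.e.\ the number of terms of $\vec{c}^*$ it satisfies. For a point $\XI\in\Igood$, let $\nu(\XI)$ denote this count; it is at least $1$ because $\XI$ satisfies $\vec{c}^*$, and at most $\Mgood$ because $\vec{c}^*$ has only $\Mgood$ terms. The identity I would lean on throughout is
\[
\sum_{t_j\in\vec{c}^*}|t_j|\,f_j(\vec{w})
=\sum_{t_j\in\vec{c}^*}\sum_{\XI\in t_j}\FI(\vec{w})
=\sum_{\XI\in\Igood}\nu(\XI)\,\FI(\vec{w}),
\]
which re-expresses the term-weighted loss appearing in the statement as a multiplicity-weighted sum over distinct points. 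Dividing by $|\Igood|$ recovers both the true empirical loss $f_{\Igood}$ (when $\nu\equiv 1$) and the duplicated quantity on the right of the lemma.

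For the first inequality I would use only the lower bound $\nu(\XI)\ge 1$: the identity at $\vec{w}=\vec{u}$, divided by $|\Igood|$, gives $f_{\Igood}(\vec{u})=\frac{1}{|\Igood|}\sum_{\XI\in\Igood}\FI(\vec{u})\le\frac{1}{|\Igood|}\sum_{t_j\in\vec{c}^*}|t_j|f_j(\vec{u})$ directly, so the extra $\Oo(\Mgood\gamma)$ written in the statement is harmless slack carried for convenience.

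For the second inequality I would first move from $\vec{u}$ to $\vec{w}^*$ by Lipschitzness, then invoke the upper bound $\nu(\XI)\le\Mgood$. Since each $f_j$ is an average of the $L$-Lipschitz losses $\FI$, it is itself $L$-Lipschitz, so $\|\vec{u}-\vec{w}^*\|<\gamma$ gives $f_j(\vec{u})\le f_j(\vec{w}^*)+L\gamma=f_j(\vec{w}^*)+\Oo(\gamma)$. Taking the $|t_j|/|\Igood|$-weighted sum, the correction is $\frac{1}{|\Igood|}\sum_j|t_j|\,\Oo(\gamma)=\frac{\sum_j|t_j|}{|\Igood|}\,\Oo(\gamma)\le\Mgood\,\Oo(\gamma)=\Oo(\Mgood\gamma)$, where I used $\sum_j|t_j|=\sum_{\XI}\nu(\XI)\le\Mgood|\Igood|$. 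For the main term I apply the identity once more together with $\nu(\XI)\le\Mgood$: $\frac{1}{|\Igood|}\sum_j|t_j|f_j(\vec{w}^*)=\frac{1}{|\Igood|}\sum_{\XI}\nu(\XI)\FI(\vec{w}^*)\le\Mgood\,f_{\Igood}(\vec{w}^*)\le\Mgood\epsilon$, where the last step transfers the hypothesis $\bar{f}(\vec{w}^*)\le\epsilon$ to the empirical loss $f_{\Igood}$ exactly as noted in the paragraph preceding the lemma. Chaining these bounds yields $\frac{1}{|\Igood|}\sum_j|t_j|f_j(\vec{u})\le\Mgood\epsilon+\Oo(\Mgood\gamma)$, as required.

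The only genuinely delicate point is the bookkeeping between the two normalizations — the non-duplicated count $|\Igood|=|\bigcup_j t_j|$ versus the duplicated total $\sum_j|t_j|$ — and ensuring the factor $\Mgood$ enters exactly once, through $\nu(\XI)\le\Mgood$, rather than being double-counted (for instance by also inflating the normalizer). Everything else, namely the Lipschitz transfer and the nonnegativity of the losses, is routine.
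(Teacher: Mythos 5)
Your first inequality and the Lipschitz bookkeeping are fine and match the paper's proof: the identity $\sum_{t_j\in\vec{c}^*}|t_j|f_j(\vec{w})=\sum_{\XI\in\Igood}\nu(\XI)\FI(\vec{w})$ with $1\leq\nu\leq\Mgood$ is exactly the paper's observation about the multiplicities $a^{(i)}$. The gap is in the last step of your second inequality, where you apply the multiplicity upper bound at $\Wstar$ and then invoke $f_{\Igood}(\Wstar)\leq\epsilon$. That bound is not available: the $\Wstar$ of this lemma --- the point that the list-regression output $\vec{u}$ approximates --- is the minimizer of the \emph{duplicated} objective $\sum_{i}\nu(\XI)\FI(\vec{w})$, whereas the hypothesis $\epsilon$ controls the non-duplicated loss only at its own minimizer $\Wtrue$ (equivalently, at the ideal fit posited in Theorem~\ref{2.2}). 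Bounding the non-duplicated loss at $\Wstar$ is essentially what the lemma is trying to establish, so assuming it begs the question; and the best one can actually prove for that quantity is $f_{\Igood}(\Wstar)\leq\Mgood\epsilon$, which fed back into your chain would yield $\Mgood^2\epsilon$, a quadratically worse bound than claimed.

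The missing idea, which is the heart of the paper's proof, is an optimality comparison in the duplicated objective: since $\Wstar$ minimizes $\sum_j|t_j|f_j(\cdot)=\sum_i\nu(\XI)\FI(\cdot)$, you first write $\sum_j|t_j|f_j(\Wstar)\leq\sum_j|t_j|f_j(\Wtrue)$, and only \emph{then} apply $\nu(\XI)\leq\Mgood$ --- at $\Wtrue$, where the non-duplicated loss is controlled --- to obtain $\sum_i\nu(\XI)\FI(\Wtrue)\leq\Mgood|\Igood|\,f_{\Igood}(\Wtrue)\leq\Mgood|\Igood|\epsilon$. In other words, the factor $\Mgood$ must be paid at the point where the non-duplicated loss is already known to be small, not at $\Wstar$. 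With that one step inserted, the rest of your argument (the $\nu\geq 1$ direction for the first inequality and the $\Oo(\Mgood\gamma)$ Lipschitz corrections) goes through as written.
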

\begin{proof}
Assume $\Wtrue$ is the true optimal linear fit: ignoring the common $1/|\Igood|$ scaling,
\[\Wtrue := \underset{\vec{w}}{\text{argmin}} f_{\Igood}(\vec{w}) = \underset{\vec{w}}{\text{argmin}} \sum_{i\in \Igood} \FI(\vec{w})\]
and $\Wstar$ is the optimal linear fit for the double counted data: letting $a^{(i)} \in [1, \Mgood]$ denote the number of copies of each point after duplication and again ignoring the $\sum_{i\in\Igood}a^{(i)}$ scaling factor,
\[\Wstar := \underset{\vec{w}}{\text{argmin}} \sum_{i\in \Igood} a^{(i)} \FI(\vec{w}).\]
Now, for any $\vec{w}$, observe that $ \sum_{i\in \Igood} \FI(\vec{w}) \leq  \sum_{i\in \Igood} a^{(i)} \FI(\vec{w})$ and
\begin{align*}
    \sum_{i\in \Igood} a^{(i)} \FI(\vec{w})
    \leq (\max_{i\in\Igood} a^{(i)}) \sum_{i\in \Igood}  \FI(\vec{w})
    \leq \Mgood \sum_{i\in\Igood}  \FI(\vec{w}).
\end{align*}
Therefore,
\[
|\Igood|f_{\Igood}(\Wstar)\leq \sum_{i\in \Igood} a^{(i)} \FI(\Wstar) \leq \sum_{i\in \Igood} a^{(i)} \FI(\Wstar_{true}) \leq \Mgood|\Igood| f_{\Igood}(\Wstar_{true})
\]
where we note that for any $\vec{w}$, $\sum_{i\in \Igood}a^{(i)}f^{(i)}(\vec{w})=\sum_{t_j\in\vec{c}^*}|t_j|f_j(\vec{w})$.
Therefore, if $f_{\Igood}(\Wtrue) \leq \epsilon + \Oo(\gamma)$, then 
\[
f_{\Igood}(\Wstar)\leq \frac{1}{|\Igood|}\sum_{t_j\in\vec{c}^*}|t_j|f_j(\Wstar) \leq \Mgood \epsilon + \Oo(\Mgood\gamma)
\]
where $|f_{\Igood}(\Wstar)-f_{\Igood}(\vec{u})|\leq \Oo(\gamma)$ and $|\sum_{t_j\in\vec{c}^*}\frac{|t_j|}{|\Igood|}f_j(\Wstar)-\sum_{t_j\in\vec{c}^*}\frac{|t_j|}{|\Igood|}f_j(\vec{u})|\leq \Oo(\Mgood\gamma)$.
\end{proof}

\subsection{Greedy Set-Cover}
We have obtained a parameter vector $\vec{u}$ such that the loss for each term $f_i(\vec{u})$ is close to $f_i(\Wstar)$.
We can now use a greedy weighted partial set-cover algorithm for the {\em ratio objective} (optimizing the ratio of the cost to the number of elements covered), e.g., as presented and analyzed by \citet{zmj17} following \citet{slavik97}, to find the corresponding conditions $\vec{c}$. At a high level, given regression parameters $\vec{u}$, we compute the loss $f(\vec{u})$ for each point, and then use the covering algorithm to find a collection of terms that cover enough points while minimizing the loss.
\begin{algorithm}[htb]
        \caption{Partial Greedy Algorithm}\label{greedy-alg}
        \textbf{Input:} finite set $\mathcal{T}=\{T_1,...,T_m\}$, costs 
$\{\omega_1,...,\omega_m\}, \mu,\gamma \in (0,1]$ \\
        \textbf{Output:} Condition $\hat{\vec{c}}$
        \begin{algorithmic}[1]
        \STATE Initialize $\hat{\vec{c}} = \emptyset$\\
        \WHILE{$(1-(2/3)\gamma)\mu N >  \left|\bigcup_{T_j\in \hat{\vec{c}}}T_j\right|$}
        \STATE Choose the first $T_j\in\mathcal{T}\setminus\hat{\vec{c}}$ covering at least $\frac{\mu\gamma}{3\Mgood}N$ additional examples, that minimizes $\omega_j/|T_j|$, for $T_j \in \mathcal{T}\setminus\hat{\vec{c}}$.\\ 
        \STATE Add $T_j$ to $\hat{\vec{c}}$, set each other $T_{j'}=T_{j'}\setminus T_j$
        \ENDWHILE
        \STATE Return $\hat{\vec{c}}$
        \end{algorithmic}
        \label{alg:Insert}
\end{algorithm} 

Specifically, we use Algorithm~\ref{greedy-alg}, associating the
term $t_j$ with the set $T_j=\{\vec{x}^{(i)}:t_j(\vec{x}^{(i)})=1\}$ and with cost
$\omega_j=|t_j|f_j(\vec{u})$.
In other words, it associates with $t_j$ the set of examples such that $t_j(\vec{x}^{(i)})=1$, and 
assigns each set the cost $\sum_{\vec{x}^{(i)}:t_j(\vec{x}^{(i)})=1}f^{(i)}(\vec{u})$.
It then follows the standard greedy algorithm for weighted partial set cover on this instance, modified slightly to ignore sets that cover too few examples. This latter condition will allow us to control the size of the formula $\hat{\vec{c}}$ we find as a function of $\Mgood$ and $\gamma$. Following \citet{juba18}, we will be able to leverage this bound on the size of $\hat{\vec{c}}$ to obtain a better approximation ratio for the loss.

\begin{lemma} \label{3.4.2}
Given a set of $N=\Oo(\frac{1}{\mu\gamma^2}(\frac{t}{\gamma}+\frac{\sigma^2L^2}{\epsilon})\log\frac{m}{\delta})$ points $\{\vec{x}^{(i)}\}_{i=1}^N$ with weights $f^{(i)}(\vec{u})$ and terms $\{t_j\}_{j=1}^m$, if there exists a $\Mgood$-term $k$-DNF $\vec{c}^*$ that is satisfied by a $\mu$-fraction of the points with  total loss $\epsilon$, then the weighted greedy set cover algorithm (Algorithm~\ref{greedy-alg}) finds a $3\Mgood/\gamma$-term $k$-DNF $\hat{\vec{c}}$, that is satisfied by a $(1-\gamma)\mu$-fraction of the points with total loss $\Oo( \Mgood\log(\mu N) \epsilon)$
\end{lemma}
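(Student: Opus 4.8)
The plan is to treat the instance produced by Algorithm~\ref{greedy-alg} as a standard weighted partial set-cover problem: the ground set is the $N$ sample points, each term $t_j$ becomes the set $T_j$ with additive cost $\omega_j=\sum_{\vec{x}^{(i)}\in T_j}\FI(\vec{u})$, so that the cost of any collection equals the sum of the per-point losses $\FI(\vec{u})$ over the points it covers (each point charged once). With this reduction the three assertions --- a size bound of $3\Mgood/\gamma$ terms, coverage of a $(1-\gamma)\mu$ fraction, and total loss $\Oo(\Mgood\log(\mu N)\epsilon)$ --- separate cleanly: the first two follow from the combinatorics of the loop, while the third follows from the approximation guarantee of greedy ratio covering together with Lemma~\ref{dcbound}.

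First I would handle termination, coverage, and the size bound together. Since each accepted term covers at least $\frac{\gamma\mu}{3\Mgood}N$ previously-uncovered points and the loop halts once the cover reaches $(1-\tfrac23\gamma)\mu N\le\mu N$ points, the number of iterations --- hence the number of terms in $\hat{\vec{c}}$ --- is at most $\mu N/\frac{\gamma\mu N}{3\Mgood}=3\Mgood/\gamma$. To see that the loop never stalls before reaching its target, I would argue by pigeonhole on $\vec{c}^*$: whenever fewer than $(1-\tfrac23\gamma)\mu N$ points are covered, at least $\mu N-(1-\tfrac23\gamma)\mu N=\tfrac23\gamma\mu N$ of the points satisfying $\vec{c}^*$ remain uncovered, and since these lie in only $\Mgood$ terms, some term of $\vec{c}^*$ still covers at least $\frac{2\gamma\mu N}{3\Mgood}\ge\frac{\gamma\mu N}{3\Mgood}$ fresh points, so a qualifying set always exists. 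Because the loop stops at $(1-\tfrac23\gamma)\mu N\ge(1-\gamma)\mu N$, the returned $\hat{\vec{c}}$ covers at least a $(1-\gamma)\mu$ fraction; the slack between $\tfrac23\gamma$ and $\gamma$ is the budget that absorbs the small terms deleted in preprocessing and the concentration error in the empirical coverages.

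For the loss I would invoke the ratio-objective greedy analysis of \citet{slavik97} and \citet{zmj17}: greedy, restricted to sets of residual size above the threshold, covers its target at total cost at most $\Oo(\log(\mu N))$ times the optimal cost of covering that many points, the harmonic factor $H(\mu N)=\Oo(\log(\mu N))$ arising because at most $\mu N$ points are ever covered. The optimal benchmark is the single-counted cost of $\vec{c}^*$ itself, $\sum_{\vec{x}^{(i)}\in\Igood}\FI(\vec{u})=|\Igood|\,f_{\Igood}(\vec{u})$, and this is where Lemma~\ref{dcbound} enters: it gives $f_{\Igood}(\vec{u})\le\Mgood\epsilon+\Oo(\Mgood\gamma)$, contributing the extra $\Mgood$ factor that records the penalty of having fit $\vec{u}$ on the duplicated data. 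Dividing the resulting total cost $\Oo(\log(\mu N))\cdot|\Igood|\Mgood(\epsilon+\Oo(\gamma))$ by the $\ge(1-\gamma)\mu N\approx|\Igood|$ points actually covered yields average loss $\Oo(\Mgood\log(\mu N)(\epsilon+\gamma))$ for $\hat{\vec{c}}$, matching the claim and conclusion (1) of Theorem~\ref{2.2}.

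I expect the main obstacle to be the loss bound rather than the counting arguments: one must verify that the modification of ignoring sets covering fewer than $\frac{\gamma\mu}{3\Mgood}N$ points does not spoil the \citet{slavik97}/\citet{zmj17} approximation ratio --- which is exactly what the pigeonhole guarantee above buys, since a near-optimal qualifying set always remains available --- and one must keep the $\Mgood$ factor from Lemma~\ref{dcbound} separate from the $\log(\mu N)$ factor from the greedy analysis rather than conflating them. A secondary but necessary ingredient is the sample-complexity bookkeeping: the stated $N=\Oo\!\big(\frac{1}{\mu\gamma^2}(\frac{\Mgood}{\gamma}+\frac{\sigma^2L^2}{\epsilon})\log\frac{m}{\delta}\big)$ is chosen so that, by a subgaussian tail bound and a union bound over the $m\le n^k$ terms, every term's empirical loss $f_j(\vec{u})$ and empirical size concentrate around their population values; this is what lets the empirical guarantee on $\hat{\vec{c}}$ transfer to the distribution $D$ and justifies charging the deleted small terms to the $\gamma$ budget.
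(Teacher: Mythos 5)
Your proposal follows essentially the same route as the paper: the same reduction to weighted partial set cover under the ratio objective, the same pigeonhole argument showing a qualifying term of $\vec{c}^*$ always remains (so the threshold modification and the $3\Mgood/\gamma$ size bound go through), the same use of Lemma~\ref{dcbound} to supply the $\Mgood$ factor in the benchmark cost, and the same $H(\mu N)$ greedy approximation combined with uniform convergence over small formulas. One small correction: the cost $\sum_{t_j\in\vec{c}}\omega_j$ of a collection charges a point once \emph{per term it satisfies}, not once overall---the paper's opening observation is precisely that this over-count only upper-bounds the true conditional loss, which is the direction you need and which your argument otherwise uses correctly.
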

\begin{proof}
Observe that since the loss is non-negative, for any $\vec{w}$ the terms $t_j$ in a formula $\vec{c}$ must satisfy 
\[
\Ex[f(\vec{w})|\vec{c}]\Pr[\vec{c}(\vec{x})=1]\leq \sum_{t_j\in\vec{c}}\Ex[f(\vec{w})|t_j]\Pr[t_j(\vec{x})=1]
\]
as the RHS simply counts the contribution of some points multiple times, depending on how many terms of $\vec{c}$ it satisfies.
The latter quantities are approximated by $\frac{|t_j|}{N}f_j(\vec{w})$.
By taking $N$ sufficiently large, we can ensure that with high probability, $\frac{1}{N}\sum_{t_j\in\vec{c}^*}|t_j|f_j(\vec{u})$ is at most 
$\Mgood\mu\epsilon+\Oo(\Mgood\mu\gamma)$ by Lemma~\ref{dcbound}. 
We note that the cost of any cover $\vec{c}$ is $\sum_{t_j\in\vec{c}}|t_j|f_j(\vec{u})$, which is the (double-counted) loss of $\vec{c}$ (up to rescaling). 

It follows from an analysis by Haussler (\citeyear{haussler88}) that since there are at most $\sim m^{3\Mgood/\gamma}$ formulas consisting of at most $\frac{3\Mgood}{\gamma}$ terms, $\Oo(\frac{\Mgood}{\mu\gamma^3}\log\frac{m}{\delta})$ examples suffice to guarantee that any $\frac{3}{\gamma}\Mgood$-term formula (out of the $m$ possible terms) that empirically satisfies at least $(1-(2/3)\gamma)\mu N$ examples must be satisfied with probability at least $(1-\gamma)\mu$ overall, and conversely, any formula (in particular, $\vec{c}^*$) that is true with probability at least $\mu$ will empirically satisfy at least $(1-\gamma/3)\mu N$ examples. 
Moreover, we can obtain more generally that $\Oo(\frac{\sigma^2L^2}{\mu\epsilon\gamma^2}\log\frac{m}{\delta}))$ examples suffice to guarantee that the loss on all terms is estimated to within a $(1\pm\gamma)$-factor.
The above will simultaneously hold with probability $1-\delta$ for suitable constants.

\citet{zmj17} showed that the greedy algorithm obtains a $3H((1-(2/3)\gamma)\mu N)$-approximation to the
minimum weight set cover under the ratio objective, where $H(\ell)$ denotes the $\ell$th harmonic number, which is $\leq \log(\mu N)+1$.
We have modified the algorithm slightly, to ignore sets that cover fewer than $\frac{\mu\gamma}{3\Mgood}N$ points. Observe that if all $\Mgood$ terms of $\vec{c}^*$ fail this condition, then in total there must be at most $\Mgood\cdot \frac{\mu\gamma}{3\Mgood}N =
\frac{\mu\gamma}{3}N$ points out of the $(1-\gamma/3)\mu N$ points of $\vec{c}^*$ uncovered, so at least
$(1-(2/3)\gamma)\mu N$ points are already covered by $\hat{\vec{c}}$, and thus the algorithm would already terminate.
We can thus still compare the ratio of the set chosen by the greedy algorithm to the term of $\vec{c}^*$ that still picks up sufficiently many additional points, with the smallest ratio.
Thus, the rest of the argument remains the same and the greedy algorithm finds a formula
$\hat{\vec{c}}$ with 
\[
\frac{\sum_{t_j\in\hat{\vec{c}}}|t_j|f_j(\vec{u})}{|\{\vec{x}^{(i)}:\hat{\vec{c}}(\vec{x}^{(i)})=1\}|}\leq 
3H(\mu N)\frac{\sum_{t_j\in\vec{c}^*}|t_j|f_j(\vec{u})}{|\{\vec{x}^{(i)}:\vec{c}^*(\vec{x}^{(i)})=1\}|}\leq
\Oo( \Mgood (\epsilon+\gamma) H(\mu N)).
\]
Since every iteration covers at least $\frac{\mu\gamma}{3\Mgood}N$ additional points, $\hat{\vec{c}}$ has at most $\frac{3}{\gamma}\Mgood$ terms. Thus the above bounds for small formulas ensure that $\hat{\vec{c}}$ indeed has $\Pr[\hat{\vec{c}}(\vec{x})=1]\geq (1-\gamma)\mu$ and, by our first observation, the loss of $\hat{\vec{c}}$ is indeed at most $\Oo(\Mgood (\epsilon+\gamma) H(\mu N))$ as claimed.
\end{proof}

\subsection{Generalization Bound}
Theorem \ref{llr} guarantees that Algorithm~\ref{algo4} finds a parameter vector approximating the optimal empirical loss, and Lemma \ref{3.4.2} gives us the sample complexity needed to find a good condition for a fixed linear predictor. However, there is still a gap between the empirical loss and true loss. In this section, we will bound the generalization error of linear regression on each possible $k$-DNF, and then take a union bound to achieve the main theorem. In short, the process will blow-up the complexity by $d^3$, where $d$ is the dimension of the feature space.

We will use the Rademacher generalization bound for linear predictors. For a set of data,  Lemma \ref{rcgen} bounds the gap between the expected loss $L_p(\cdot)$ and the empirical loss $\hat{L}_p(\cdot)$:
\begin{lemma}[\cite{bartlett2002rademacher}, \cite{kakade2009complexity}]\label{rcgen}
For $b>0$, $p\geq 1$, random variables $(\vec{Y},Z)$ distributed over
$\{\vec{y}\in\R^d:\|y\|_2\leq b\}\times [b,b]$, and any $\delta\in
(0,1)$, let $L_p(\vec{w})$ denote $\mathbb{E}[|\langle\vec{w},\vec{Y}
\rangle-Z|^p]$, and for an an i.i.d.~sample of size $N$ let
$\hat{L}_p(\vec{w})$ be the empirical loss $\frac{1}{N}\sum_{j=1}^N
|\langle \vec{w},\vec{y}^{(j)}\rangle-z^{(j)}|^p.$
We then have that with probability $1-\delta$ for all
$\vec{w}$ with $\|w\|_2\leq b$,
\[
|L_p(\vec{w})- \hat{L}_p(\vec{w})|\leq\frac{2pb^{p+1}}{\sqrt{N}}+b^p\sqrt{\frac{2\ln(4/\delta)}{N}}.
\]
\end{lemma}
In our case, we only consider squared error; in other words, $p=2$ for us. And notice, in our setting, we are given a bound $B$ on the magnitude of the entries, so $b \leq \sqrt{d}B$. Equivalently, we get
\[ |L_p(\vec{w})- \hat{L}_p(\vec{w})|\leq\frac{4 B^3 d^{\frac{3}{2}}}{\sqrt{N}}+o(B^2d).\]
Therefore, to bound the gap of the expected loss $L_p(\cdot)$ and the empirical loss $\hat{L}_p(\cdot)$, it suffices for our sample complexity $N$ to grow with $B^6d^3$.

The above lemma bounds the gap of a specific (conditional) distribution and set of data. To obtain a bound on  the loss obtained by our algorithm in general, we can simply take a union bound over the conditional distributions given by all $t$-term $k$-DNFs. Since $x$ has $n$ attributes, there are ${n\choose k}$ terms, which is at most $m = n^k$. And, there are ${m\choose t}$ $t$-term $k$-DNFs, so in total we have $\Oo(n^{kt})$ such $k$-DNFs, which means it will suffice to replace $\delta$ with $\frac{\delta}{n^{kt}}$ before taking the union bound. 
Actually, recalling that Lemma~\ref{3.4.2} only guarantees that our algorithm produces $3t/\gamma$-term $k$-DNFs as output,
overall we thus achieve a $\Oo( \Mgood\log(\mu N)(\gamma+\epsilon) )$ approximation as claimed in the main theorem with $N = \Oo(\frac{B^6 d^3\sigma^2L^2t}{\mu\gamma^3}\log(\frac{m}{\delta /m^{t/\gamma}}) ) ) = \Oo(\frac{B^6 d^3\sigma^2L^2t^2}{\mu\gamma^4}\log(m/\delta) ) )$ examples.

\section{Experiments}
We now present several experiments. We present three synthetic data experiments with planted solutions to illustrate our algorithm's capabilities.
For these synthetic data experiments, we use our algorithm to generate a list of candidate parameters and their corresponding DNF. If there is one pair that is close to our planted solution (or another output with even lower error), then we view the task as successful. 
We then present experiments showing that on real data sets, our algorithm can obtain loss that is consistently similar to or significantly smaller than that of the sparse $\ell_2$ regression algorithm \citep{hjlw19}. 
The code is written in Matlab with the optimization toolbox Yalmip~\cite{Lofberg2004}.
Our code can be found at {\it https://github.com/wumming/lud}.

\subsection{Toy Examples}
We first present a couple of toy examples that provide nice visual illustrations of our task.

\begin{figure}[h]
\includegraphics[height=7cm]{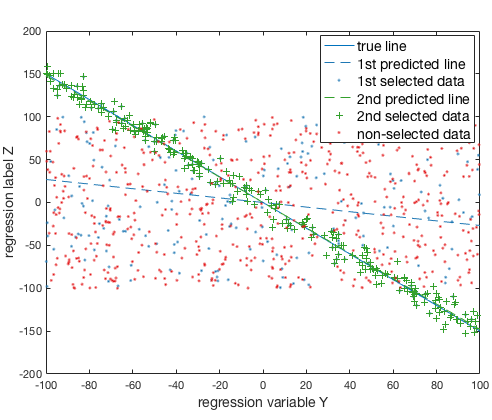}
\centering
\caption{ Line with Uniform}\label{experiment1}
\small{
Conditional linear regression with $d = 1$ in the $y \times z$ plane: For the bad data, $y, z \in [-100,100]$, while for good data, $y \in [-100,100]$ and $z = -1.5 y + noise$, the $dim(\vec{x}) = 6$. The algorithm gives two pairs of candidates. The blue one approximates the uniformly generated bad data and the green one catches the planted good data.
}\raggedright
\end{figure}

\paragraph{Example 1: Line with Uniform}
In this experiment, we first choose a $4$-term $2$-DNF at random. We uniformly generate Boolean attributes serving as $\vec{x}$, where $\mu N$ of them satisfy the chosen DNF (good data) and the other $(1-\mu)N$ don't (bad data). Then the $\vec{y}$ parts are all uniformly generated real attributes.
We also generate a target linear rule $\vec{w}^*$  with dimension equal to that of $\vec{y}$. For the good data, we set their labels $\ZI = \langle \YI,\vec{w}^* \rangle + noise$, where the noise is independently generated from zero-mean Gaussian distribution. For the bad data, $\ZI$ is independently generated from a uniform distribution similar to $\vec{y}$.
We set the dimension of $\vec{y}$ simply to be $1$, as an illustration on the $y \times z$ plane. There are $N = 1000$ points in total and $\mu = 0.25$. For the bad data $y^{(i)},\ZI \in [-100,100]$, while for good data $y^{(i)} \in [-100,100]$ and $\ZI = -1.5 y^{(i)} + noise$. The dimension of $\vec{x}$ is set to $6$. Consequently there are $m = 72$ terms in total. As shown in  Figure \ref{experiment1}, the algorithm finds two pairs of potential linear predictors and DNFs, of which the green one overlaps with the chosen planted rule, and the blue one matches the uniformly generated bad points.

\paragraph{Example 2: Sine function}
In this experiment, we didn't specify good and bad data, but uniformly generated $y^{(i)} \in [-\pi,\pi]$ and $z^{(i)} = \sin(y^{(i)}) + noise$. We attach a constant coordinate to $y$ to represent the intercept of the line. Define $x_1, x_2$ and $x_3$ to be ``$y \geq -\pi/2$'', ``$ y \geq 0 $'' and ``$y \geq \pi/2$'' respectively and set $\mu = 0.5$ and $S = 0.01$. That means, the algorithm is asked to find a DNF segment that contains at least $50\%$ of points that can fit a line. The algorithm outputs three pairs as shown on the Figure \ref{experiment2}. The blue cluster $x_1 \wedge \neg x_3$  has the smallest error, which is the points in interval $[-\pi/2, \pi/2]$.

\begin{figure}[h]
\includegraphics[height=7cm]{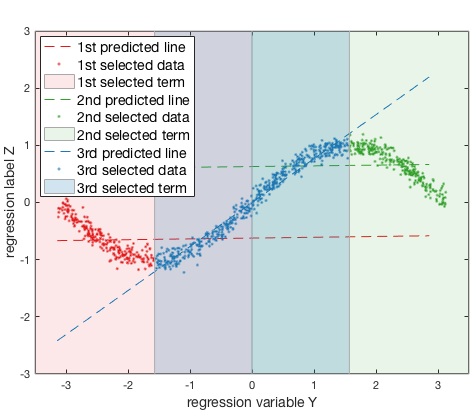}
\centering
\caption{Sine Function}\label{experiment2}
\small{
Conditional linear regression with $d = 1$ in the $y \times z$ plane: for all data, $y \in [-\pi,\pi]$, $z = \sin(y) + noise$,  $dim(\vec{x})=3$. $x_1, x_2, x_3$ are defined as ``$y \leq -\pi/2, 0 , \pi/2$'' respectively. The algorithm gives three candidates: the red one is the region $y < 0$; the green one $y \leq 0$; the blue one $ -\pi/2 \leq y \leq \pi/2 $.
}\raggedright
\end{figure}

\subsection{Example 3: Scale Up}
In this synthetic data experiment, we set  $dim(\vec{x}) = 7, dim(\vec{y}) = 10$, and generated $N = 100000$ points in total with $\mu = 0.5$. We generate $\Wstar$ uniformly from $[-10,10]$ and randomly choose a $4$-term $2$-DNF to use as the condition describing the good data. For the bad data, $\YI \in [-1,1], \ZI \in [-10,10]$. For good data, $\YI \in [-1,1]$ and $\ZI = \langle \YI, \Wstar \rangle + noise$ with variance $100$.
We set $S = 0.1, \gamma = 0.1$, $r = 100$ and $r_{final} = 1$. In several trials, each time our algorithm finds several pairs of regression parameters and DNFs, with one of them containing our planted DNF. We note that there are also other pairs in the output with even lower error, meaning the algorithm also finds other subsets on which the error is smaller than for our planted solution. Note that since the previous algorithms~\citep{juba17,hjlw19} scale exponentially with $d$, such an instance would be infeasible to solve using those algorithms.

\subsection{Example 4: Real World Data}
We test our algorithm on two of the larger benchmark data sets from the LIBSVM repository \cite{libsvm}, Space and Cpusmall, used previously by \cite{hjlw19}, and compared the loss achieved for several target fractions. These standard data sets contains only  real-valued attributes. Following Hainline et al's strategy, we generate Boolean attributes using indicators for membership in the empirical $50\%$-quantile of each real attributes. In other words, for each $i$th point, its $j$th Boolean attribute is defined as
\[x^{(i)}_j = 
\begin{cases}
1 \quad if \ y^{(i)}_j \geq median(y_j)\\
0 \quad if \ y^{(i)}_j < median(y_j)
\end{cases}
\]
We randomly selected $1/3$ of the data as the training data and the other $2/3$ as the testing data. Similar to \cite{hjlw19}, we use the algorithm to find a (list of) DNF on the training data, compute the linear regression rule on the corresponding subset on the training data, and then test the loss of the regression rule on the corresponding subset of the testing data. Partially to cope with instability we observed in the SDP solver, we decreased the number of padded decompositions (in Algorithm~\ref{algo4}) from $112\log(\frac{l(l+1)}{\delta})$ to $12$,
and instead repeated the algorithm 50 times to produce a list of candidates.
We compute the linear fit and training loss for each candidate DNF, and then use the DNF with lowest training loss as our final output to test on the testing data. We repeated the experiments using different values of $\mu$ ($0.2, 0.4, 0.6, 0.8, 1.0$) and set the parameters $S = \mu \cdot 10$, $\gamma = 0.1$, and $r_{final} = S \cdot dim(y)$, where $dim(y)$ is the dimension of the real features. 

\begin{figure}[h]
\includegraphics[width=8cm]{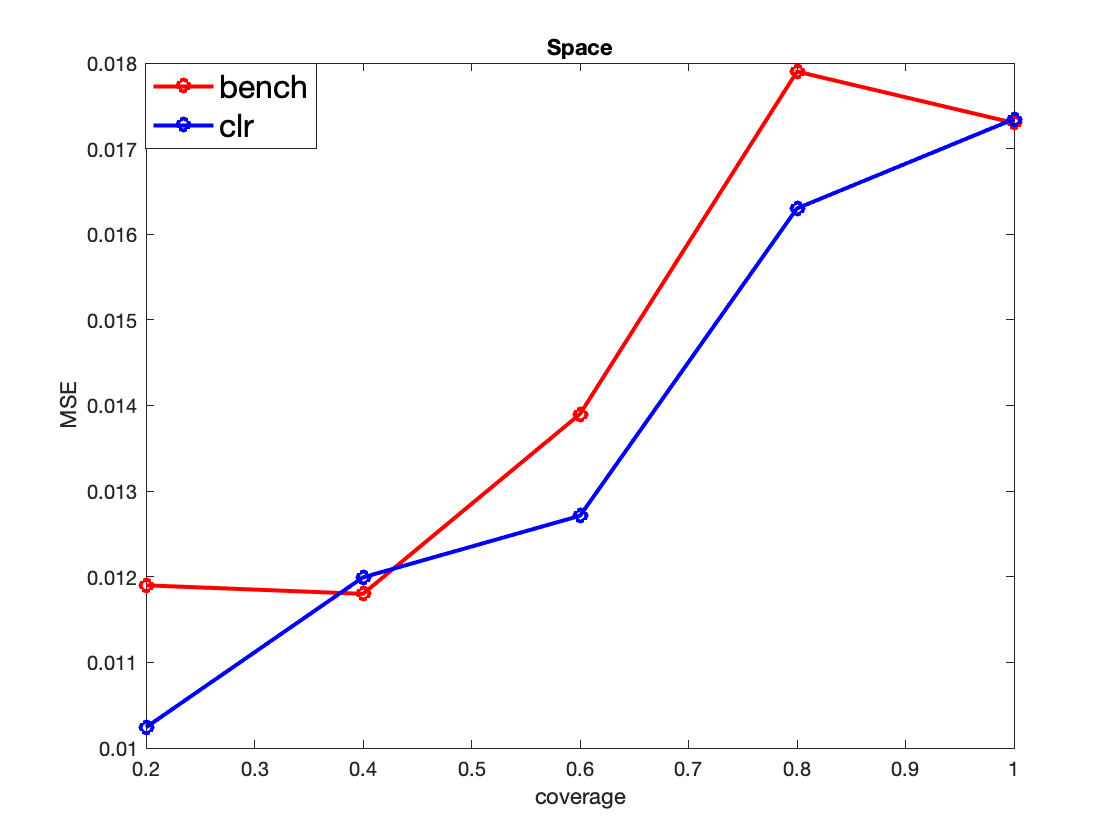}
\includegraphics[width=8cm]{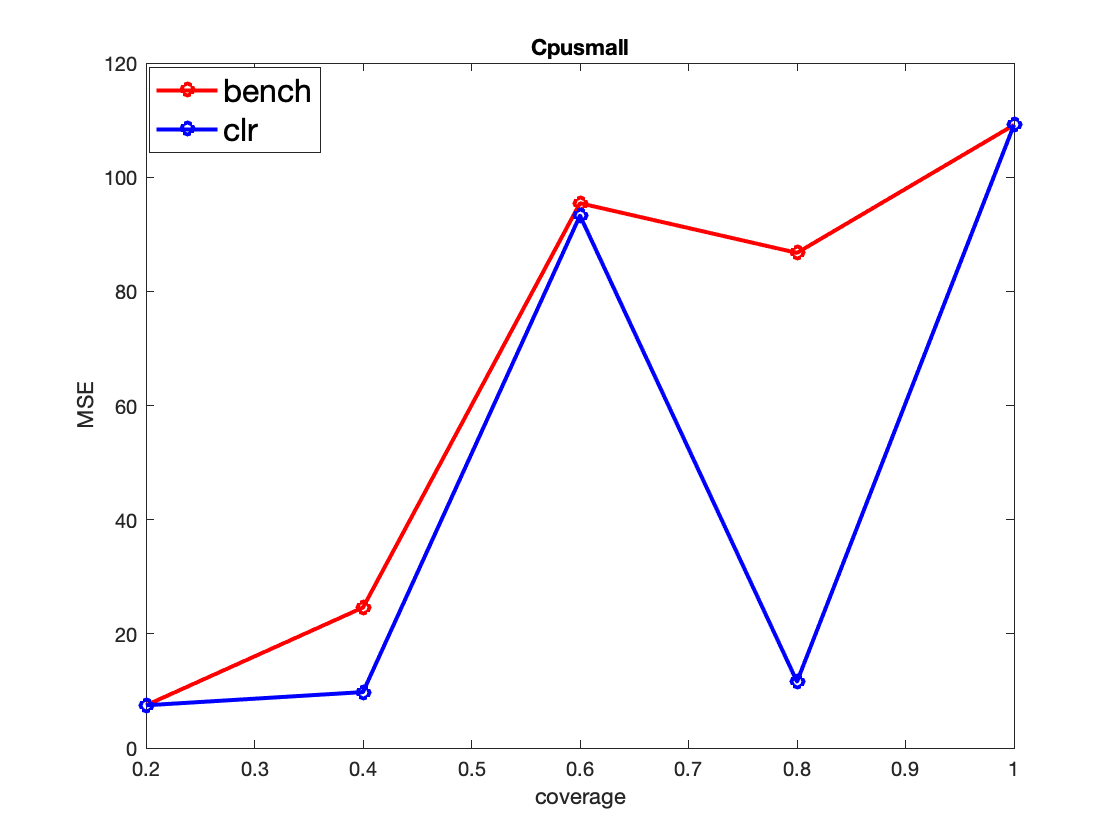}
\centering
\caption{Linear regression on LIBSVM datasets}\label{experiment}
{\bf Left: Space data}:$N = 1025, dim(x) = dim(y) = 6$. 
{\bf Right: Cpusmall data}:  $N = 2703, dim(x) = dim(y) = 12$.
\end{figure}
As shown in Figure \ref{experiment}, our performance is better than benchmark algorithm for most of the cases, and at least comparable in all cases. 
For $\mu = 1$, conditional linear regression is the same as a simple linear regression, so unsurprisingly, our algorithm and Hainline et al's algorithm have the same result.
We note that our running time is much better than Hainline et al's algorithm, even on these small data sets. Their algorithm required a few days of cloud computing time, while our algorithm only required a few hours.

\subsection{Discussion of the Experiments}
These experiments illustrate the variety of tasks our algorithm can solve. They also demonstrate that our algorithm can work in practice: The algorithm is feasible to run, in spite of the fact that it must solve numerous SDP optimization problems. It does find the desired, planted answers on the synthetic data where this is known, and it obtains comparable or superior error to the baseline algorithm of \cite{hjlw19}. We obtain a computational advantage over \cite{lud} because we group the data points into terms and precompute the loss functions for these terms. So, instead of solving a huge SDP problem with variables for each example as in Charikar et al's approach, our SDPs are potentially of reasonable size and can run in a fraction of a second. The radius $r$ of the candidate parameters shrinks exponentially over the outer loop of the list regression algorithm, so even if we set $r_{final}$ to be $0.1$ or so, it terminates within $10$ iterations. These improvements yield an algorithm that is feasible to run in practice, so long as the number of regression features and candidate terms is only moderately large.

One difficulty with the use of our algorithm is that we need to specify the setting of several parameters. One has to guess an estimate of $\mu$ and $S_0$ to run the algorithm, so in practice one may need to try a sequence of candidate settings for these parameters.
The smaller $S$ and $r_{final}$ are set, the more accurate an estimate the algorithm can provide, at the cost of more computation time.
In practice, we cannot set $r_{final}$ to be too small, as otherwise the algorithm tends to think there does not exist a candidate with size $\mu$. We believe that this may be a consequence of noise in the data.

It is interesting that, on the Cpusmall dataset, the loss obtained at $\mu = 0.8$ is much smaller than the loss we obtained at $\mu = 0.6$. Usually we would expect the loss will decrease with $\mu$: since our problem statement allows us to output a solution on which the condition comprises 80\% of the data when we are only seeking 60\%, it is strictly easier to fit a smaller subset. We note that this difference is not caused by the double-counting of points, since the double-counted loss of $\mu = 0.8$ is still smaller than the loss of $\mu = 0.6$. Thus, it seems that the problem is that the algorithm does not obtain such accurate estimates of the regression parameters on $\mu=0.6$. We note that the value of $\mu$ enters Algorithm~\ref{algo1} in several places, and the guarantees we can provide on its output feature a $1/\sqrt{\mu}$ factor blow-up in the error; notice, our Theorem \ref{2.2} requires a stronger condition for smaller $\mu$). Indeed, in the soft regression/outlier detection procedure, we might intuitively expect that as the proportion of ``signal'' decreases, we might get less accurate estimates. Thus, in practice, we would advise running the algorithm several times using $\mu=
(1-\Delta), (1-\Delta)^2,\ldots,
(1-\Delta)^i$ for some number of iterations $i$ ($\Delta\in (0,1)$)  down to a desired minimum, and taking the condition/parameters with the smallest loss.

One may worry that for small $\mu$, there is a chance of overfitting.
However, this is not a problem as long as we have a reasonably large data set and enough runs. We observe that for $\mu = 0.2$, $10$ out of the $50$ runs do have much larger testing error than training error, which implies overfitting. But when we pick the candidate of the minimum training error, overfitting doesn't impact our output. To prevent overfitting, we suggest to use larger data sets and run more trials so that even small subsets have lower variance.

We also caution that our algorithm requires enough data to achieve stability. We also examined some of the smaller LIBSVM data sets (bodyfat and Boston housing) that have only a few hundred examples. For these small data sets, we found that the SDP solver could not solve instances in which the radius of the parameter space decreased below 1. Thus, the final iteration of parameters we obtained provided poor estimates that were not competitive with the algorithm of \cite{hjlw19}. In conclusion, our algorithm is better in terms of both running time and accuracy than the previous algorithm for the larger benchmark data sets ($N\geq 1000$). Since it collapses the data points into loss matrices for terms, and the corresponding pre- and post-processing can be done in roughly linear time, it can scale up to very large data sets. But, it is not appropriate to use for very small data sets.

\section{Directions for future work}
Our work has introduced the first algorithm for conditional linear regression that simultaneously obtains conditions of near-optimal probability, obtains an $\Oo(\Mgood \log \mu N)$-approximation to the optimal loss, and does not feature a running time that depends exponentially on the number of factors used in the linear predictor. Theoretically, the one ``additional cost'' imposed by our approach was a requirement that ${\Sep}_0$, the spectral difference between the covariances of the individual terms and the overall desired subset of the data we are interested in, is sufficiently small compared to the convexity of our loss functions. Although we noted that we can always impose sufficient convexity by adding a regularization term, in some cases this may yield insufficiently accurate estimates of the regression parameters. Thus, the main theoretical question raised here is whether or not this requirement can be relaxed or removed entirely. We stress that the previous approaches did not require such an assumption, which suggests that it may be possible to remove it.

Another family of questions concerns the amount of blow-up we incur over the loss. While our bound seems quite good when the number of terms in the desired DNF $\Mgood$ is small, one can always ask whether or not it is optimal. We do not have techniques for answering this question at the moment. At the same time, we recall that \citet{zmj17} obtain a better approximation factor for the condition search (learning abduction) problem for large formulas: by more carefully controlling the amount of double-counting, Zhang et al.~obtain a better blow-up when $\Mgood >> n^{k/2}$ for $k$-DNFs on $n$ Boolean attributes. Thus, another natural question is if we can likewise quadratically reduce the blow-up of the loss, as achieved by Zhang et al.~for the condition search  problem.

One can also seek to further improve the efficiency of our algorithm in several respects. For one, we have made no attempt to optimize the amount of data required for our guarantees. (Or, for that matter, required by the algorithm.) For two, we believe that there is substantial scope for improving the running time of our algorithm. \citet{dks18} recently made several improvements to the algorithm of \citet{lud}, replacing the semidefinite programs with eigenvector computations and using a simpler clustering strategy than padded decompositions. It may be possible to analogously improve the running time of our algorithm.

Finally, we note that \citet{lud} had intended their algorithm for use in solving a variety of tasks, and thus gave a generic analysis in terms of abstract loss functions. This helped enable our work, although our results depend on key features of the squared-error loss for regression---in particular, we needed that we could show that our loss decreased linearly with the radius. Then, since we could give a bound on the radius that scaled linearly with the previous iteration's radius, we could iteratively improve our estimates by simply running another iteration of the algorithm. There are conceptually similar tasks, such as conditional {\em linear classification}, where we seek to make Boolean (as opposed to real-valued) predictions, that we might hope to be able to solve using an algorithm of this form, but our approach would require that linear scaling condition holds for the classification loss function, which is not clear. It would be interesting to see whether and how our approach could be adapted to solve such problems.

\section*{Acknowledgements}
We thank Jacob Steinhardt for sharing his code and many helpful discussions. We also thank Ilias Diakonikolas for a helpful discussion. We finally thank our anonymous reviewers for their comments on related works.

\bibliographystyle{icml2019}
\bibliography{robust}

\end{document}